\theoremstyle{plain}
\newtheorem{theorem}{Theorem}[section]
\newtheorem{proposition}[theorem]{Proposition}
\newtheorem{lemma}[theorem]{Lemma}
\theoremstyle{definition}
\newtheorem{definition}[theorem]{Definition}
\theoremstyle{remark}
\newtheorem{remark}[theorem]{Remark}
\icmltitlerunning{GFlowNet Training by Policy Gradients}
\begin{document}

\twocolumn[
\icmltitle{GFlowNet Training by Policy Gradients}




\begin{icmlauthorlist}
\icmlauthor{Puhua Niu}{yyy}
\icmlauthor{Shili Wu}{yyy}
\icmlauthor{Mingzhou Fan}{yyy}
\icmlauthor{Xiaoning Qian}{yyy,3,zzz}
\end{icmlauthorlist}

\icmlaffiliation{yyy}{Department of Electrical and Computer Engineering, Texas A\&M University, College Station, TX, USA}
\icmlaffiliation{3}{Department of Computer Science and Engineering, Texas A\&M University, College Station, TX, USA}
\icmlaffiliation{zzz}{Computational Science Intiative, Brookhaven National Laboratory, Upton, NY, USA}
\icmlcorrespondingauthor{Xiaoning Qian}{xqian@tamu.edu}

\icmlkeywords{Machine Learning, ICML}

\vskip 0.3in
]



\printAffiliationsAndNotice{}  

\begin{abstract}
Generative Flow Networks (GFlowNets) have been shown effective to generate combinatorial objects with desired properties. We here propose a new GFlowNet training framework, with policy-dependent rewards, that bridges keeping flow balance of GFlowNets to optimizing the expected accumulated reward in traditional Reinforcement-Learning~(RL). This enables the derivation of new policy-based GFlowNet training methods, in contrast to existing ones resembling value-based RL. It is known that the design of backward policies in GFlowNet training affects efficiency. We further develop a coupled training strategy that jointly solves GFlowNet forward policy training and backward policy design. Performance analysis is provided with a theoretical guarantee of our policy-based GFlowNet training. Experiments on both simulated and real-world datasets verify that our policy-based strategies provide advanced RL perspectives for robust gradient estimation to improve GFlowNet performance. Our code is available at: \href{https://github.com/niupuhua1234/GFN-PG}{github.com/niupuhua1234/GFN-PG}.
\end{abstract}

\section{Introduction}
Generative Flow Networks (GFlowNets) are a family of generative models on the space of combinatorial objects $\mathcal{X}$, e.g. graphs composed by organizing nodes and edges in a particular manner, or strings composed of characters in a particular ordering. GFlowNets aim to solve a challenging task, sampling $x\in \mathcal{X}$ with a probability proportional to some non-negative reward function $R(x)$ that defines an unnormalized distribution, where $|\mathcal{X}|$ can be enormous and the distribution modes are highly isolated by its combinatorial nature. GFlowNets~\citep{bengio2021flow,bengio2023gflownet} decompose the process of generating or sampling $x\in \mathcal{X}$ by generating incremental trajectories that start from a null state, pass through intermediate states, 
and end at $x$ as the desired terminating state. These trajectory instances are interpreted as the paths along a Directed Acyclic Graph~(DAG). Probability measures of trajectories are viewed as the amount of `water' flows along the DAG, with $R(x)$ being the total flow of trajectories that end at $x$, so that following the forward generating policy defined by the measure, sampled trajectories will end at $x$ with the probability proportional to $R(x)$. 

GFlowNets bear a similar form of reinforcement learning~(RL) in that they both operate over Markovian Decision Processes~(MDP) with a reward function $R(x)$, where nodes, edges, and node transition distributions defined by Markovian flows are considered as states, actions, and stochastic policies in MDPs. They, however, differ in the following aspects: the goal of RL problems is to learn optimal policies that maximize the expected cumulative trajectory reward by $R$. For \textbf{value-based} RL methods, the key to achieve this is by reducing the Temporal Difference~(TD) error of Bellman equations for the estimated state value function $V$ and state-action value function $Q$~\citep{sutton2018reinforcement,mnih2013playing}. GFlowNets amortize the sampling problem into finding some Markovian flow that assigns the proper probability flow to edges (actions) so that the total flow of trajectories ending at $x$ is $R(x)$. When studying these in the lens of RL, the existing GFlowNet training strategies are also value-based in that they achieve the goal by keeping the balance flow equation over states of the DAG, whose difference can be measured in trajectory-wise and edge-wise ways~\citep{bengio2021flow,bengio2023gflownet,malkin2022trajectory,madan2023learning}. 

Due to the similarity of GFlowNet training and RL, 
investigating the relationships between them can not only deepen understanding of GFlowNets but also help derive better training methods from RL. 
In this work, we propose policy-dependent rewards for GFlowNet training. This bridges GFlowNets to RL in that keeping the flow balance over DAGs can be reformulated as optimizing the expected accumulated rewards in RL problems. We then derive \textbf{policy-based} training strategies, which optimize the accumulated reward by its gradients with respect to~(w.r.t.) the forward policy directly~\citep{sutton1999policy,sutton2018reinforcement}. 

In terms of RL, we acknowledge that the existing GFlowNet training methods can be considered value-based and have the advantage of allowing off-policy training over policy-based methods~\citep{malkin2022gflownets}. Value-based methods, however, face the difficulty in designing a powerful sampler that can balance the exploration and exploitation trade-off, especially when the combinatorial space is enormous with well-isolated modes. Besides, employing typical annealing or random-mixing solutions may lead to the learned policy trapped in local optima. Finally, designing strategies for powerful samplers vary according to the structure and setting of modeling environments. Policy-based methods, especially the on-policy ones, transform the design of a powerful sampler into robust estimation of policy gradients, which can be achieved by variance reduction techniques~\citep{Schulmanetal_ICLR2016} and improvement of gradient descent directions, such as natural policy gradients~\citep{kakade2001natural} and mirror policy descent~\citep{zhan2023policy}. Conservation policy updates such as Trust-Region Policy Optimization (TRPO)~\citep{schulman2015trust,achiam2017constrained} and its first-order approximation, Proximal Policy Optimization (PPO)~\citep{schulman2017proximal}, have also been developed, for example as the backbone of ChatGPT~\citep{ouyang2022training}. Moreover, policy-based methods can be made off-policy, for example, by importance sampling~\citep{degris2012off}. Our work provides alternative ways to improve GFlowNet performance via policy-based training. Our contributions can be summarized as follows: 
\begin{itemize}\vspace{-1mm}
    \item We reformulate the GFlowNet training as RL over a special MDP where the reward is policy-dependent and the underlying Markov chain is \emph{absorbing}. We further derive policy gradients for this special MDP and propose policy-based training strategies for GFlowNets, inspired by policy gradient and TRPO methods over discounted MDPs with stationary rewards and \emph{ergodic} Markov chains.\vspace{-1mm}
    \item  We further formulate the design of backward policies in GFlowNets as an RL problem and propose a coupled training strategy. While finding a desired forward policy is the goal of GFlowNet training, well-designed backward policies, as components of the training objectives, are expected to improve training efficiency~\citep{shen2023towards}.\vspace{-1mm}
    \item  We provide performance analyzes for theoretical guaranties of our method for GFlowNet training. Our theoretical results are also accompanied by experiments in three application domains: hyper-grid modeling, biological and molecular sequence design, and Bayesian Network~(BN) structure learning. The obtained experimental results serve as empirical evidence for the validity of our work and also help empirically understand the relationship between GFlowNet training and RL.
\end{itemize}
\section{Preliminaries}
For notation compactness, we restrict DAGs of GFlowNets to be \emph{graded}\footnote{Any DAG can be equivalently converted to be graded by adding dummy non-terminating states. Please refer to Appendix A of~\citet{malkin2022gflownets} for more details.}.
In a DAG, $\mathcal{G}:=(\mathcal{S}, \mathcal{A})$, modeling a MDP of GFlowNets: $s\in \mathcal{S}$ denotes a state, $a\in \mathcal{A}$ denotes a directed edge/action $(s{\rightarrow}s')$, and $\mathcal{A} \subseteq \mathcal{S} \times \mathcal{S}$. 
Assuming that there is a topological ordering $\mathcal{S}_0,\ldots,\mathcal{S}_T$ for $T+1$ disjoint subsets of $\mathcal{S}$, then $\mathcal{S}=\bigcup_{t=0}^{T}\mathcal{S}_t$ 
and an element of $\mathcal{S}_t$ is denoted as $s_t$. We use $\{\prec,\succ,\preceq,\succeq\}$ to define the partial orders 
between states; for example, $\forall t<t': s_t \prec s_{t^\prime}$. 
Furthermore, being \emph{acyclic} means $\forall (s{\rightarrow}s')\in \mathcal{A}$: $s\prec s'$. Being \emph{graded} means $\mathcal{A}$ can be decomposed into $\bigcup_{t=0}^{T-1}\mathcal{A}_t$ where $ \mathcal{A}_t\bigcap \mathcal{A}_{t^\prime\neq t}=\emptyset$ and $a_t\in \mathcal{A}_t$ represents an edge $(s_t{\rightarrow}s_{t+1})$ connecting $\mathcal{S}_t$ and $\mathcal{S}_{t+1}$.  For any $s \in \mathcal{S}$, we denote its parent set by $Pa_{\mathcal{G}}(s) = \{ s' |( s'{\rightarrow}s) \in \mathcal{A} \}$ and its child set $Ch_{\mathcal{G}}(s) = \{ s' | (s{\rightarrow}s') \in \mathcal{A}\}$. Correspondingly, We denote the edge sets that start and end at $s$ as $\mathcal{A}(s)=\{(s{\rightarrow}s')|s'\in Ch_\mathcal{G}(s)\}$ and $\dot{\mathcal{A}}(s)=\{(s'{\rightarrow}s)|s'\in Pa_\mathcal{G}(s)\}$ respectively. The complete trajectory set {is defined as} $\mathcal{T}=\{\tau=(s_0\rightarrow \dots \rightarrow s_T)|\forall (s{\rightarrow}s') \in \tau: (s{\rightarrow}s')\in \mathcal{A} \}$. We use $\tau_{\succeq s}$ to denote the sub-trajectory that starts at $s$, and $\tau_{\geq t}$ the sub-trajectory that starts at $s_t$.
For the DAG $\mathcal{G}$ in GFlowNets, we have two special states: the initial state $s^0$ with  $Pa(s^0) = \emptyset$ and $S_0=\{s^0\}$, and the final state $s^f$ with $Ch(s^f) = \emptyset$ and $S_T=\{s^f\}$. 
Furthermore, the terminal state set, $S_{T-1}$  covers the object set $\mathcal{X}$ with a reward function $R: \mathcal{X} \rightarrow \mathbb{R}^{+}$.
\subsection{GFlowNets} 
GFlowNets aim at efficient sampling from $P^\ast(x):=\frac{R(x)}{Z^\ast}$,
where $Z^*=\sum_{x \in \mathcal{X}}R(x)$ 
 and directly computing $Z^*$ is often challenging with typically large $|\mathcal{X}|$. To achieve this, GFlowNets define a measure $F(\tau): \mathcal{T}\rightarrow \mathbb{R}^+$, termed as `flow'~\citep{bengio2023gflownet}, so that for any event $E$, $F(E)=\sum_{\tau\in E} F(\tau)$ and the total flow $Z=F(s^0)=F(s^f)$. For any event $E$ and $E'$, $P(E):=F(E)/Z$ and $P(E|E'):=\frac{F(E\cap E')}{F(E')}$. Furthermore, $F$ is restricted to be Markovian, which means $\forall \tau \in \mathcal{T}$:
\begin{equation}
\begin{split}
&P(\tau)=\prod_{t=1}^{T} P(s_{t-1}{\rightarrow}s_{t}|s_{t-1})=\prod_{t=1}^{T}\frac{F(s_{t-1}{\rightarrow}s_{t})}{F(s_{t-1})},     
\end{split}
\end{equation}where $F(s{\rightarrow}s')= \sum_{\tau \in \{\tau|(s{\rightarrow}s') \in \tau\}}F(\tau)$, $ F(s) =\sum_{\tau \in \{\tau|s \in \tau\}}F(\tau)$ and $P_F(s_{t}| s_{t-1}):=P(s_{t-1}{\rightarrow}s_{t}|s_{t-1})$. Similarly, $P_B(s_{t-1}|s_t):=P(s_{t-1}\rightarrow s_t|s_t)=\frac{F(s_{t-1}{\rightarrow}s_t)}{F(s_t)}$. A desired generative flow $F$ is set to have the terminal transition probability $P^{\mathsf{T}}(x):=P(x{\rightarrow}s_f)$ equal to $P^*(x)$. As shown 
in~\citet{bengio2023gflownet}, 
the necessary and sufficient condition is that $\forall s' \in \mathcal{S}\setminus\{s^0,s^f\}$:
\begin{equation}
\begin{split}
\sum_{s \in Pa(s')} F(s{\rightarrow}s') = \sum_{s'' \in Ch(s)} F(s'{\rightarrow}s''). 
\end{split}\label{flow-match}
\end{equation}
where $F(x{\rightarrow}s_f):=R(x)$ for any $x\in \mathcal{X}$.

\subsection{GFlowNet Training}
Directly estimating the transition flow $F(s{\rightarrow}s')$ via the flow matching objective~\citep{bengio2021flow} can suffer from the explosion of $F$ values, of which the numerical issues may lead to the failure of model training. In practice, the Trajectory Balance (TB) objective has been shown to achieve the state-of-the-art training performance~\citep{malkin2022trajectory}. With the TB objective, the desired flow is estimated by the total flow $Z$ and a pair of forward/backward policies, $P_F(s'|s)$ and $P_B(s|s')$. The TB objective $\mathcal{L}_{TB}(P_{\mathcal{D}})$ of a trajectory data sampler $P_\mathcal{D}$ is defined as:
\begin{align}
  \mathcal{L}_{TB}(P_{\mathcal{D}})&:=\mathbb{E}_{P_{\mathcal{D}}(\tau)}[L_{TB}(\tau)],\nonumber\\ 
  L_{TB}(\tau)&:=\left(\log\frac{\ P_{F}(\tau|s_{0})Z}{P_{B}(\tau|x)R(x)}\right)^2.
\end{align}
In the equation above, $P_{F}(\tau|s_0)=\prod_{t=1}^{T}{P_{F}(s_t|s_{t-1})}$ with $P_F(\tau)=P_{F}(\tau|s_0)$, $P_F^{\top}(x):=P_F(x\rightarrow s^f)$, and $P_F(\tau|x)=P_F(\tau|x\rightarrow s^f)=P_F(\tau)/P_F^{\top}(x)$. Correspondingly, $P_B(\tau|x)=P_B(\tau|x\rightarrow s^f)=\prod_{t=1}^{T-1}{P_{B}(s_{t-1}|s_{t})}$, $P_B^\top(x):=P_B(x\rightarrow s^f)$ equal to $P^\ast(x)$, $P_{B}(\tau)=P_B^\top(x)P_B(\tau|x)$, and $P_B(\tau|s_0)=P_B(\tau)$. Furthermore, we define $\mu(s_0=s^0):=Z/\widehat{Z}$ as the $1$-categorical distribution over $\mathcal{S}_0$ so that $P_{F,\mu}(\tau):=P_F(\tau|s_0)\mu(s_0)=P_F(\tau)$, where $\widehat{Z}$ is the normalizing constant whose value is clamped\footnote{For any parametrized function $f(\cdot;\theta)$ and $\hat{f}(\cdot)$ clamped to $f$, $\hat{f}$ is equal to $f$, but regarded as constant w.r.t. $\theta$ during gradient computation.} to $Z$. We 
define $P_{B,\rho}(\tau):=P_B(\tau|x)\rho(x)$ with an arbitrary distribution $\rho$ over $\mathcal{X}$.

\section{Policy Gradients for GFlowNet Training}
Following~\citet{malkin2022gflownets}, we first extend the relationship between the GFlowNet training methods based on the TB objective and KL divergence. With the extended equivalence, we then introduce our policy-based and coupled training strategies for GFlowNets. Finally, we present theoretical analyses on 
our proposed strategies. 
\subsection{Gradient Equivalence}
When choosing trajectory sampler $P_{\mathcal{D}}(\tau)$ equal to $P_{F}(\tau)$, the gradient equivalence between using the KL divergence and TB objective has been proven~\citep{malkin2022gflownets}. 
However, this forward gradient equivalence does not take the total flow estimator $Z$ into account. Moreover, the backward gradient equivalence requires computing the expectation over $P^\ast(x)$, which is not feasible. In this work, we extend the proof of the gradient equivalence to take all gradients into account and remove the dependency on $P^\ast(x)$, while keeping feasible computation. 

\begin{proposition}
\label{TB-equivalence} 
Given a forward policy $P_F(\cdot|\cdot;\theta)$, a backward policy $P_B(\cdot|\cdot;\phi)$, and a total flow  estimator $Z(\theta)$, the gradient of the TB objective\footnote{Training via TB was intrinsically done in an off-policy setting, so $\nabla \mathcal{L}_{TB}(P_\mathcal{D})=\mathbb{E}_{P_\mathcal{D}(\tau)}[\nabla L_{TB}(\tau)]$ for any choice of  $P_{\mathcal{D}}$.} can be written as: 
\begin{align}
        \frac{\nabla_\theta \mathcal{L}_{TB}(P_{F,\mu};\theta)}{2}&=\nabla_\theta D_{KL}^{\mu(\cdot;{\theta})}(P_{F}(\tau|s_0;\theta),P_{B}(\tau|s_0))\nonumber\\&\quad+\frac{1}{2}\nabla_{\theta}\left(\log Z(\theta)-\log Z^*\right)^2 \nonumber \\
        &=\nabla_\theta D_{KL}^{\mu(\cdot;{\theta})}(P_{F}(\tau|s_0;\theta),\widetilde{P}_{B}(\tau|s_0;\theta)); \nonumber  \\
         \frac{\nabla_{\phi}\mathcal{L}_{TB}(P_{B,\rho};\phi)}{2} &=\nabla_{\phi} D_{KL}^{\rho}(P_{B}(\tau|x;\phi),P_{F}(\tau|x)) \nonumber  \\&
         =\nabla_{\phi} D_{KL}^{\rho}(P_{B}(\tau|x;\phi),\widetilde{P}_{F}(\tau|x)).
\end{align}
\end{proposition}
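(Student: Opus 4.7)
The plan is to reduce each gradient of $\mathcal{L}_{TB}$ to a single expectation of the form $\mathbb{E}[h(\tau)\,\nabla\log(\text{sampler})]$ via REINFORCE, recognize that expectation as the gradient of the relevant KL divergence, and use the clamping device ($\widehat{Z}=Z$ in value while $\nabla\widehat{Z}=0$) to absorb the remaining $Z$-dependent term either as an explicit $\tfrac12(\log Z-\log Z^\ast)^2$ penalty or as a clamped multiplier inside $\widetilde{P}_B,\widetilde{P}_F$.

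For the forward direction, I would first use $P_B^{\top}(x)=P^\ast(x)=R(x)/Z^\ast$, which implies $P_B(\tau|x)R(x)=Z^\ast P_B(\tau|s_0)$, to rewrite $L_{TB}(\tau)=(A(\tau)+B)^2$ with $A(\tau):=\log[P_F(\tau|s_0;\theta)/P_B(\tau|s_0)]$ and $B:=\log Z(\theta)-\log Z^\ast$. The chain rule gives $\nabla_\theta L_{TB}(\tau)=2(A(\tau)+B)\bigl(\nabla_\theta\log P_F(\tau|s_0;\theta)+\nabla_\theta\log Z(\theta)\bigr)$, and the construction $\mu(s_0;\theta)=Z(\theta)/\widehat{Z}$ makes the parenthetical equal to $\nabla_\theta\log P_{F,\mu}(\tau;\theta)$, so
$$\tfrac12\nabla_\theta\mathcal{L}_{TB}(P_{F,\mu};\theta)=\mathbb{E}_{P_{F,\mu}}\bigl[(A(\tau)+B)\,\nabla_\theta\log P_{F,\mu}(\tau;\theta)\bigr].$$
Applying the REINFORCE identity to $D_{KL}^{\mu(\cdot;\theta)}(P_F(\tau|s_0;\theta),P_B(\tau|s_0))=\mathbb{E}_{P_{F,\mu}}[A(\tau)]$ yields $\mathbb{E}_{P_{F,\mu}}[A\,\nabla_\theta\log P_{F,\mu}]+\mathbb{E}_{P_{F,\mu}}[\nabla_\theta\log P_F(\tau|s_0;\theta)]$, and the second term vanishes pointwise in $s_0$ by the standard score-function identity. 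Since $\mathbb{E}_{P_{F,\mu}}[\nabla_\theta\log P_{F,\mu}]=\nabla_\theta\log Z$, the $B$-piece contributes $B\,\nabla_\theta\log Z=\tfrac12\nabla_\theta(\log Z-\log Z^\ast)^2$, giving the first decomposition. For the second equality, I would introduce $\widetilde{P}_B(\tau|s_0;\theta)$ whose value equals $P_B(\tau|s_0)\cdot Z^\ast/Z(\theta)$ but whose $Z^\ast/Z$ factor is clamped to be $\theta$-independent; then $\log[P_F/\widetilde{P}_B]$ carries the value $A(\tau)+B$ with $\theta$-gradient $\nabla_\theta\log P_F$, and the same REINFORCE calculation directly collapses $\nabla_\theta D_{KL}^{\mu(\cdot;\theta)}(P_F,\widetilde{P}_B)$ to $\mathbb{E}_{P_{F,\mu}}[(A+B)\nabla_\theta\log P_{F,\mu}]$.

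The backward direction is analogous but lighter, since no $Z$-trick is needed when differentiating w.r.t.\ $\phi$. Using $\log P_F(\tau|s_0)=\log P_F(\tau|x)+\log P_F^{\top}(x)$, I would write $L_{TB}(\tau)=(\widetilde{A}(\tau)+\widetilde{B}(x))^2$ with $\widetilde{A}(\tau):=\log[P_B(\tau|x;\phi)/P_F(\tau|x)]$ and $\widetilde{B}(x):=\log[R(x)/(P_F^{\top}(x)Z)]$. Only $\widetilde{A}$ depends on $\phi$, so $\nabla_\phi L_{TB}=2(\widetilde{A}+\widetilde{B})\nabla_\phi\log P_B(\tau|x;\phi)$, and the $\widetilde{B}(x)$-term drops out under expectation because $\mathbb{E}_{P_B(\tau|x;\phi)}[\nabla_\phi\log P_B]=0$ for every $x$. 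Another REINFORCE step equates the remaining $\mathbb{E}_{P_{B,\rho}}[\widetilde{A}\,\nabla_\phi\log P_B]$ with $\nabla_\phi D_{KL}^{\rho}(P_B(\tau|x;\phi),P_F(\tau|x))$, and defining $\widetilde{P}_F(\tau|x)$ as $P_F(\tau|x)\cdot P_F^{\top}(x)Z/R(x)$ with the scalar multiplier clamped leaves the $\phi$-gradient untouched since the multiplier is already $\phi$-independent, giving the last equality.

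The hard part will be the clamping bookkeeping: one must consistently treat the hatted and tilded quantities as contributing their numeric values to the objective while contributing exactly zero to $\theta$- or $\phi$-gradients. In particular, the apparent tension that $\log P_{F,\mu}$ equals $\log P_F(\tau|s_0)$ as a value yet $\nabla_\theta\log P_{F,\mu}=\nabla_\theta\log P_F+\nabla_\theta\log Z$ as a gradient is precisely what injects the $Z$-term needed to match $\tfrac12\nabla_\theta\mathcal{L}_{TB}$; once this is handled with discipline, the remainder is routine chain rule together with the zero-mean score identity $\mathbb{E}_p[\nabla\log p]=0$ applied conditionally on $s_0$ (for the forward part) and on $x$ (for the backward part).
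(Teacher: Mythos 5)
Your proof is correct and follows essentially the same route as the paper's: chain rule on the squared log-ratio, the score-function identity $\mathbb{E}_p[\nabla\log p]=0$ applied conditionally on $s_0$ (resp.\ on $x$), and the clamping of $\widehat{Z}$ to route the $\log Z$ contribution through $\mu$; the only organizational difference is that you handle the $\pi_F$- and $Z$-parameters jointly via $\nabla_\theta\log P_{F,\mu}=\nabla_\theta\log P_F+\nabla_\theta\log Z$, whereas the paper splits $\theta$ into $\theta_F$ and $\theta_Z$ and treats the two gradients separately before recombining. One immaterial mismatch: the paper defines $\widetilde{P}_F(\tau|x):=P_F(\tau_{\preceq x})=P_F(\tau|x)\,P_F^\top(x)/P_F(s^f|x)$ rather than your $P_F(\tau|x)\,P_F^\top(x)Z/R(x)$, but since both differ from $P_F(\tau|x)$ only by a $\phi$-independent factor depending on $x$ alone, your argument for the final equality applies verbatim to the paper's definition.
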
 
In the equations above, $\widetilde{P}_F(\tau|x):=P_F(\tau_{\preceq x})=\prod_{t=1}^{T-1}P_F(s_t|s_{t-1})$ and $\widetilde{P}_{B}(\tau|s_0):=P_B(\tau|x)R(x)/Z$, denoting two unnormalized distributions of $P_F(\tau|x)$ and $P_B(\tau|s_0)$. For arbitrary distributions $p$, $q$, and $u$, $D_{KL}^u(p(\cdot|s), q(\cdot|s)) := \mathbb{E}_{u(s)}[D_{KL}(p(\cdot|s), q(\cdot|s))]$. 

The proof is provided in Appendix \ref{proof-TB-equivalence}. As the TB objective is a special case of the Sub-Trajectory Balance (Sub-TB) objective~\citep{madan2023learning}, we also provide 
the proof of the gradient equivalence with respect to the Sub-TB objective in Appendix \ref{Sub-TB-equivalence}, where the initial distribution $\mu$ becomes more flexible.

\subsection{RL Formulation of GFlowNet Training}
Inspired by the equivalence relationship in Proposition~\ref{TB-equivalence}, we propose new reward functions that allow us to formulate GFlowNet training as RL problems with corresponding policy-based training strategies.
\begin{definition}[Policy-dependent Rewards]   
For any action $a=(s{\rightarrow}s')\in \mathcal{A}(s)\,(a\in \dot{\mathcal{A}}(s'))$, we define two reward functions as: 
\begin{align}
R_{F}(s,a;\theta)&:=
    \log\frac{\pi_{F}(s,a;\theta)}{\pi_B(s',a;\theta)},\nonumber\\
    R_{B}(s',a;\phi)&:=
    \log\frac{\pi_{B}(s',a;\phi)}{\pi_F(s,a)},
\end{align}
where $\pi_F(s,a;\theta):=P_F(s'|s;\theta), \pi_B(s',a;\phi):=P_B(s|s';\phi) $,  $\pi_B(s^f,a)$ is equal to $R(x)/Z$ for $a=(x{\rightarrow} s_f)$. For any $a\notin \mathcal{A}(s)$, $ R_F(s,a):=0$. For any $a\notin \dot{\mathcal{A}}(s^\prime)$,  $R_B(s^\prime,a):=0$. 
\end{definition}
Tuples $(\mathcal{S},\mathcal{A},\mathcal{G}, R_F)$ and $(\mathcal{S},\dot{\mathcal{A}},\mathcal{G}, R_B)$ specify two MDPs with policy-dependent rewards. In the MDPs, $\mathcal{G}$ specifies a deterministic transition environment such that $P(s'|s, a)=\mathbb{I}[(s{\rightarrow s^\prime})=a]$ with the indicator function $\mathbb{I}$. $(\mathcal{G}, \pi_F)$ and $(\mathcal{G}, \pi_B)$ correspond to two \emph{absorbing} Markovian chains. Accordingly, the nature of DAGs requires that each state has only one order index, allowing us to define time-invariant expected value functions of states and state-action pairs, which are defined as $V_{F}(s):=\mathbb{E}_{P_F(\tau_{> t}|s_t)}[\sum_{l=t}^{T-1}{R_{F}(s_{l},a_{l})}|s_t=s]$ and $Q_F(s,a):=\mathbb{E}_{P_{F}(\tau_{>t+1}|s_t,a_t)}[\sum_{l=t}^{T-1}{ R_{F}(s_{l},a_{l})}|s_t=s,a_t=a]$. Then we define 
$J_{F}:=\mathbb{E}_{\mu(s_0)}[V_F(s_0)],A_F(s,a):=Q_F(s,a)-V_F(s)$, 
and $ d_{F,\mu}(s):=\frac{1}{T}\sum_{t=0}^{T-1} P_F(s_t=s)$. We likewise denote the functions for the backward policy as $\{V_{B},Q_{B}, J_B,A_B,d_{B,\rho}\}$. More details are provided in Appendix \ref{RL_def}.
By definition, $V_F(s_0)=\mathbb{E}_{P(\tau|s_0)}[\sum_{t=0}^{T-1}R_F(s_t,a_t)]=D_{KL}(P_{F}(\tau|s_0),\widetilde{P}_{B}(\tau|s_0))$
, so $J_F=D_{KL}^{\mu}(P_{F}(\tau|s_0),\widetilde{P}_{B}(\tau|s_0))$. Likewise, we can obtain $ J_B=D_{KL}^{\rho}(P_{B}(\tau|x),\widetilde{P}_{F}(\tau|x))$. Thus, we can conclude that GFlowNet training can be converted into minimizing the expected value function $J_F$ and $J_B$ by Proposition \ref{TB-equivalence}. With the derived $\nabla J_F$ and $\nabla J_B$ provided in Appendix \ref{policy_gradient}, we update $\pi_F$, $\pi_B$, and $\mu$ to minimize $J_F$ and $J_B$ based on the correspondingly computed gradients of the following two objectives:
\begin{align}\mathbb{E}_{\mu(s_0;\theta)}[V_F(s_0)]+T\,&\mathbb{E}_{d_{F,\mu}(s),\pi_F(s,a;\theta)}\left[A_F(s,a) \right],\nonumber\\ (T-1)\,&\mathbb{E}_{d_{B,\rho}(s),\pi_B(s,a;\phi)}\left[A_B(s,a) \right].\label{trpo_obj}
\end{align}

Our policy-based method generalizes the TB-based training with $P_\mathcal{D}$ equal to $P_F$ as follows: TB-based training corresponds to approximating $A(s,a)$ for $(s_t=s,a_t=a)$ empirically by $\widehat{Q}_F(s_t,a_t)-C$, where $\widehat{Q}_F(s_t,a_t)=\sum_{l=t}^{T-1}R_F(s_l,a_l)$, and $C$ is a constant baseline for variance reduction. For comparison, our policy-based method can be considered approximating $A_F(s,a)$ functionally by $\widehat{A}_F^\lambda(s,a)=\sum_{l=t}^{T-1}\lambda^{l-t}(\widehat{Q}_F(s_l,a_l)-\widetilde{V}_F(s_l))$, where $\lambda \in [0,1]$ controls the \textbf{bias-variance trade-off} for gradient estimation~\citep{Schulmanetal_ICLR2016}, $\widehat{Q}_F(s_l,a_l)=R_F(s_l,a_l)+\widetilde{V}_F(s_{l+1})$, and $\widetilde{V}_F(s_l)$ is a functional approximation of exact $V_F(s_l)$, serving as a functional baseline. Specifically, our policy-based method with $\lambda = 1$ can provide unbiased gradient estimation of $\nabla J_F$ as the TB-based method. 
This supports the stability of our policy-based method with the theoretical convergence guarantee by Theorem~\ref{pg-stable} in Section~\ref{perana}. A formal discussion of their connection is provided in Appendix~\ref{relation-TB-RL} and~\ref{updating-rules}. Additionally, we discuss the relationship between our method and traditional Maximum Entropy (MaxEnt) RL in Appendix~\ref{relation-RL-softQ}.

To further exemplify that the proposed rewards bridge policy-based RL techniques to GFlowNet training, we specifically focus on the TRPO method, whose performance is usually more stable than vanilla policy-based methods due to conservative model updating rules~\citep{schulman2015trust,achiam2017constrained}. Likewise, we propose a TRPO-based objective for updating $\pi_F$:
\begin{align}
&\min_{\theta^\prime}
T\,\mathbb{E}_{d_{F,\mu}(s;\theta),\pi_F(s,a;\theta^\prime)}\left[A_F(s,a;\theta) \right]\nonumber\\
&\textrm{s.t. }D_{KL}^{d_{F,\mu}(\cdot;\theta)}\left(\pi_F(s,a;\theta),\pi_F(s,a;\theta^\prime)\right)\leq \zeta_F.
\end{align}
The objective for $\pi_B$ can be defined similarly and is omitted here. This objective is motivated as the approximation of the upper bound in Theorem~\ref{trpo}, which generalizes the original results for static rewards and \emph{ergodic} Markov chains. We defer the discussion of their relationship in Section \ref{perana}.
Although we focus on the above constrained formulation, an unconstrained surrogate objective can be constructed via importance sampling. Specifically, we can write
\begin{align}
\min_{\theta^\prime} 
&\,\mathbb{E}_{d_{F,\mu}(s;\theta),\pi_F(s,a;\theta)}
\Big[\frac{\pi_F(s,a;\theta^\prime)}
{\pi_F(s,a;\theta)} A_F(s,a;\theta)
\Big]\nonumber\\&+\zeta_F D_{KL}^{d_{F,\mu}(\cdot;\theta)}(\pi(s,a;\theta),\pi(s,a;\theta^\prime))
.
\end{align} Furthermore, applying the clipping technique introduced in Section 3 of~\citep{schulman2017proximal} yields a PPO-style objective for GFlowNet training.

Moreover, the model parameter updating rule based on $\nabla J_F$ can be written as $\theta^\prime\gets\theta-\alpha\nabla_{\theta} J_F(\theta)$ or equivalently $\theta^\prime={\mathrm{argmin}}_{\theta^\prime}(\nabla J_F)^T\theta^\prime+\frac{1}{2\alpha}\left\|\theta^\prime-\theta\right\|^2_2$. Here, $\left\|\theta-\theta^\prime\right\|_2$ can be generalized to KL divergence or Bregman divergence corresponding to natural or mirror policy gradients, which we leave for future work.

Details of the model parameter updating rules for proposed methods are provided in Appendix \ref{updating-rules}.

\subsection{RL Formulation of Guided Backward Policy Design}
\begin{figure}[t]
      \centering \includegraphics[width=0.48\textwidth]{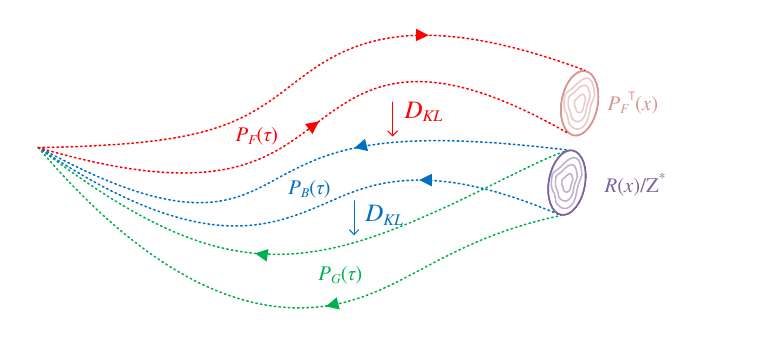}\vspace{-3mm}
      \caption{Dotted lines illustrate the spanning range of trajectories. $P_B$ and $P_G$ share the ground-truth terminating distribution $R(x)/Z^*$. When pushing $P_F$ to match $P_B$ trajectory-wise, $P_F^\top(x)$ will also be pushed to match $R(x)/Z^*$.}
  \label{fig:diagram}\vspace{-3mm}
    \end{figure}
During GFlowNet training, $(P_B, R)$ specifies the amount of desired flow that $(P_F, Z)$ is optimized to match. While $P_B(\cdot|\cdot)$ can be chosen freely in principle~\citep{bengio2023gflownet}, a well-designed $P_B$ that assigns high probabilities over sub-trajectories preceding the terminating state $x$ with a high reward value $R(x)$, will improve training efficiency. Following~\citet{shen2023towards}, we formulate the design problem as minimizing the following objective:
\begin{align}    \mathcal{L}_{TB}^G(P_B^\rho)&:=\mathbb{E}_{P_B^\rho(\tau)}[L_{TB}^G(\tau)],\nonumber\\ L_{TB}^G(\tau;\phi)&:=\left(\log \frac{P_B(\tau|x;\phi)}{P_G(\tau|x)}\right)^2, 
\end{align}
where $P_G(\tau|x)=\prod_{t=1}^{T-1}P_G(s_{t-1}|\tau_{\geq t})$ is called the conditional guided trajectory distribution, which is usually non-Markovian\footnote{By non-Markovian assumption, $P_G(\tau|x)$ can factorize in arbitrary ways conditioning on $x$. Here it is assumed to factorize in the backward direction for notation compactness.}, and $P_G(\tau)=P_G(\tau|x)P^\ast(x)$. As required by the training w.r.t. $P_F$, the objective $\mathcal{L}_{TB}^G$ aims at finding the backward policy whose Markovian flow best matches the non-Markovian flow induced by $P_G$.

\begin{proposition}\label{guided-equivalence}
Given a conditional guided trajectory distribution $P_G(\tau|x)$ and a backward policy $P_B(\cdot|\cdot;\phi)$, the gradients of $\mathcal{L}_{TB}^G$ can be written as:
\begin{equation}
\begin{split}
\frac{\nabla_{\phi}\mathcal{L}_{TB}^G(P_B^\rho;\phi)}{2}&=\nabla_{\phi} D_{KL}^{\rho}(P_{B}(\tau|x;\phi),P_G(\tau|x)).\\
\end{split}
\end{equation}
\end{proposition}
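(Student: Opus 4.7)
The plan is to treat Proposition~\ref{guided-equivalence} as a direct, simpler analogue of the backward-policy equality in Proposition~\ref{TB-equivalence}, where $P_G(\tau|x)$ now plays the role that $P_F(\tau|x)$ played there. The derivation is actually easier because the target $P_G(\tau|x)$ does not depend on the learnable parameters $\phi$, so there is no need to introduce an unnormalized surrogate distribution and no $Z$-dependent term appears.

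First I would invoke the off-policy TB convention stated in the footnote of Proposition~\ref{TB-equivalence}: although the sampler $P_B^\rho(\tau) = P_B(\tau|x;\phi)\rho(x)$ formally depends on $\phi$, the gradient of a TB-style objective is taken with the sampler held fixed, so $\nabla_\phi \mathcal{L}_{TB}^G(P_B^\rho;\phi) = \mathbb{E}_{P_B^\rho(\tau)}\bigl[\nabla_\phi L_{TB}^G(\tau;\phi)\bigr]$. Differentiating the squared log-ratio by the chain rule gives $\nabla_\phi L_{TB}^G(\tau;\phi) = 2\bigl(\log P_B(\tau|x;\phi) - \log P_G(\tau|x)\bigr)\, \nabla_\phi \log P_B(\tau|x;\phi)$, so after taking the expectation one obtains $\tfrac{1}{2}\nabla_\phi \mathcal{L}_{TB}^G(P_B^\rho;\phi) = \mathbb{E}_{\rho(x)\, P_B(\tau|x;\phi)}\bigl[(\log P_B - \log P_G)\,\nabla_\phi \log P_B\bigr]$.

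Next I would compute the right-hand side by the log-derivative (REINFORCE) trick. Writing $D_{KL}^\rho\bigl(P_B(\tau|x;\phi),P_G(\tau|x)\bigr) = \mathbb{E}_\rho \mathbb{E}_{P_B(\tau|x;\phi)}[\log P_B - \log P_G]$ and differentiating under the integral, the chain rule produces two pieces: a pure score term $\mathbb{E}_\rho \mathbb{E}_{P_B}[\nabla_\phi \log P_B]$, which vanishes by the identity $\mathbb{E}_p[\nabla \log p] = 0$, together with a weighted score term $\mathbb{E}_\rho \mathbb{E}_{P_B}[(\log P_B - \log P_G)\,\nabla_\phi \log P_B]$ that matches the expression obtained in the previous step. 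Equating the two closes the proof.

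The only real subtlety, rather than a genuine obstacle, is justifying that the sampler $P_B^\rho$ may be treated as parameter-free on the left-hand side even though it literally contains $\phi$; this is precisely the off-policy TB convention, and it is what makes the otherwise-unmatched pure-score contribution cancel. Once that convention is in place, the argument is a one-step analogue of the backward case of Proposition~\ref{TB-equivalence} with $P_G$ swapped in for $P_F$ and no normalizing constant to track.
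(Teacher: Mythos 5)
Your proposal is correct and follows essentially the same route as the paper: the paper's proof of Proposition~\ref{guided-equivalence} simply points to the backward-gradient computation in the proof of Proposition~\ref{TB-equivalence} (equation~\eqref{gradient_B}) with $\widetilde{P}_F(\tau|x)$ replaced by $P_G(\tau|x)$, which is exactly the expand-the-square-plus-REINFORCE argument you give, made simpler here because no $R(x)$, $Z$, or $P_F(s^f|x)$ terms need to be cancelled. Your observation about the off-policy convention on the sampler $P_B^\rho$ is the same convention the paper uses when it moves $\nabla_\phi$ inside $\mathbb{E}_{P_{B,\rho}(\tau)}[\cdot]$.
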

The proof can be found in Appendix~\ref{proof-guided-equivalence}. 
Based on the proposition, we propose a new reward that allows us to formulate the backward policy design problem as an RL problem. 

\begin{definition}
Given $P_G(\tau|x)$, we define a reward function for any action $a:=(s {\rightarrow} s')\in \dot{\mathcal{A}}(s')$ as:
\begin{equation}
    R_B^G(s',a;\phi):=\log \frac{\pi_B(s',a;\phi)}{\pi_G(s',a)}, 
\end{equation}
where $\pi_G(s',a):=P_G(s|\tau_{\succeq s'})$. For any $a\notin \dot{\mathcal{A}}(s^\prime)$,  $R_B^G(s^\prime,a):=0$. \end{definition}
Accordingly, we denote the associated function set as $\{V_B^G,Q_B^G,J_B^G,A_B^G, d_{B,\rho}^G\}$, which are defined in a similar way as $R_B$ but replacing $P_F$ by $P_G$. By the definition of $J_B^G$ and Proposition \ref{guided-equivalence}, we can conclude that $\nabla_{\phi}J_B^G(\phi)=\frac{1}{2}\nabla_{\phi}\mathcal{L}_{TB}^G(P_B^\rho ;\phi)$ and the design of backward policy can be solved by minimizing $J_B^G$. 
The form of $P_G$ is detailed in Appendix~\ref{experment-detail} for the corresponding experimental tasks.

In principle, following the pipeline by~\citet{shen2023towards}, we need to solve the optimization of $\mathcal{L}_{TB}^G$ to find the desired  $P_B$ at first. Then, freezing $P_B$, we can optimize $\mathcal{L}_{TB}$ to find the desired $P_F$. This gives rise to training inconvenience in practice. To avoid doing two-phase training, the authors mixed $P_B$ and $P_G$ by $\alpha P_B+(1-\alpha)P_G$ within the training objective w.r.t. $P_F$. This operation, however, lacks theoretical guarantees as the mixed distribution is still non-Markovian. By comparison, the RL formulation allows us to optimize $J_F$ and $J_B^G$ jointly with a theoretical performance guarantee, which we defer to the next section.

The workflow of our coupled training strategy is summarized in Algorithm \ref{work-flow-al} and depicted by Fig.~\ref{fig:diagram}. 

\begin{algorithm}[t]
        \caption{GFlowNet Training Workflow}
        \begin{algorithmic}\label{work-flow-al}
          \REQUIRE $P_F(\cdot|\cdot;\theta)$, $Z(\theta)$, $P_B(\cdot|\cdot;\phi)$, $P_G(\cdot|\cdot)$
\FOR{$n=\{1,\ldots,N\}$}
\STATE $\mathcal{D} \gets \{\widehat{\tau}|\widehat{\tau}\sim P_F(\tau;\theta)\}$
\STATE Update $\theta$ w.r.t. $R_F$ and $\mathcal{D}$
\IF{ $\phi \neq \emptyset$}
\STATE $\dot{\mathcal{D}}\gets \{\widehat{\tau}|\forall x\in \mathcal{D}: \widehat{\tau}|x\sim P_B(\tau|x)\}$
\IF {$P_G(\widehat{\tau}|x)\neq P_B(\widehat{\tau}|x)$ } 
  \STATE  Update $\phi$ w.r.t. $R_B^G$ and $\dot{\mathcal{D}}$
\ELSE
\STATE Update $\phi$ w.r.t. $R_B$ and $\dot{\mathcal{D}}$ 
\ENDIF 
\ENDIF
\ENDFOR
    \end{algorithmic}
      \end{algorithm}
      
\subsection{Performance Analysis}\label{perana}
In the previous sections, we formulate two RL problems with respect to $R_F$ and $R_B^G$. Now, we show below that the two problems can be solved jointly.

\begin{theorem}\label{G-F-B} 
Denoting $J_F^G$ as the corresponding function of $R_F^G$ obtained by replacing $\pi_B$ within $R_F$ with $\pi_G$ and choosing $\rho(x)=P_F^\top(x)$, then $J_F^G$, $J_F$ and $J_B^G$ satisfy the following inequality:
\begin{equation}
J_F^G\leq J_F+J_B^G
+(T-1)R_B^{G,\max}\sqrt{\frac{(J_F+\log Z^*-\log Z)}{2}},
\end{equation}
where $R_B^{G,\max}= \max_{s,a}\big|R_B^G(s,a)\big|$.
\end{theorem}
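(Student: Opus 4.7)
The plan is to rewrite each of $J_F^G$, $J_F$, and $J_B^G$ as a KL-type divergence between trajectory distributions, form an algebraic decomposition of the gap $J_F^G - J_F - J_B^G$, and then control what remains by Pinsker's inequality together with the chain rule for KL.

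First, I would invoke the same telescoping manipulation used in the proof of Proposition~\ref{TB-equivalence}: since each policy-dependent reward is the log of a transition-probability ratio, the cumulative rewards collapse into log-ratios of whole trajectory densities,
\begin{align*}
J_F &= \mathbb{E}_{P_F(\tau)}\!\left[\log \frac{P_F(\tau|s_0)}{P_B(\tau|x)R(x)/Z}\right],\\
J_F^G &= \mathbb{E}_{P_F(\tau)}\!\left[\log \frac{P_F(\tau|s_0)}{P_G(\tau|x)R(x)/Z}\right],\\
J_B^G &= \mathbb{E}_{P_F^\top(x)\,P_B(\tau|x)}\!\left[\log \frac{P_B(\tau|x)}{P_G(\tau|x)}\right],
\end{align*}
where the third line uses the choice $\rho(x)=P_F^\top(x)$. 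The same telescoping also yields the auxiliary identity $D_{KL}(P_F(\tau)\,\|\,P_B(\tau))=J_F+\log Z^*-\log Z$, since the only difference between $\widetilde{P}_B(\tau|s_0)$ and the normalized $P_B(\tau)$ is the constant factor $Z^*/Z$.

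Subtracting the first two lines cancels $\log P_F(\tau|s_0)$ and $\log(R(x)/Z)$, leaving $J_F^G - J_F = \mathbb{E}_{P_F(\tau)}[\log P_B(\tau|x)/P_G(\tau|x)]$. Writing this as a two-stage expectation over $x\sim P_F^\top$ and $\tau\sim P_F(\cdot|x)$, the excess becomes a clean change-of-measure expression,
\begin{equation*}
J_F^G - J_F - J_B^G \;=\; \mathbb{E}_{P_F^\top(x)}\!\big[(\mathbb{E}_{P_F(\cdot|x)} - \mathbb{E}_{P_B(\cdot|x)})[\Delta(\cdot,x)]\big],
\end{equation*}
where $\Delta(\tau,x):=\log P_B(\tau|x)/P_G(\tau|x) = \sum_{t=1}^{T-1} R_B^G(s_t,a_{t-1})$ telescopes into at most $T-1$ reward terms, each bounded in magnitude by $R_B^{G,\max}$.

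The rest is standard. Applying $|\mathbb{E}_p f - \mathbb{E}_q f| \leq \mathrm{osc}(f)\,\|p-q\|_{TV}$ with $|\Delta|\leq (T-1)R_B^{G,\max}$ followed by Pinsker's inequality controls the inner gap by $\sqrt{D_{KL}(P_F(\cdot|x)\,\|\,P_B(\cdot|x))}$; Jensen's inequality (concavity of $\sqrt{\cdot}$) pulls the outer expectation inside the square root; and the chain rule $\mathbb{E}_{P_F^\top(x)}[D_{KL}(P_F(\cdot|x)\,\|\,P_B(\cdot|x))] \leq D_{KL}(P_F(\tau)\,\|\,P_B(\tau))$, combined with the auxiliary identity above, turns the right-hand side into a multiple of $\sqrt{J_F+\log Z^*-\log Z}$, delivering the stated bound.

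The main obstacle will be maintaining consistency between the two layers of conditioning (over $x$ and over $\tau|x$) while correctly handling the non-Markovian factorization $\pi_G(s',a)=P_G(s|\tau_{\succeq s'})$: one must verify that summing $R_B^G$ backward along the trajectory genuinely telescopes to $\log P_B(\tau|x)/P_G(\tau|x)$ even though $P_G$ is not Markovian. A secondary subtlety is tracking constants precisely through the Pinsker and Jensen steps so as to land at the coefficient $\sqrt{(J_F+\log Z^*-\log Z)/2}$ rather than a slightly larger one; this likely requires using the centered oscillation form $\mathrm{osc}(\Delta)\|p-q\|_{TV}$ and keeping careful bookkeeping of the factor of two between $\|\cdot\|_1$ and $\|\cdot\|_{TV}$.
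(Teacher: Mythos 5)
Your proposal is correct and follows essentially the same route as the paper: the same decomposition $J_F^G = J_F + J_B^G + \langle P_{F,\mu}-P_{B,\rho},\,R_B^G(\cdot)\rangle$ with $R_B^G(\tau)=\log\frac{P_B(\tau|x)}{P_G(\tau|x)}=\sum_{t=1}^{T-1}R_B^G(s_t,a_t)$, the same H\"older/oscillation bound and Pinsker step, and the same identity $D_{KL}(P_{F,\mu}(\tau),P_B(\tau))=J_F+\log Z^*-\log Z$ obtained by discarding the nonnegative marginal term $D_{KL}(P_F^\top,R/Z^*)$ --- your conditional-on-$x$ version with Jensen and the KL chain rule is just the disintegrated form of the paper's joint-distribution argument (they coincide exactly because $\rho=P_F^\top$ makes the marginal KL vanish). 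Your closing worry about the constant is well founded: with the honest pairing $\|P_{F,\mu}-P_{B,\rho}\|_1\cdot\max_\tau|R_B^G(\tau)|$ and the standard Pinsker inequality $\|p-q\|_1\le\sqrt{2D_{KL}(p,q)}$ one lands on $(T-1)R_B^{G,\max}\sqrt{2(J_F+\log Z^*-\log Z)}$, whereas the paper reaches the smaller coefficient $\sqrt{(\cdot)/2}$ only by invoking Pinsker in the form $\|p-q\|_1\le\sqrt{D_{KL}/2}$, which is the bound for the total variation $\tfrac12\|p-q\|_1$ rather than for the $\ell_1$ norm, so neither your route nor the paper's yields the stated constant without that factor-of-two slip.
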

The proof is given in Appendix \ref{proof-GFB}.
As shown in Proposition~\ref{TB-equivalence}, minimization of $J_F$ will incur the decrease of $D_{KL}^\mu(P_F(\tau|s_0),P_B(\tau|s_0))=J_F+\log Z^*-\log Z$. Thus, by minimizing $J_F$ and $J_B^G$ jointly, the upper bound of $J_F^G$ decreases. 

Moreover, the TRPO-based objective introduced in the previous section is motivated by the following upper bounds.
\begin{theorem}\label{trpo} For two forward policies $(\pi_F,\pi_F^\prime)$ with $D_{KL}^{d_{F,\mu}^\prime}(\pi_F^\prime(s,\cdot),\pi_F(s,\cdot)) <\zeta_F$, and two backward policies $(\pi_B,\pi_B^\prime)$ with $D_{KL}^{d_{B,\rho}^\prime}(\pi_B^\prime(s,\cdot),\pi_B(s,\cdot)) <\zeta_B$, we have:
\begin{equation}
\begin{split}
    &\frac{J_F^\prime-J_F}{T}\leq \mathbb{E}_{d_{F,\mu}(s)\pi_F^\prime(s,a)}[A_F(s,a)]+\zeta_F+\epsilon_F\sqrt{2\zeta_F},   \\
        &\frac{J_B^\prime-J_B}{T-1}\leq \mathbb{E}_{ d_{B,\rho}(s)\pi_B^\prime(s,a)}[A_B(s,a)]+\zeta_B+\epsilon_B\sqrt{2\zeta_B}, 
\end{split}
\end{equation}
where $\epsilon_F=\max_s \big|\mathbb{E}_{\pi_F^\prime(s,a)}[A_F(s,a) ]\big|$ and $
\epsilon_B=\max_s \big|\mathbb{E}_{\pi_B^\prime(s,a)}[A_B(s,a)]\big|$. Similar results also apply to $J_B^G$ and $A_B^G$ for the backward policy $\pi_B$. 
\end{theorem}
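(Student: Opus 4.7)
The plan is to generalize the TRPO performance-difference inequality of \citet{schulman2015trust,achiam2017constrained} to two non-standard features of our MDP: the chain is absorbing with finite horizon $T$ (replacing the effective horizon $\tfrac{1}{1-\gamma}$ by $T$), and the reward $R_F(s,a;\theta)$ itself depends on the policy being updated, so both the trajectory distribution \emph{and} the per-step reward shift when $\theta\to\theta'$. The core trick is to decouple these two effects by adding and subtracting $\mathbb{E}_{\tau\sim\pi_F'}[\sum_t R_F(s_t,a_t;\theta)]$ to split
\[
J_F'-J_F \;=\; \mathbb{E}_{\tau\sim\pi_F'}\!\Bigl[\sum_{t}\bigl(R_F(s_t,a_t;\theta')-R_F(s_t,a_t;\theta)\bigr)\Bigr] \;+\; \Bigl(\mathbb{E}_{\tau\sim\pi_F'}\!\Bigl[\sum_{t}R_F(s_t,a_t;\theta)\Bigr]-J_F\Bigr),
\]
with $\pi_B$ held fixed so that only the $\log\pi_F$ part of $R_F$ participates in the first piece.

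The first (reward-change) piece equals $\mathbb{E}_{\tau\sim\pi_F'}[\sum_t \log(\pi_F'(s_t,a_t)/\pi_F(s_t,a_t))]$; taking the inner $a_t\sim\pi_F'(s_t,\cdot)$ expectation turns each summand into a pointwise KL, and pushing the $T$ state marginals into $d_{F,\mu}'$ gives exactly $T\,D_{KL}^{d_{F,\mu}'}(\pi_F',\pi_F)\le T\zeta_F$. For the second piece the reward $R_F(\cdot,\cdot;\theta)$ is now fixed, so the classical performance-difference lemma (telescoping $V_F$ along the trajectory) produces $T\,\mathbb{E}_{d_{F,\mu}',\pi_F'}[A_F(s,a)]$. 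Dividing by $T$ already yields $(J_F'-J_F)/T\le\zeta_F+\mathbb{E}_{d_{F,\mu}',\pi_F'}[A_F]$, and the remaining task is to replace $d_{F,\mu}'$ by $d_{F,\mu}$ in the final expectation.

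For that swap, let $g(s):=\mathbb{E}_{\pi_F'(s,\cdot)}[A_F(s,\cdot)]$ so that $\|g\|_\infty=\epsilon_F$ by definition; the duality estimate gives $|\mathbb{E}_{d_{F,\mu}'}[g]-\mathbb{E}_{d_{F,\mu}}[g]|\le 2\epsilon_F\,D_{TV}(d_{F,\mu}',d_{F,\mu})$, and Pinsker together with the per-state KL budget $\zeta_F$ and the graded/absorbing structure of $\mathcal{G}$ -- each state lives in a unique slab $\mathcal{S}_t$, so per-step total-variation increments contract once time-averaged into $d_{F,\mu}$ -- upgrades this to $2D_{TV}(d_{F,\mu}',d_{F,\mu})\le\sqrt{2\zeta_F}$. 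Assembling the three pieces gives the forward inequality; the backward inequality is a verbatim repetition after swapping $(\pi_F,d_{F,\mu},\mu)\mapsto(\pi_B,d_{B,\rho},\rho)$ and replacing $T$ by $T-1$ (backward trajectories span one fewer edge, starting at a terminating state in $\mathcal{S}_{T-1}$), and the $J_B^G,A_B^G$ statement uses the same template with $\pi_G$ in place of $\pi_F$ inside the reward. The subtle step, and the one I expect to require the most care, is this distribution-shift bound: a naive trajectory-level Pinsker contributes a stray $\sqrt{T}$ factor, so obtaining the horizon-free constant $\sqrt{2\zeta_F}$ hinges on carefully exploiting the graded DAG structure and the time-averaging in the definition of $d_{F,\mu}$.
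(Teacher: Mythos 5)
Your decomposition reproduces the paper's argument almost exactly: splitting $J_F'-J_F$ into a reward-change term (which, because only the $\log\pi_F$ factor of $R_F$ moves while $\pi_B$ is frozen, collapses to $T\,D_{KL}^{d_{F,\mu}'}(\pi_F'(s,\cdot),\pi_F(s,\cdot))\leq T\zeta_F$) plus a fixed-reward performance-difference term that telescopes to $T\,\mathbb{E}_{d_{F,\mu}'(s),\pi_F'(s,a)}[A_F(s,a)]$ is precisely Lemma~\ref{pdl}, and the subsequent swap of $d_{F,\mu}'$ for $d_{F,\mu}$ via H\"older's inequality with $\|\bar A_F\|_\infty=\epsilon_F$, followed by Pinsker and Jensen to convert the state-distribution gap into $\sqrt{2\zeta_F}$, is the body of the proof in Appendix~\ref{proof-trpo}. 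Your handling of the backward and guided cases (replace $T$ by $T-1$, start accumulation from $\mathcal{S}_{T-1}$, substitute $\pi_G$ into the reward) also matches the paper.

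The one place where your write-up is not yet a proof is exactly the step you flag: the horizon-free bound $\|d_{F,\mu}'-d_{F,\mu}\|_1\leq \mathbb{E}_{d_{F,\mu}'(s)}\big[\|\pi_F'(s,\cdot)-\pi_F(s,\cdot)\|_1\big]$. Asserting that ``per-step total-variation increments contract once time-averaged'' is an intuition, not an argument; the standard CPO/TRPO version of this lemma carries an effective-horizon factor of order $\gamma/(1-\gamma)$, and one must show that it genuinely disappears here rather than merely hope so. The paper establishes this as Lemma~\ref{d_dist} by working with the fundamental matrix of the absorbing chain: since the sub-chain $\bar P_F$ on $\mathcal{S}\setminus\{s^f\}$ is nilpotent of index at most $T$ on the graded DAG, $\bar N_F=(I-\bar P_F)^{-1}=\sum_{t=0}^{T-1}\bar P_F^{\,t}$ is a finite sum with $\ell_1$ operator norm at most $T$; the resolvent identity $\bar N_F-\bar N_F'=\bar N_F(\bar P_F-\bar P_F')\bar N_F'$ then lets the $1/T$ normalization in the definition of $d_{F,\mu}$ cancel that factor of $T$, leaving $\|d_{F,\mu}'-d_{F,\mu}\|_1\leq\|(\bar P_F'-\bar P_F)\bar d_{F,\mu}'\|_1\leq\mathbb{E}_{d_{F,\mu}'(s)}[\|\pi_F'(s,\cdot)-\pi_F(s,\cdot)\|_1]$. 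With that lemma supplied (or an equivalent telescoping over the $T$ slabs $\mathcal{S}_t$), your proof is complete and identical in substance to the paper's.
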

The proof is given in Appendix \ref{proof-trpo}. The TRPO-based objective can be derived following a similar logic in~\citet{schulman2015trust} and ~\citet{achiam2017constrained}. Let's denote $M(\pi)=\mathbb{E}_{d_{F,\mu}(s),\pi(s,a)}[A_F(s,a)]+\zeta_F+\epsilon_F\sqrt{2\zeta_F}$ and set $\pi_F^\prime=\mathrm{argmax}_{\pi}M(\pi)$. In the worst case, we choose $\pi_F^\prime=\pi_F$ and $M(\pi_F^\prime)=0$; then it can be expected that there is a conservative solution. That is, $\pi_F^\prime\neq\pi_F$ and $\zeta_F$ is negligibly small, so that $M(\pi_F^\prime)<0$, thereby resulting in $J_F^\prime-J_F<0$. This implies the monotonic performance gain. TRPO method is an approximation to this update and usually provides more stable performance gain than the vanilla policy-based method.

Lastly, we provide a theoretical guarantee that policy-based methods with policy-dependent rewards can asymptotically converge to stationary points, which draws inspiration from the results for static rewards by~\citet{agarwal2019reinforcement}.   
\begin{theorem}\label{pg-stable}
Suppose that: $J_F(\theta)$ is $\beta-$smooth; $\mathbb{E}_{P(\cdot|\theta)}[\widehat{\nabla}_{\theta}J_F(\theta)]=\nabla_{\theta}J_F(\theta)$; the estimation variance, $\mathbb{E}_{P(\cdot|\theta)}\left[\big\|\widehat{\nabla}_{\theta} J_F(\theta) -\nabla_{\theta}J_F(\theta)\big\|^2_2\right]\leq \sigma_F$;  
$|\log Z(\theta)-\log Z^*|\leq \sigma_Z$; we update $\theta$ for $N$ $(>\beta)$ iterations by $\theta_{n+1}\gets\theta_n-\alpha\widehat{\nabla}J_F(\theta_n)$ with $n\in \{0,\ldots,N-1\}$, $\alpha=\sqrt{1/(\beta N)}$ and initial parameter $\theta_0$. Then we have:
\begin{align}
\min_{n\in \{0,\ldots,N-1\}}&\mathbb{E}_{P(\theta_n)}\left[\left\|\nabla_{\theta_n}J_F(\theta_n)\right\|^2_2\right]\nonumber\\ &\leq \frac{\sigma_F+\sigma_Z+\mathbb{E}_{P(\theta_0)}[J_F(\theta_0)]}{(\sqrt{(2N)/\beta}-1)}.
\end{align}
Similar results also apply to $J_B$ and $J_B^G$.
\end{theorem}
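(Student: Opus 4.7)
The plan is to follow the standard nonconvex stochastic gradient descent analysis (e.g.\ Ghadimi–Lan style), adapted to handle the fact that $J_F$ is a KL divergence against an \emph{unnormalized} reference measure and is therefore not automatically nonnegative. First I would apply the $\beta$-smoothness assumption to $J_F$ at iterates $\theta_n$ and $\theta_{n+1}$, yielding the descent lemma
\begin{equation*}
J_F(\theta_{n+1}) \le J_F(\theta_n) - \alpha \bigl\langle \nabla_\theta J_F(\theta_n),\, \widehat{\nabla}_\theta J_F(\theta_n)\bigr\rangle + \frac{\beta\alpha^2}{2}\bigl\|\widehat{\nabla}_\theta J_F(\theta_n)\bigr\|_2^2.
\end{equation*}
Taking expectations and using unbiasedness of $\widehat{\nabla}_\theta J_F$ kills the cross term in favor of $-\alpha\|\nabla J_F(\theta_n)\|_2^2$, and the bias–variance decomposition together with the variance bound $\sigma_F$ converts $\mathbb{E}\|\widehat{\nabla}J_F\|_2^2$ into $\|\nabla J_F\|_2^2+\sigma_F$. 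Collecting terms yields
\begin{equation*}
\mathbb{E}[J_F(\theta_{n+1})] \le \mathbb{E}[J_F(\theta_n)] - \alpha\bigl(1-\tfrac{\beta\alpha}{2}\bigr)\mathbb{E}\|\nabla J_F(\theta_n)\|_2^2 + \frac{\beta\alpha^2\sigma_F}{2}.
\end{equation*}

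The second step is to telescope this inequality from $n=0$ to $n=N-1$ and lower-bound the final iterate $\mathbb{E}[J_F(\theta_N)]$. This is the one non-routine step, because $J_F$ is $D_{KL}(P_F,\widetilde{P}_B)$ where $\widetilde{P}_B$ has total mass $Z^\ast/Z$, so $J_F$ can be negative. Using the identity established via Proposition~\ref{TB-equivalence} and the definition of $\widetilde{P}_B$, I would rewrite
\begin{equation*}
J_F(\theta) = D_{KL}\bigl(P_F(\tau;\theta),\, P_B(\tau)\bigr) + \log Z(\theta) - \log Z^\ast,
\end{equation*}
so that nonnegativity of the Kullback–Leibler divergence combined with the hypothesis $|\log Z(\theta)-\log Z^\ast|\le\sigma_Z$ gives the uniform lower bound $J_F(\theta)\ge-\sigma_Z$. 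Substituting $\mathbb{E}[J_F(\theta_N)]\ge -\sigma_Z$ into the telescoped inequality yields
\begin{equation*}
\alpha\bigl(1-\tfrac{\beta\alpha}{2}\bigr)\sum_{n=0}^{N-1}\mathbb{E}\|\nabla J_F(\theta_n)\|_2^2 \le \mathbb{E}[J_F(\theta_0)] + \sigma_Z + \frac{\beta\alpha^2 N\sigma_F}{2}.
\end{equation*}

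To finish, I would replace the sum by $N\min_n\mathbb{E}\|\nabla J_F(\theta_n)\|_2^2$, divide through, and substitute the prescribed step size of order $1/\sqrt{\beta N}$. A direct computation gives $\beta\alpha^2 N = \Theta(1)$ and $N\alpha(1-\beta\alpha/2) = \Theta(\sqrt{N/\beta})-\Theta(1)$, which after cleaning up constants (the assumption $N>\beta$ is exactly what makes the coefficient of $\min_n\mathbb{E}\|\nabla J_F\|_2^2$ positive) produces the stated bound with denominator $\sqrt{2N/\beta}-1$ and numerator $\sigma_F+\sigma_Z+\mathbb{E}[J_F(\theta_0)]$. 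The extensions to $J_B$ and $J_B^G$ are identical after replacing the KL identity by its backward-policy analogue, noting that $J_B$ and $J_B^G$ admit the same decomposition into a nonnegative KL plus a bounded log-partition gap (in the $J_B^G$ case the reference distribution $P_G$ is normalized, so the $\sigma_Z$ term is in fact unnecessary there).

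The main obstacle is the second step: without some lower bound on $J_F$, the telescoping argument collapses. The crucial observation is that the $\sigma_Z$ assumption is precisely tailored to supply this lower bound via the KL decomposition above, which is why $\sigma_Z$ appears additively in the final numerator rather than as a multiplicative factor. Everything else reduces to standard manipulations, so I would allocate the bulk of my write-up to carefully establishing and citing the decomposition $J_F=D_{KL}(P_F,P_B)+\log(Z/Z^\ast)$ and then tracking constants when plugging in $\alpha$.
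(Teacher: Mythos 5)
Your proposal is correct and follows essentially the same route as the paper's proof in Appendix~\ref{proof-pg-stable}: descent lemma, unbiasedness and the variance bound to control $\mathbb{E}\|\widehat{\nabla}J_F\|_2^2$, telescoping, and then lower-bounding $\mathbb{E}[J_F(\theta_N)]$ by $-\sigma_Z$ via the decomposition $J_F(\theta)+\log Z^*-\log Z(\theta)=D_{KL}^{\mu}(P_F(\tau|s_0;\theta),P_B(\tau|s_0))\ge 0$, which is exactly how the paper injects $\sigma_Z$ additively into the numerator. The only detail worth tightening is the step size: to land on the exact constants $\sigma_F$ and $\sqrt{2N/\beta}-1$ the proof actually uses $\alpha=\sqrt{2/(\beta N)}$ (as in the appendix), not the $\sqrt{1/(\beta N)}$ quoted in the theorem statement.
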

The proof is provided in Appendix~\ref{proof-pg-stable}. The assumption $\mathbb{E}_{P(\cdot|\theta)}[\widehat{\nabla}_{\theta}J_F(\theta)]=\nabla_{\theta}J_F(\theta)$ means gradient estimation is unbiased as explained in Appendix~\ref{updating-rules}.
\subsection{Related Work}
\paragraph{GFlowNet training}
GFlowNets were first proposed by~\citet{bengio2021flow} and trained by a Flow Matching (FM) objective, which aims at minimizing the mismatch of equation~\eqref{flow-match} w.r.t. a parameterized edge flow estimator $F(s\rightarrow s')$ directly. \citet{bengio2023gflownet} reformulated equation~\eqref{flow-match} and proposed a Detailed Balance (DB) objective, where edge flows $F(s\rightarrow s')$ are represented by $F(s)P_F(s'|s)$ or $F(s')P_B(s|s')$. \citet{malkin2022trajectory} claimed that the FM and DB objectives are prone to inefficient credit propagation across long trajectories and showed that the TB objective is the more efficient alternative. \citet{madan2023learning} proposed a Sub-TB objective that unified the TB and DB objectives as special cases. They can be considered as Sub-TB objectives with sub-trajectories, which are complete or of length $1$ respectively. \citet{zimmermann2022variational} proposed KL-based training objectives and~\citet{malkin2022gflownets} first established the equivalence between the KL and TB objectives.~\citet{shen2023towards} analyzed how the TB objective helps to learn the desired flow under the sequence prepend/append MDP setting, and proposed a guided TB objective. Forward-looking GFlowNets~\citep{pan2023better} improved the formulation of the DB objective by a better local credit assignment scheme, which was further generalized by learning energy decomposition GFlowNets~\citep{jang2023learning}. Finally, back-and-forth local search~\citep{kim2023local}, Thompson Sampling (TS)~\citep{rector2023thompson}, and temperature conditioning~\citep{kim2023learning} were proposed for the explicit design of $P_{\mathcal{D}}$.

\paragraph{Hierarchical variational inference}
Hierarchical Variational Inference (HVI)~\citep{vahdat2020nvae,zimmermann2021nested} generalizes amortized VI~\citep{zhang2018advances} to better explore specific statistical dependency structures between observed variables and latent variables by introducing the hierarchy of latent variables. Training HVI models typically involves minimizing the selected divergence measures between the target distribution and the variational distribution parametrized by neural networks~\citep{kingma2014autoencoding,burda2015importance}.
GFlowNets can be considered as a special HVI model, where non-terminating states are latent variables, the hierarchy corresponds to a DAG, and the task of minimizing divergences is achieved by keeping flow balance~\citep{malkin2022gflownets}. Our work provides another view of divergence minimization by interpreting the divergence as the expected accumulated reward.

\paragraph{Policy-based RL}
 Policy-based RL optimizes the expected value function $J$ directly based on policy gradients~\citep{sutton1999policy}. The most relevant policy-based methods are the Actor-Critic method~\citep{sutton2018reinforcement} and Trust Region Policy Optimization (TRPO)~\citep{schulman2015trust} along with its extension -- Constrained Policy Optimization (CPO)~\citep{achiam2017constrained}. 
 Standard formulations of these methods assume that the reward functions are fixed. Moreover, they are formulated for infinite-horizon discounted MDPs, in which the policy-induced transition matrix $P_\pi$, with entries $[P_\pi]_{j,i}=P_\pi(s^j|s^i)$ is invertible, inducing an \emph{ergodic} Markov chain.  By contrast, our method accommodates policy-dependent rewards and operates on the finite-horizon MDP defined by a DAG $\mathcal{G}$. The corresponding transition matrix $P_F$, with entries $[P_F]_{j,i}=P_F(s^j| s^i)$, is nilpotent and hence non-invertible, inducing an \emph{absorbing} Markov chain. Consequently, conventional policy-gradient methods, and particularly the standard theoretical analyzes of TRPO and CPO, do not directly apply to our setting.

 We note that \citet{weber2015reinforced} proposed a VI method based on policy gradient, despite lacking experimental support. Here, the objective can be interpreted as the KL divergence between two forward trajectory distributions. Without the help of $A$, the policy gradient is estimated in a vanilla manner, corresponding to $\widehat{A}^{1}$ and $\widehat{A}^{0}$. Besides,~\citet{rengarajan2022reinforcement} proposed a TRPO method for imitation learning, where the objective is the expected KL divergence between two forward policies, and the underlying Markovian chain is assumed to be \emph{ergodic} as the original method.


\paragraph{MaxEnt RL} \citet{bengio2021flow} has shown that directly applying MaxEnt RL with a fixed reward $R(x,a)$, defined to equal $R(x)$ for the terminal transition $((x,x\rightarrow s_f)$ and zero otherwise, is problematic as it corresponds to modeling $p(x)\propto n(x)R(x)$, where $n(x)$ is the number of trajectories that can pass through $x$. As discussed in Appendix~\ref{relation-RL-softQ}, our policy-based methods, when fixing $\log\pi_B(s',a)$ and $\log Z$ and choosing $\lambda=0$, can be related to soft-Q-learning, a typical MaxEnt RL method.

\paragraph{Imitation learning}
Imitation learning in RL is 
to learn a policy that mimics the expert demonstrations 
with limited expert data, 
by minimizing the empirical gap between the learned policy and expert policy.~\citep{rajaraman2020toward,ho2016generative}. 
For GFlowNet training in this work, 
we reduce the gap between the forward policy and the expert forward policy at the trajectory level, as the expert trajectory distribution is equal to $P_B(\tau)$, implicitly encouraging the learned policy to match the desired expert policy.  

\paragraph{Bi-level optimization}
Our proposed training strategy can also be seen as a Stochastic Bi-level Optimization method for GFlowNet training~\citep{ji2021bilevel,hong2023two,ghadimi2018approximation}. The inner problem is the RL problem w.r.t. $R_B$ or $R_B^G$ for designing backward policies. The outer problem is the RL problem w.r.t. $R_F$ for forward policies. For gradient-based solutions to Bi-level optimization in general, 
the learning rate of inner problems is carefully selected to guarantee the overall convergence, 
which is not required in our methods designed for GFlowNet training.

Additional discussion about policy-based and valued-based methods in the context of RL is provided in Appendix~\ref{policy-value-diff}. 

\section{Experiments}
To compare our policy-based training strategies for GFlowNets with the existing value-based methods, we have conducted three simulated experiments for 
hyper-grid modeling, four real-world experiments for biological and molecular sequence design, one on Bayesian Network structure learning, and ablation study of $\lambda$. We compare the performance of GFlowNets by the following training strategies: (1) DB-U, (2) DB-B, (3) TB-U, (4) TB-B, (5) TB-Sub, (6)  TB-TS, (7) RL-U, (8) RL-B; (9) RL-T and (10) RL-G, where notion `-U' means that $\pi_B$ is a fixed uniform policy;  `-B' means that $\pi_B$ is a parameterized policy; `RL' represent our policy-based method; `-T' represent our TRPO-based method with a uniform $\pi_B$ and `-G' represent our joint training strategy with guided policy;`-Sub' represent the weighted Sub-TB objective with a parameterized  $\pi_B$ in~\citet{madan2023learning}; `-TS' represent the TS objective with a parameterized  $\pi_B$ in~\citet{rector2023thompson}. By default, $P_{\mathcal{D}}$ is $\gamma$-decayed-noisy for valued-based methods.
Total variation $D_{TV}$, Jensen–Shannon divergence $D_{JSD}$, and mode accuracy $Acc$ are used to measure the gap between $P_F^\top(x)$ and $P^\ast(x)$. Detailed descriptions of experimental settings, including metric definitions, guided policy design, hyper-parameters, etc., can be found in Appendix \ref{experment-detail}. 
Our implementation is built upon the \emph{torchgfn} package~\citep{lahlou2023torchgfn}.

\begin{figure*}[t]
  \centering
\begin{minipage}[t]{0.45\linewidth}
  \centering
\includegraphics[width=1.0\textwidth]{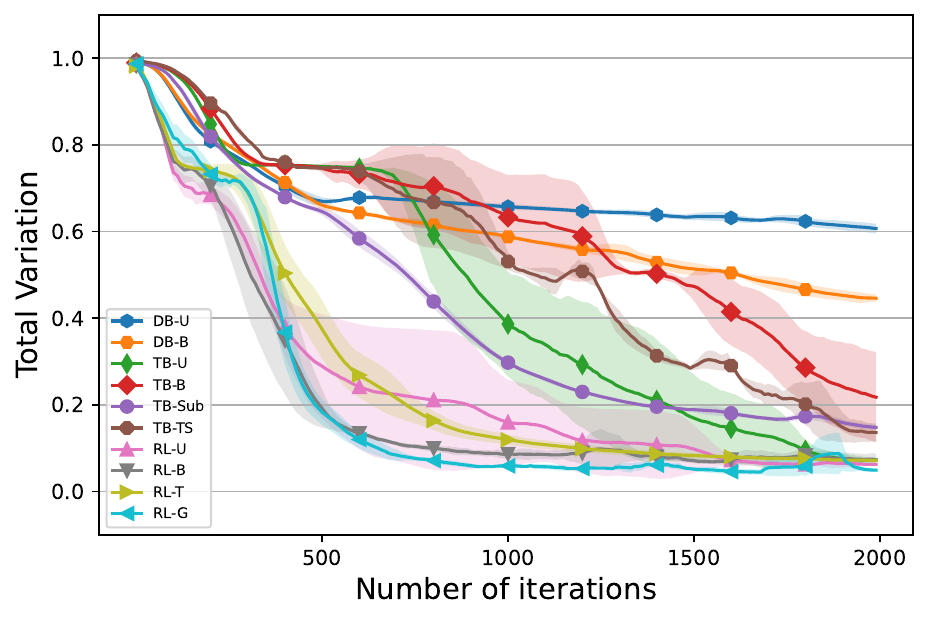}
\end{minipage}
\begin{minipage}[t]{0.45\linewidth}
  \centering
\includegraphics[width=1.0\textwidth]{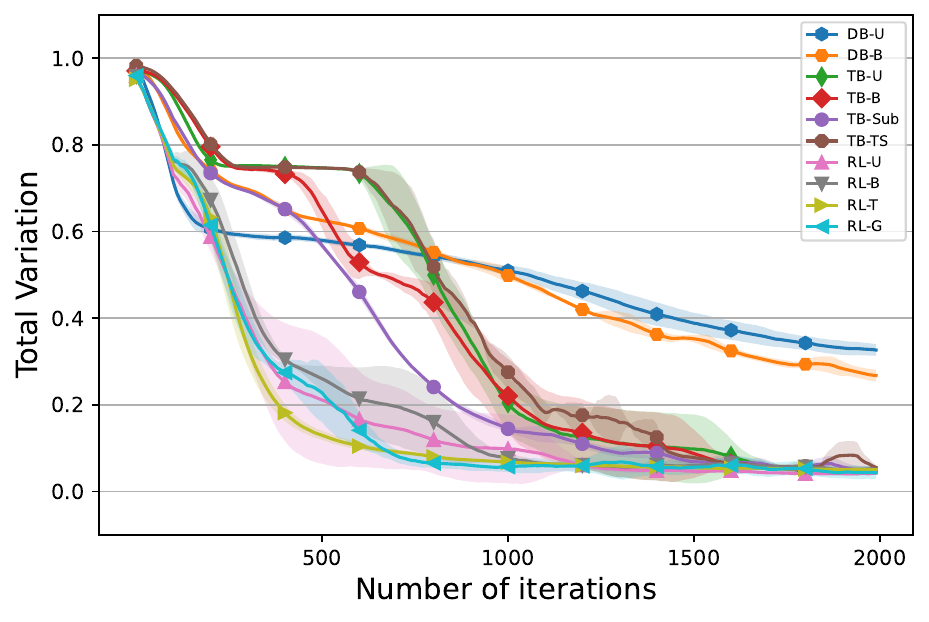}
 \end{minipage}
 \caption{Training curves by $D_{TV}$ between $P_F^\top$ and $P^\ast$ for $256\times256$~(left) and $128\times128$ hyper-grids~(right). The curves are plotted based on means and standard deviations of metric values across five runs and smoothed by a sliding window of length 10. Metric values are computed every 10 iterations. }\label{Hyper_training_256_128}
\end{figure*}

\begin{figure*}[t]
\centering
\begin{minipage}[t]{0.45\linewidth}
  \centering
\includegraphics[width=1.0\textwidth]{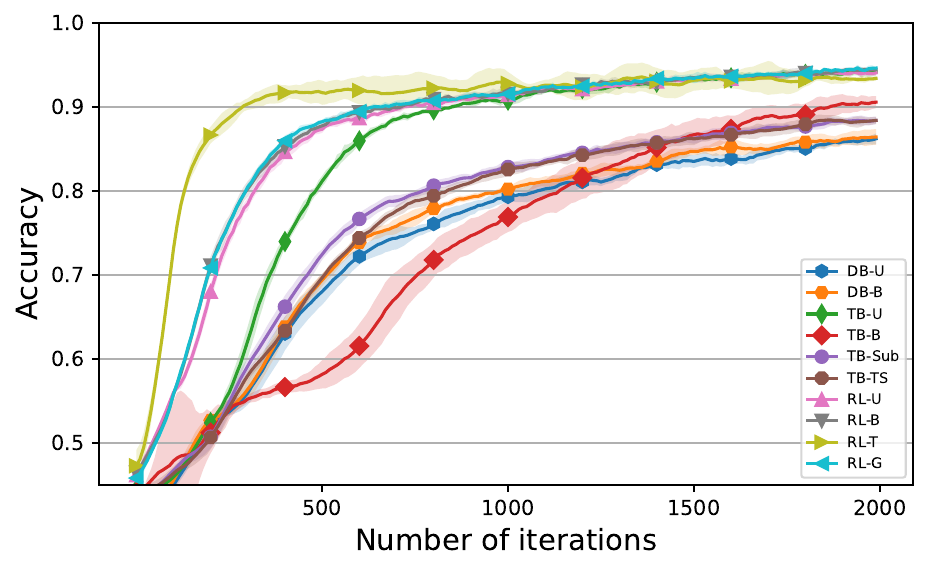}~\label{TF8-fig} 
 \end{minipage}
 \begin{minipage}[t]{0.45\linewidth}
  \centering
\includegraphics[width=1.0\textwidth]{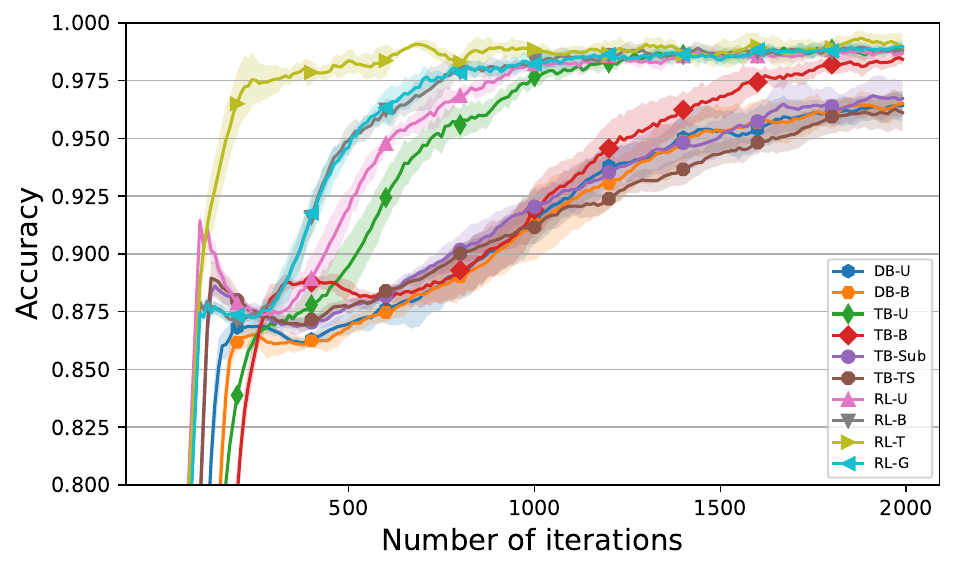}~\label{QM9-fig} 
\end{minipage}
\caption{Training curves by $Acc$ of $P_F^\top$ w.r.t. $P^\ast$ for SIX6~(left) and QM9~(right) datasets. The curves are plotted based on means and standard deviations of metric values across five runs and smoothed by a sliding window of length 10. Metric values are computed every 10 iterations.}\label{qm9-Six6-training}\vspace{-0mm}
\end{figure*}

\subsection{Hyper-grid Modeling}
In this set of experiments, we use the hyper-grid environment following~\citet{malkin2022gflownets}. In terms of GFlowNets, states are the coordinate tuples of an $D$-dimensional hyper-cubic grid with heights equal to $N$. The initial state $s^0$ is $(0,\ldots,0)$. Starting from $s^0$, actions correspond to increasing one of $D$ coordinates by $1$ for the current state or stopping the process at the current state and outputting it as the terminating state $x$. A manually designed reward function $R(\cdot)$ assigns high reward values to some grid points while assigning low values to others. We conduct experiments on $256\times 256 $, $128\times128$, $64\times64\times64$, and $32\times32\times32\times32$ grids. For performance evaluation,  $P_F^\top(x)$ is computed exactly by dynamic programming~\citep{malkin2022gflownets}. 

The training curves by $D_{TV}$ across five runs for $256\times 256$ and $128\times128$ grids are plotted in Fig.~\ref{Hyper_training_256_128}, and Table~\ref{hyper_grid_table_256_128} in Appendix~\ref{all-table} reports the mean and standard deviation of metric values at the last iteration. The graphical illustrations of $P_F^\top(x)$ are shown in Figs.~\ref{256-plots} and~\ref{128-plots} in Appendix~\ref{experiment_illustration}.
\textbf{In the first setting}, it can be observed that our policy-based methods, in terms of convergence rate or converged $D_{TV}$, perform much better than all the considered value-based training methods. This shows that our policy-based training strategies give a more robust gradient estimation. Besides,  RL-G achieves the smallest $D_{TV}$ and converges much faster than all the other competing methods. In RL-G, the guided distribution assigns small values to the probability of terminating at coordinates with low rewards. This prevents the forward policy from falling into the reward `desert' between the isolated modes. Finally, RL-T outperforms RL-U and behaves more stably than RL-U during training. This confirms that with the help of trust regions, the gradient estimator becomes less sensitive to estimation noises.  Here we use a fixed constant $\zeta_F$ for trust region control. It is expected that using a proper scheduler of $\zeta_F$ during training may further improve the performance of RL-T. \textbf{In the second setting}, the converged $D_{TV}$ of policy-based and TB-based methods are similar and significantly better than those of DB-based methods. As expected, policy-based methods converge much faster than all the value-based methods. Thus, the results further support the effectiveness of our policy-based methods. Moreover, RL-G and RL-T achieve the second-best and the best convergence, and RL-T shows better stability than RL-U. This again shows the superiority of coupled and TRPO-based strategies, confirming our theoretical analysis conclusions. 

More results and discussions for $64\times64\times64$ and $32\times32\times32\times32$ grids can be found in Appendix~\ref{Hyper-grid_add}.

\subsection{Biological and Molecular Sequence Design}
In this set of experiments, we use GFlowNets to generate nucleotide strings of length $D$  and molecular graphs composed of $N$ blocks according to given rewards. The initial state $s^0:=(-1,\ldots,-1)$ denotes an empty sequence. The generative process runs as follows: starting from $s^0$, an action is taken to pick one of the empty slots and fill it with one element until the sequence is completed. Then the sequence is returned as the terminating state $x$. We use nucleotide string datasets, SIX6 and PH04, and molecular graph datasets, QM9 and sEH, from~\citet{shen2023towards}. For metric $D_{TV}$ and $D_{JSD}$, $P_F^\top$ is computed exactly by dynamic programming. 

Following~\citet{shen2023towards}, the training curves by the mode accuracy $Acc$ and the number of modes for SIX6 and QM9 datasets are shown in Fig.~\ref{qm9-Six6-training}, and Fig.~\ref{qm9-Six6-training-mode} in Appendix~\ref{seq-add}. For evaluation consistency, we also provide the curves by $D_{TV}$ in Fig.~\ref{qm9-Six6-training-TV} in Appendix~\ref{seq-add}, as well as the metric values at the last iteration summarized in Tables~\ref{Bio-table} and~\ref{Bio-table1} in Appendix~\ref{all-table}. The graphical illustrations of $P_F^\top(x)$ are shown in Figs.~\ref{TF8-plots} and~\ref{QM9-plots} in Appendix~\ref{experiment_illustration}. In both experiments, TB-based and policy-based methods achieve better performance than DB-based methods. While the converged $Acc$ values of TB-based methods and our policy-based methods are similar, the latter converge much faster than TB-based methods with only TB-U achieving a comparable convergence rate. Besides, RL-T has the fastest convergence rates in both experiments. The performances of RL-G are similar to those of RL-B, which has a parameterized $\pi_B$, but slightly better than RL-U with a uniform $\pi_B$. In summary, experimental results for QM9 and SIX6 datasets align with those of hyper-grid tasks, confirming again the advantage offered by our policy-based methods for robust gradient estimation.

More results and discussions for PHO4 and sEH datasets can be found in Appendix~\ref{seq-add}.

\subsection{Ablation Study of $\lambda$}\label{Hyper-ab}

To investigate how the setting of $\lambda$, which controls the bias-variance-trade-off, may help robust estimation of gradients, we conduct experiments in the $256\times256$ grid environment. We compare the performance of RL-U methods with different $\lambda$ values 
and TB-U methods with different $\gamma$ values 
The obtained training curves by $D_{TV}$ across five runs are shown in Fig.~\ref{TB-RL-fig} in Appendix~\ref{Hyper-grid_add}. Among the choices of $\gamma$ for TB-U, the values 0.99 and 0.95 yield the best and the worst performances. In contrast, RL-U under all setups except $\lambda=1$, demonstrates significantly faster convergence than TB-U. It should be pointed out that when $\lambda=1$, $Q_F$ is approximated empirically as TB-based methods, but $V_F$ is approximated functionally. Additionally, the converged $D_{TV}$ in all setups of RL-U are better than those in all setups of TB-U. These results verify that 
by controlling $\lambda$, our policy-based methods can provide more robust gradient estimation than TB-based methods. 

We have also conducted performance comparisons between policy-based and value-based methods for Bayesian network structure learning. The results and discussions can be found in Appendix~\ref{DAG-add}.

\section{Conclusion, Limitations and Future Work}
This work bridges the flow-balance-based GFlowNet training to RL problems. We have developed policy-based training strategies, which provide alternative ways to improve training performance compared to the existing value-based strategies. The experimental results support our claims. 
Our policy-based methods are not limited to the cases where $\mathcal{G}$ must be a DAG as it intrinsically corresponds to minimizing the KL divergence between two distributions, which does not necessitate $\mathcal{G}$ to be a DAG. Future work will focus on extending the proposed methods to general $\mathcal{G}$ with the existence of cycles for more flexible modeling of generative processes of object $x\in \mathcal{X}$. While our policy-based training strategies do not require an explicit design of a data sampler and are shown to achieve better GFlowNet training performance, they may still get trapped into local optima due to the variance of gradient estimation when the state space is very large. Thus,   
future research will also focus on further improving policy-based methods by more robust gradient estimation techniques, under the gradient equivalence relationship.

\section*{Acknowledgements}
This work was supported in part by the U.S. National Science Foundation~(NSF) grants SHF-2215573, and by the U.S. Department of Engergy~(DOE) Office of Science, Advanced Scientific Computing Research (ASCR) under Awards B\&R\# KJ0403010/FWP\#CC132 and FWP\#CC138. Portions of this research were conducted with the advanced computing resources provided by Texas A\&M High Performance Research Computing.

\section*{Impact Statement}
The presented research aims to improve GFlowNet training methods to address the training performance challenge. The applications of our work encompass various societal realms, ranging
from medicine to materials design. 

\bibliography{icml2024}

\begin{thebibliography}{47}
\providecommand{\natexlab}[1]{#1}
\providecommand{\url}[1]{\texttt{#1}}
\expandafter\ifx\csname urlstyle\endcsname\relax
  \providecommand{\doi}[1]{doi: #1}\else
  \providecommand{\doi}{doi: \begingroup \urlstyle{rm}\Url}\fi

\bibitem[Achiam et~al.(2017)Achiam, Held, Tamar, and Abbeel]{achiam2017constrained}
Achiam, J., Held, D., Tamar, A., and Abbeel, P.
\newblock Constrained policy optimization.
\newblock In \emph{International conference on machine learning}, pp.\  22--31. PMLR, 2017.

\bibitem[Agarwal et~al.(2019)Agarwal, Jiang, Kakade, and Sun]{agarwal2019reinforcement}
Agarwal, A., Jiang, N., Kakade, S.~M., and Sun, W.
\newblock Reinforcement learning: Theory and algorithms.
\newblock \emph{CS Dept., UW Seattle, Seattle, WA, USA, Tech. Rep}, 32, 2019.

\bibitem[Beck(2017)]{beck2017first}
Beck, A.
\newblock \emph{First-order methods in optimization}.
\newblock SIAM, 2017.

\bibitem[Bengio et~al.(2021)Bengio, Jain, Korablyov, Precup, and Bengio]{bengio2021flow}
Bengio, E., Jain, M., Korablyov, M., Precup, D., and Bengio, Y.
\newblock Flow network based generative models for non-iterative diverse candidate generation.
\newblock \emph{Advances in Neural Information Processing Systems}, 34:\penalty0 27381--27394, 2021.

\bibitem[Bengio et~al.(2023)Bengio, Lahlou, Deleu, Hu, Tiwari, and Bengio]{bengio2023gflownet}
Bengio, Y., Lahlou, S., Deleu, T., Hu, E.~J., Tiwari, M., and Bengio, E.
\newblock Gflownet foundations.
\newblock \emph{Journal of Machine Learning Research}, 24\penalty0 (210):\penalty0 1--55, 2023.

\bibitem[Burda et~al.(2015)Burda, Grosse, and Salakhutdinov]{burda2015importance}
Burda, Y., Grosse, R., and Salakhutdinov, R.
\newblock Importance weighted autoencoders.
\newblock \emph{arXiv preprint arXiv:1509.00519}, 2015.

\bibitem[Degris et~al.(2012)Degris, White, and Sutton]{degris2012off}
Degris, T., White, M., and Sutton, R.~S.
\newblock Off-policy actor-critic.
\newblock In \emph{Proceedings of the 29th International Coference on International Conference on Machine Learning}, pp.\  179--186, 2012.

\bibitem[Deleu et~al.(2022)Deleu, G{\'o}is, Emezue, Rankawat, Lacoste-Julien, Bauer, and Bengio]{deleu2022bayesian}
Deleu, T., G{\'o}is, A., Emezue, C., Rankawat, M., Lacoste-Julien, S., Bauer, S., and Bengio, Y.
\newblock Bayesian structure learning with generative flow networks.
\newblock In \emph{Uncertainty in Artificial Intelligence}, pp.\  518--528. PMLR, 2022.

\bibitem[Ghadimi \& Wang(2018)Ghadimi and Wang]{ghadimi2018approximation}
Ghadimi, S. and Wang, M.
\newblock Approximation methods for bilevel programming.
\newblock \emph{arXiv preprint arXiv:1802.02246}, 2018.

\bibitem[Golpar~Raboky \& Eftekhari(2019)Golpar~Raboky and Eftekhari]{golpar2019nilpotent}
Golpar~Raboky, E. and Eftekhari, T.
\newblock On nilpotent interval matrices.
\newblock \emph{Journal of Mathematical Modeling}, 7\penalty0 (2):\penalty0 251--261, 2019.

\bibitem[Grinstead \& Snell(2006)Grinstead and Snell]{grinstead2006introduction}
Grinstead, C. and Snell, L.~J.
\newblock \emph{Introduction to probability}.
\newblock 2006.

\bibitem[Haarnoja et~al.(2018)Haarnoja, Zhou, Abbeel, and Levine]{haarnoja2018soft}
Haarnoja, T., Zhou, A., Abbeel, P., and Levine, S.
\newblock Soft actor-critic: Off-policy maximum entropy deep reinforcement learning with a stochastic actor.
\newblock In \emph{International conference on machine learning}, pp.\  1861--1870. PMLR, 2018.

\bibitem[Hestenes et~al.(1952)Hestenes, Stiefel, et~al.]{hestenes1952methods}
Hestenes, M.~R., Stiefel, E., et~al.
\newblock Methods of conjugate gradients for solving linear systems.
\newblock \emph{Journal of research of the National Bureau of Standards}, 49\penalty0 (6):\penalty0 409--436, 1952.

\bibitem[Ho \& Ermon(2016)Ho and Ermon]{ho2016generative}
Ho, J. and Ermon, S.
\newblock Generative adversarial imitation learning.
\newblock \emph{Advances in neural information processing systems}, 29, 2016.

\bibitem[Hong et~al.(2023)Hong, Wai, Wang, and Yang]{hong2023two}
Hong, M., Wai, H.-T., Wang, Z., and Yang, Z.
\newblock A two-timescale stochastic algorithm framework for bilevel optimization: Complexity analysis and application to actor-critic.
\newblock \emph{SIAM Journal on Optimization}, 33\penalty0 (1):\penalty0 147--180, 2023.

\bibitem[Jang et~al.(2023)Jang, Kim, and Ahn]{jang2023learning}
Jang, H., Kim, M., and Ahn, S.
\newblock Learning energy decompositions for partial inference of gflownets.
\newblock In \emph{The Twelfth International Conference on Learning Representations}, 2023.

\bibitem[Ji et~al.(2021)Ji, Yang, and Liang]{ji2021bilevel}
Ji, K., Yang, J., and Liang, Y.
\newblock Bilevel optimization: Convergence analysis and enhanced design.
\newblock In \emph{International conference on machine learning}, pp.\  4882--4892. PMLR, 2021.

\bibitem[Kakade(2001)]{kakade2001natural}
Kakade, S.~M.
\newblock A natural policy gradient.
\newblock \emph{Advances in neural information processing systems}, 14, 2001.

\bibitem[Kim et~al.(2023{\natexlab{a}})Kim, Ko, Zhang, Pan, Yun, Kim, Park, and Bengio]{kim2023learning}
Kim, M., Ko, J., Zhang, D., Pan, L., Yun, T., Kim, W.~C., Park, J., and Bengio, Y.
\newblock Learning to scale logits for temperature-conditional gflownets.
\newblock In \emph{NeurIPS 2023 AI for Science Workshop}, 2023{\natexlab{a}}.

\bibitem[Kim et~al.(2023{\natexlab{b}})Kim, Yun, Bengio, Zhang, Bengio, Ahn, and Park]{kim2023local}
Kim, M., Yun, T., Bengio, E., Zhang, D., Bengio, Y., Ahn, S., and Park, J.
\newblock Local search gflownets.
\newblock In \emph{The Twelfth International Conference on Learning Representations}, 2023{\natexlab{b}}.

\bibitem[Kingma \& Welling(2014)Kingma and Welling]{kingma2014autoencoding}
Kingma, D.~P. and Welling, M.
\newblock Auto-encoding variational bayes.
\newblock In Bengio, Y. and LeCun, Y. (eds.), \emph{ICLR}, 2014.
\newblock URL \url{http://dblp.uni-trier.de/db/conf/iclr/iclr2014.html#KingmaW13}.

\bibitem[Kuipers et~al.(2014)Kuipers, Moffa, and Heckerman]{kuipers2014addendum}
Kuipers, J., Moffa, G., and Heckerman, D.
\newblock Addendum on the scoring of gaussian directed acyclic graphical models.
\newblock 2014.

\bibitem[Lahlou et~al.(2023)Lahlou, Viviano, and Schmidt]{lahlou2023torchgfn}
Lahlou, S., Viviano, J.~D., and Schmidt, V.
\newblock torchgfn: A pytorch gflownet library.
\newblock \emph{arXiv preprint arXiv:2305.14594}, 2023.

\bibitem[Madan et~al.(2023)Madan, Rector-Brooks, Korablyov, Bengio, Jain, Nica, Bosc, Bengio, and Malkin]{madan2023learning}
Madan, K., Rector-Brooks, J., Korablyov, M., Bengio, E., Jain, M., Nica, A.~C., Bosc, T., Bengio, Y., and Malkin, N.
\newblock Learning {GF}low{N}ets from partial episodes for improved convergence and stability.
\newblock In \emph{International Conference on Machine Learning}, pp.\  23467--23483. PMLR, 2023.

\bibitem[Malkin et~al.(2022{\natexlab{a}})Malkin, Jain, Bengio, Sun, and Bengio]{malkin2022trajectory}
Malkin, N., Jain, M., Bengio, E., Sun, C., and Bengio, Y.
\newblock Trajectory balance: Improved credit assignment in gflownets.
\newblock \emph{Advances in Neural Information Processing Systems}, 35:\penalty0 5955--5967, 2022{\natexlab{a}}.

\bibitem[Malkin et~al.(2022{\natexlab{b}})Malkin, Lahlou, Deleu, Ji, Hu, Everett, Zhang, and Bengio]{malkin2022gflownets}
Malkin, N., Lahlou, S., Deleu, T., Ji, X., Hu, E.~J., Everett, K.~E., Zhang, D., and Bengio, Y.
\newblock Gflownets and variational inference.
\newblock In \emph{The Eleventh International Conference on Learning Representations}, 2022{\natexlab{b}}.

\bibitem[Mnih et~al.(2013)Mnih, Kavukcuoglu, Silver, Graves, Antonoglou, Wierstra, and Riedmiller]{mnih2013playing}
Mnih, V., Kavukcuoglu, K., Silver, D., Graves, A., Antonoglou, I., Wierstra, D., and Riedmiller, M.
\newblock Playing {A}tari with deep reinforcement learning.
\newblock \emph{arXiv preprint arXiv:1312.5602}, 2013.

\bibitem[Ouyang et~al.(2022)Ouyang, Wu, Jiang, Almeida, Wainwright, Mishkin, Zhang, Agarwal, Slama, Ray, et~al.]{ouyang2022training}
Ouyang, L., Wu, J., Jiang, X., Almeida, D., Wainwright, C., Mishkin, P., Zhang, C., Agarwal, S., Slama, K., Ray, A., et~al.
\newblock Training language models to follow instructions with human feedback.
\newblock \emph{Advances in Neural Information Processing Systems}, 35:\penalty0 27730--27744, 2022.

\bibitem[Pan et~al.(2023)Pan, Malkin, Zhang, and Bengio]{pan2023better}
Pan, L., Malkin, N., Zhang, D., and Bengio, Y.
\newblock Better training of gflownets with local credit and incomplete trajectories.
\newblock In \emph{International Conference on Machine Learning}, pp.\  26878--26890. PMLR, 2023.

\bibitem[Rajaraman et~al.(2020)Rajaraman, Yang, Jiao, and Ramchandran]{rajaraman2020toward}
Rajaraman, N., Yang, L., Jiao, J., and Ramchandran, K.
\newblock Toward the fundamental limits of imitation learning.
\newblock \emph{Advances in Neural Information Processing Systems}, 33:\penalty0 2914--2924, 2020.

\bibitem[Rector-Brooks et~al.(2023)Rector-Brooks, Madan, Jain, Korablyov, Liu, Chandar, Malkin, and Bengio]{rector2023thompson}
Rector-Brooks, J., Madan, K., Jain, M., Korablyov, M., Liu, C.-H., Chandar, S., Malkin, N., and Bengio, Y.
\newblock Thompson sampling for improved exploration in gflownets.
\newblock In \emph{ICML 2023 Workshop on Structured Probabilistic Inference $\{$$\backslash$\&$\}$ Generative Modeling}, 2023.

\bibitem[Rengarajan et~al.(2022)Rengarajan, Vaidya, Sarvesh, Kalathil, and Shakkottai]{rengarajan2022reinforcement}
Rengarajan, D., Vaidya, G., Sarvesh, A., Kalathil, D., and Shakkottai, S.
\newblock Reinforcement learning with sparse rewards using guidance from offline demonstration.
\newblock \emph{arXiv preprint arXiv:2202.04628}, 2022.

\bibitem[Schulman et~al.(2015)Schulman, Levine, Abbeel, Jordan, and Moritz]{schulman2015trust}
Schulman, J., Levine, S., Abbeel, P., Jordan, M., and Moritz, P.
\newblock Trust region policy optimization.
\newblock In \emph{International conference on machine learning}, pp.\  1889--1897. PMLR, 2015.

\bibitem[Schulman et~al.(2016)Schulman, Moritz, Levine, Jordan, and Abbeel]{Schulmanetal_ICLR2016}
Schulman, J., Moritz, P., Levine, S., Jordan, M., and Abbeel, P.
\newblock High-dimensional continuous control using generalized advantage estimation.
\newblock In \emph{Proceedings of the International Conference on Learning Representations (ICLR)}, 2016.

\bibitem[Schulman et~al.(2017{\natexlab{a}})Schulman, Chen, and Abbeel]{schulman2017equivalence}
Schulman, J., Chen, X., and Abbeel, P.
\newblock Equivalence between policy gradients and soft q-learning.
\newblock \emph{arXiv preprint arXiv:1704.06440}, 2017{\natexlab{a}}.

\bibitem[Schulman et~al.(2017{\natexlab{b}})Schulman, Wolski, Dhariwal, Radford, and Klimov]{schulman2017proximal}
Schulman, J., Wolski, F., Dhariwal, P., Radford, A., and Klimov, O.
\newblock Proximal policy optimization algorithms.
\newblock \emph{arXiv preprint arXiv:1707.06347}, 2017{\natexlab{b}}.

\bibitem[Shen et~al.(2023)Shen, Bengio, Hajiramezanali, Loukas, Cho, and Biancalani]{shen2023towards}
Shen, M.~W., Bengio, E., Hajiramezanali, E., Loukas, A., Cho, K., and Biancalani, T.
\newblock Towards understanding and improving gflownet training.
\newblock In \emph{International Conference on Machine Learning}, pp.\  30956--30975. PMLR, 2023.

\bibitem[Sutton \& Barto(2018)Sutton and Barto]{sutton2018reinforcement}
Sutton, R.~S. and Barto, A.~G.
\newblock \emph{Reinforcement learning: An introduction}.
\newblock MIT press, 2018.

\bibitem[Sutton et~al.(1999)Sutton, McAllester, Singh, and Mansour]{sutton1999policy}
Sutton, R.~S., McAllester, D., Singh, S., and Mansour, Y.
\newblock Policy gradient methods for reinforcement learning with function approximation.
\newblock \emph{Advances in neural information processing systems}, 12, 1999.

\bibitem[Tsitsiklis \& Van~Roy(1996)Tsitsiklis and Van~Roy]{tsitsiklis1996analysis}
Tsitsiklis, J. and Van~Roy, B.
\newblock Analysis of temporal-diffference learning with function approximation.
\newblock \emph{Advances in neural information processing systems}, 9, 1996.

\bibitem[Vahdat \& Kautz(2020)Vahdat and Kautz]{vahdat2020nvae}
Vahdat, A. and Kautz, J.
\newblock Nvae: A deep hierarchical variational autoencoder.
\newblock \emph{Advances in neural information processing systems}, 33:\penalty0 19667--19679, 2020.

\bibitem[Weber et~al.(2015)Weber, Heess, Eslami, Schulman, Wingate, and Silver]{weber2015reinforced}
Weber, T., Heess, N., Eslami, A., Schulman, J., Wingate, D., and Silver, D.
\newblock Reinforced variational inference.
\newblock In \emph{Advances in Neural Information Processing Systems (NIPS) Workshops}, 2015.

\bibitem[Williams(1992)]{williams1992simple}
Williams, R.~J.
\newblock Simple statistical gradient-following algorithms for connectionist reinforcement learning.
\newblock \emph{Machine learning}, 8\penalty0 (3-4):\penalty0 229--256, 1992.

\bibitem[Zhan et~al.(2023)Zhan, Cen, Huang, Chen, Lee, and Chi]{zhan2023policy}
Zhan, W., Cen, S., Huang, B., Chen, Y., Lee, J.~D., and Chi, Y.
\newblock Policy mirror descent for regularized reinforcement learning: A generalized framework with linear convergence.
\newblock \emph{SIAM Journal on Optimization}, 33\penalty0 (2):\penalty0 1061--1091, 2023.

\bibitem[Zhang et~al.(2018)Zhang, B{\"u}tepage, Kjellstr{\"o}m, and Mandt]{zhang2018advances}
Zhang, C., B{\"u}tepage, J., Kjellstr{\"o}m, H., and Mandt, S.
\newblock Advances in variational inference.
\newblock \emph{IEEE transactions on pattern analysis and machine intelligence}, 41\penalty0 (8):\penalty0 2008--2026, 2018.

\bibitem[Zimmermann et~al.(2021)Zimmermann, Wu, Esmaeili, and van~de Meent]{zimmermann2021nested}
Zimmermann, H., Wu, H., Esmaeili, B., and van~de Meent, J.-W.
\newblock Nested variational inference.
\newblock \emph{Advances in Neural Information Processing Systems}, 34:\penalty0 20423--20435, 2021.

\bibitem[Zimmermann et~al.(2022)Zimmermann, Lindsten, van~de Meent, and Naesseth]{zimmermann2022variational}
Zimmermann, H., Lindsten, F., van~de Meent, J.-W., and Naesseth, C.~A.
\newblock A variational perspective on generative flow networks.
\newblock \emph{Transactions on Machine Learning Research}, 2022.

\end{thebibliography}
\bibliographystyle{icml2024}
\newpage
\appendix
\onecolumn

\section{Gradient Equivalence}

\begin{lemma}(REINFORCE trick ~\citep{williams1992simple})\label{reforce-trick}
Given a random variable $u$ following a distribution $p(\cdot;\psi)$ parameterized by $\psi$ and a arbitrary function $f$, we have $\nabla_\psi \mathbb{E}_{p(u;\psi)}[f(u)]= \mathbb{E}_{p(u)}[f(u)\nabla_\psi \log p(u;\psi)]=\mathbb{E}_{p(u)}[f(u)\nabla_\psi \log \Tilde{p}(u;\psi)]$, where $p(u;\phi)=\Tilde{p}(u;\phi)/\hat{Z}_p$, and $\hat{Z}_p$ is the normalizing constant and clamped to $\sum_u \Tilde{p}(u)$. 
\end{lemma}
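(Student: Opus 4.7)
The plan is to prove the two equalities in Lemma~\ref{reforce-trick} in order, each via a short standard manipulation. For the first equality, I would write the expectation explicitly as a sum, $\mathbb{E}_{p(u;\psi)}[f(u)]=\sum_u f(u)\, p(u;\psi)$, and push $\nabla_\psi$ inside the sum (justified since $f$ does not depend on $\psi$ and the sum is over a fixed domain of $u$). The key identity is the log-derivative trick $\nabla_\psi p(u;\psi)=p(u;\psi)\,\nabla_\psi \log p(u;\psi)$, which is just the chain rule applied to $\log$. Factoring $p(u;\psi)$ back out of each term lets me rewrite the resulting sum as $\mathbb{E}_{p(u;\psi)}[f(u)\nabla_\psi \log p(u;\psi)]$, yielding the first equality.

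For the second equality, the key is the clamping convention introduced in the footnote of the paper: $\widehat{Z}_p$ is defined to be numerically equal to $\sum_u \widetilde{p}(u;\psi)$ but is treated as a constant with respect to $\psi$ during gradient computation. Writing $\log p(u;\psi)=\log \widetilde{p}(u;\psi)-\log \widehat{Z}_p$ and applying $\nabla_\psi$, the second term drops out by the clamping convention, so $\nabla_\psi \log p(u;\psi)=\nabla_\psi \log \widetilde{p}(u;\psi)$. Substituting this back into the result of the first step gives the second equality.

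I do not expect any real obstacle here: the lemma is a textbook identity, and the only subtlety is being explicit about why $\nabla_\psi \log \widehat{Z}_p$ can be discarded. I would make sure that step is phrased as a direct appeal to the clamping footnote, so the reader sees that the unnormalized form $\widetilde{p}$ is all that is needed in subsequent proofs (in particular in Appendix~\ref{proof-TB-equivalence} for Proposition~\ref{TB-equivalence}), which is presumably why the lemma is stated in this two-part form in the first place.
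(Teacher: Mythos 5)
Your proposal is correct. Note that the paper itself gives no proof of Lemma~\ref{reforce-trick}; it is stated as a known result with a citation to Williams (1992), so there is no in-paper argument to compare against. Your derivation is the standard one: the log-derivative (chain-rule) identity $\nabla_\psi p = p\,\nabla_\psi\log p$ for the first equality, and the observation that $\nabla_\psi\log\hat{Z}_p=0$ under the clamping convention for the second. Both steps are sound (the gradient--sum interchange is immediate for the finite discrete domains used throughout the paper), and you correctly identify that the clamping is what makes the second equality hold, since without it the term $\mathbb{E}_{p(u)}[f(u)]\,\nabla_\psi\log\hat{Z}_p$ would not vanish in general.
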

\subsection{Proof of Proposition \ref{TB-equivalence}}\label{proof-TB-equivalence}
\begin{proof}
First of all, we split the parameters of the total flow estimator and forward transition probability and denote them as $Z(\theta_Z)$ and $P_F(\cdot|\cdot;\theta_{F})$ respectively. We further define $c(\tau)=\left(\log\frac{P_F(\tau|s_0)}{R(x)P_B(\tau|x)}\right)$.  

For the gradients w.r.t. $\theta_F$:
\begin{align}
&\frac{1}{2}\nabla_{\theta_F}\mathbb{E}_{P_F(\tau|s_0)}[L_{TB}(\tau;\theta_F)]=\frac{1}{2}\mathbb{E}_{P_{F,\mu}(\tau)}\left[\nabla_{\theta_{F}}\left(c(\tau;\theta_F)+\log Z\right)^2\right]
\nonumber\\
&=\mathbb{E}_{P_{F,\mu}(\tau)}\left[\left(c(\tau)+\log Z\right)\nabla_{\theta_{F}}\log P_F(\tau|s_0;\theta_F)\right]\nonumber\\
&=\mathbb{E}_{P_{F,\mu}(\tau)}\left[\left(c(\tau)+\log Z\right)\nabla_{\theta_{F}}\log P_F(\tau|s_0;\theta_F)\right]+\underbrace{\mathbb{E}_{P_{F,\mu}(\tau)}\left[\nabla_{\theta_{F}}\left(c(\tau;\theta_F)+\log Z\right)\right]}_{(a)}\nonumber\\&=\mathbb{E}_{\mu(s_0)}\left[\nabla_{\theta_F}\mathbb{E}_{P_F(\tau|s_0;\theta_F)}\left[c(\tau;\theta_F)+\log Z\right]\right]\nonumber\\
&=\mathbb{E}_{\mu(s_0)}[\nabla_{\theta_{F}}D_{KL}(P_F(\tau|s_0;\theta_F),\widetilde{P}_B(\tau|s_0))]\nonumber\\
&=\nabla_{\theta_{F}}D_{KL}^{\mu}(P_F(\tau|s_0;\theta_F),\widetilde{P}_B(\tau|s_0)),\label{gradient_F}
\end{align}
where (a) is equal to zero as $\mathbb{E}_{P_F(\tau|s_0)}[\nabla_{\theta_F}c(\tau;\theta_F)]=\mathbb{E}_{P_F(\tau|s_0)}[1\cdot\nabla_{\theta_F}\log P_F(\tau|s_0;\theta_F)]=\nabla_{\theta_F} \mathbb{E}_{P_F(\tau|s_0;\theta_F)}[1]=0$ by Lemma \ref{reforce-trick}.
We also have:
\begin{align}
\nabla_{\theta_{F}}D_{KL}^{\mu}(P_F(\tau|s_0;\theta_F),\widetilde{P}_B(\tau|s_0))=& \nabla_{\theta_{F}}D_{KL}^\mu(P_F(\tau|s_0;\theta_F),\widetilde{P}_B(\tau|s_0))+ \underbrace{\nabla_{\theta_{F}}\mathbb{E}_{P_{F,\mu}(\tau;\theta_F)}\left[\log Z^*-\log Z\right]}_{=0}\nonumber\\
=&\nabla_{\theta_{F}} D_{KL}^\mu(P_F(\tau|s_0;\theta_F),P_B(\tau|s_0)).\label{gradient_FF}
\end{align}
It should be emphasized that $P_B(\tau)$ is the ground-truth distribution with $P_B(x):=R(x)/Z^\ast$, while $\widetilde{P}_B(\tau)$ is the approximated one with $\widetilde{P}_B(x):=R(x)/Z$.
The gradients w.r.t. $\theta_Z$ can be written as:
\begin{align}
   \frac{1}{2}\nabla_{\theta_Z}\mathbb{E}_{P_F(\tau|s_0)}[L_{TB}(\tau;\theta_Z)]&=\frac{1}{2}\mathbb{E}_{P_F(\tau|s_0)}\left[\nabla_{\theta_Z}\left(c(\tau)+\log Z(\theta_Z)\right)^2\right]\nonumber\\
&=\mathbb{E}_{P_F(\tau|s_0)}\left[\left(c(\tau)+\log Z\right)\nabla_{\theta_Z}\log Z(\theta_Z)\right]\nonumber\\
&=[D_{KL}(P_F(\tau|s_0),\widetilde{P}_B(\tau|s_0))]\left[\nabla_{\theta_Z}\log Z(\theta_Z)\right]
\nonumber\\
&=\nabla_{\theta_Z}\frac{Z(\theta_Z)}{\widehat{Z}} D_{KL}(P_F(\tau|s_0),\widetilde{P}_B(\tau|s_0))\nonumber\\
&=\nabla_{\theta_Z} D_{KL}^{\mu(\cdot;\theta_Z)}(P_F(\tau|s_0),\widetilde{P}_B(\tau|s_0)).\label{gradient_Z}
\end{align}
Besides, we have:
\begin{align}  
\nabla_{\theta_Z} D_{KL}^{\mu(\cdot;\theta_Z)}(P_F(\tau|s_0),\widetilde{P}_B(\tau|s_0))=&[\nabla_{\theta_Z}\log Z(\theta_Z)]\left[D_{KL}(P_F(\tau|s_0),\widetilde{P}_B(\tau|s_0))+\log\frac{Z^*}{Z^*}\right]\nonumber\\
    =&\left[\nabla_{\theta_Z}\log Z(\theta_Z)\right]\left[D_{KL}(P_F(\tau|s_0),P_B(\tau|s_0))+\log \frac{Z}{Z^*}\right]\nonumber\\
    =&\nabla_{\theta_Z} \frac{Z(\theta_Z)}{\widehat{Z}}\left[D_{KL}(P_F(\tau|s_0),P_B(\tau|s_0))\right]+\left[\nabla_{\theta_Z}\log\frac{ Z(\theta_Z)}{Z^*}\right]\left[\log\frac{Z}{Z^*}\right]\nonumber\\
    =&\nabla_{\theta_Z} D_{KL}^{\mu(\cdot;\theta_Z)}(P_F(\tau|s_0),P_B(\tau|s_0))+\frac{1}{2}\nabla_{\theta_Z}\left(\log Z(\theta_Z)-\log Z^\ast\right)^2.\label{gradient_ZZ}
\end{align}
Combining equations \eqref{gradient_F} and \eqref{gradient_Z}, we obtain:
\begin{equation}
    \frac{1}{2}\nabla_{\theta}\mathbb{E}_{P_{F,\mu}(\tau)}[ L_{TB}(\tau;\theta)]=\nabla_{\theta} D_{KL}^{\mu(\cdot;\theta)}(P_F(\tau|s_0;\theta),\widetilde{P}_B(\tau|s_0)).
\end{equation}
Combining equations \eqref{gradient_FF} and \eqref{gradient_ZZ}, we obtain:
\begin{equation}
        \frac{1}{2}\nabla_{\theta}\mathbb{E}_{P_{F,\mu}(\tau)}[L_{TB}(\tau;\theta)]=\nabla_{\theta} \left\{D_{KL}^{\mu(\cdot;\theta)}(P_F(\tau|s_0;\theta),P_B(\tau|s_0))+\frac{1}{2}\left(\log Z(\theta)-\log Z^*\right)^2 \right\}.
\end{equation}
Now let's consider the backward gradients and denote $c(\tau)=\left(\log\frac{P_B(\tau|x)}{P_F(\tau_{\preceq x})}\right)$. Then,
\begin{align}
       &\frac{1}{2}\nabla_{\phi} \mathbb{E}_{P_{B,\rho}(\tau)}[L_{TB}(\tau;\phi )]=\frac{1}{2} \mathbb{E}_{P_{B,\rho}(\tau)}\left[\nabla_{\phi} \left(c(\tau;\phi)+\log R(x)-\log Z-\log P_F(s^f|x)\right)^2\right]\nonumber\\&= \mathbb{E}_{P_{B,\rho}(\tau)}\left[\left(c(\tau)+\log R(x)-\log Z-\log P_F(s^f|x)\right)\nabla_{\phi}\log P_B(\tau|x;\phi)\right]\nonumber\\&= \mathbb{E}_{P_{B,\rho}(\tau)}\left[c(\tau)\nabla_{\phi}\log P_B(\tau|x;\phi)  \right]+\mathbb{E}_{\rho(x)}\big[\left(\log R(x)-\log Z-\log P_F(s^f|x)\right)\underbrace{\mathbb{E}_{P_{B}(\tau|x)}[\nabla_{\phi}\log P_B(\tau|x;\phi)]}_{=0 \text{ by Lemma \ref{reforce-trick}}}\big]\nonumber
       \\
       &=\mathbb{E}_{P_{B,\rho}(\tau)}[c(\tau)\nabla_{\phi}\log P_B(\tau|x;\phi)]+\underbrace{\mathbb{E}_{P_{B,\rho}(\tau)}[\nabla_{\phi}c(\tau;\phi)]}_{=0 \text{ by Lemma \ref{reforce-trick}}}\nonumber\\&=\mathbb{E}_{\rho(x)}\big[\nabla_{\phi} D_{KL}(P_{B}(\tau|x;\phi),\widetilde{P}_{F}(\tau|x))\big]\nonumber\\
       &=\nabla_{\phi} D_{KL}^{\rho}(P_{B}(\tau|x;\phi),\widetilde{P}_{F}(\tau|x))\label{gradient_B}.
\end{align}
Besides, we have
\begin{align}
       \nabla_{\phi} D_{KL}^{\rho}(P_{B}(\tau|x;\phi),\widetilde{P}_{F}(\tau|x))&=\nabla_{\phi} D_{KL}^\rho(P_{B,\rho}(\tau|x;\phi),\widetilde{P}_F(\tau|x))+\mathbb{E}_{\rho(x)}\big[\underbrace{\nabla_{\phi}\mathbb{E}_{P_B(\tau|x;\phi)}\left[\log (P_F^{\top}(x)/P_F(s^f|x))\right]}_{=\log (P_F^\top(x)/P_F(s^f|x))\nabla_\phi 1=0}\big]\nonumber\\
       &=\nabla_{\phi} D_{KL}^{\rho}(P_{B}(\tau|x;\phi),\widetilde{P}_F(\tau|x))+\nabla_\phi\mathbb{E}_{P_{B,\rho}(\tau;\phi)}\left[\log (P_F^{\top}(x)/P_F(s^f|x))\right]\nonumber\\&=\nabla_{\phi} D_{KL}^{\rho}(P_{B}(\tau|x;\phi),P_F(\tau|x)).\label{gradient_BB}
\end{align}
Equations~\eqref{gradient_B} and~\eqref{gradient_BB} are the expected results.
\end{proof}

\subsection{Proof of Proposition \ref{guided-equivalence}}\label{proof-guided-equivalence}
\begin{proof}
The proof can be done by a procedure similar to that of backward gradients in Proposition \ref{TB-equivalence} by replacing $\widetilde{P}_F(\tau|x)$ 
with $P_G(\tau|x)$. 
\end{proof}

\subsection{Sub-trajectory Equivalence}\label{Sub-TB-equivalence}
Proposition 2 in the paper by~\citet{malkin2022gflownets} only considered the gradients of the Sub-TB objective~\citep{madan2023learning} w.r.t. $P_F(\cdot|\cdot)$ and $P_B(\cdot|\cdot)$. We provide an extended proposition below that also takes the gradients w.r.t. state flow estimator $F(\cdot)$ into consideration. For any $m<n$ and $n,m \in \{1,T-1\}$, we denote the set of sub-trajectories that start at some state in $\mathcal{S}_m$ and end in some state in $\mathcal{S}_n$ as $\Bar{\mathcal{T}}=\{\bar{\tau}=(s_m{\rightarrow}\ldots {\rightarrow} s_n)|\forall t\in \{m,\ldots,n-1\}: (s_t{\rightarrow} s_{t+1}) \in \mathcal{A}_t\}$. The sub-trajectory objective $\mathcal{L}_{Sub-TB}(P_\mathcal{D})$ is defined by:
\begin{equation}
\begin{split}
    \mathcal{L}_{Sub-TB}(P_\mathcal{D})=\mathbb{E}_{P_\mathcal{D}(\bar{\tau})}[L_{Sub-TB}(\bar{\tau})],\quad L_{Sub-TB}(\bar{\tau})=\log\left(\frac{\ P_{F}(\bar{\tau}|s_m)F(s_m)}{P_{B}(\bar{\tau}|s_n)F(s_n)}\right)^2. 
\end{split}
\end{equation}
In the equations above, $P_F(\bar{\tau}|s_m)=\prod_{t=m}^{n-1}{P_F(s_{t+1}|s_t)}$, $P_B(\bar{\tau}|s_n)=\prod_{t=m}^{n-1}{P_B(s_t|s_{t+1})}$ and $F(s_n=x):=R(x)$. Besides, we define $\mu(s_m):=F(s_m)/\widehat{Z}_m$ and  $\rho(s_n):=F(s_n)/\widehat{Z}_n$ where  $\widehat{Z}_m$ and $\widehat{Z}_n$ are the two normalizing constants whose values are clamped to $\sum_{s_m}F(s_m)$ and $\sum_{s_n}F(s_n)$. 

Furthermore, 
$P_{F,\mu}(\bar{\tau}):=\mu(s_m)P_F(\bar{\tau}|s_m)$ and $P_{B,\rho}(\bar{\tau}):=\rho(s_n)P_B(\bar{\tau}|s_{n})$ so that $P_{F,\mu}(\bar{\tau}|s_n)=P_{F,\mu}(\bar{\tau})/\rho^*(s_n)$ and  $P_{B,\rho}(\bar{\tau}|s_m)=P_{B,\rho}(\bar{\tau})/\mu^*(s_m)$. Here, $\rho^*(s_n):=F^*(s_n)/\widehat{Z}^*_n$, $\widehat{Z}^*_n$ is clamped to $\sum_{s_n}F^*(s_n)$, and $F^*(s_n):=\sum_{\bar{\tau}:s_n \in \bar{\tau}}F(s_m)P_F(\bar{\tau}|s_m)$ is the ground-truth state flow over $\mathcal{S}_n$ implied by $P_F$; $\mu^*(s_m):=F^*(s_m)/\widehat{Z}^*_m$,  $\widehat{Z}^*_m$ is clamped to $\sum_{s_m}F^*(s_m)$, and $F^*(s_m):=\sum_{\bar{\tau}:s_m \in \bar{\tau}}F(s_n)P_B(\bar{\tau}|s_n)$ is the ground-truth state flow over $\mathcal{S}_m$ implied by $P_B$.

\begin{proposition}
    For a forward policy  $P_F(\cdot|\cdot;\theta)$, a backward policy $P_B(\cdot|\cdot;\phi)$, a state flow estimator $F(\cdot;\theta)$ for $\mathcal{S}_m$, and a state flow estimator $F(\cdot;\phi)$ for $\mathcal{S}_n$\footnote{Here, $F_{\theta}$ and $F_{\phi}$ actually share the same parameters and represent the same flow estimator $F$. Model parameters are duplicated just for the clarity of gradient equivalences. Therefore the true gradient of the state flow estimator $F$ is $\nabla_{\theta}F_{\theta}+\nabla_{\phi}F_{\phi}$.},  the gradients of Sub-TB can be written as:
\begin{align}
 \frac{1}{2}\nabla_{\theta} \mathcal{L}_{Sub-TB}(P_{F,\mu};\theta)&=\nabla_\theta D_{KL}^{\mu(\cdot;\theta)}(P_F(\bar{\tau}|s_m;\theta),P_{B,\rho}(\bar{\tau}|s_m))+\nabla_\theta D_{KL}(\mu(s_m;\theta),\mu^*(s_m))
\nonumber\\&=\nabla_\theta D_{KL}^{\mu(\cdot;\theta)}(P_F(\bar{\tau}|s_m;\theta),\widetilde{P}_{B,\rho}(\bar{\tau}|s_m)),\nonumber\\
\frac{1}{2}\nabla_{\phi} \mathcal{L}_{Sub-TB}(P_{B,\rho};\phi)&=\nabla_{\phi}D_{KL}^{\rho(\cdot;\phi)}(P_B(\bar{\tau}|s_{n};\phi),P_{F,\mu}(\bar{\tau}|s_n))+\nabla_\phi D_{KL}(\rho(s_n;\phi),\rho^*(s_n))\nonumber\\
 &=\nabla_{\phi} D_{KL}^{\rho(\cdot;\phi)}(P_B(\bar{\tau}|s_{n};\phi),\widetilde{P}_{F,\mu}(\bar{\tau}|s_n)), 
\end{align}
where $\widetilde{P}_{F,\mu}(\bar{\tau}|s_n):=P_{F,\mu}(\bar{\tau})/\rho(s_n)$ and $\widetilde{P}_{B,\rho}(\bar{\tau}|s_{m}):=P_{B,\rho}(\bar{\tau})/\mu(s_m)$ are approximation to $P_{F,\mu}(\bar{\tau}|s_n)$ and  $P_{B,\rho}(\bar{\tau}|s_m)$.
\end{proposition}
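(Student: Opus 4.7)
The plan is to mirror the proof of Proposition~\ref{TB-equivalence} step by step, treating the state flow estimator $F(s_m;\theta)$ as the Sub-TB analogue of $Z(\theta)$ and relying on Lemma~\ref{reforce-trick} together with the same ``add zero'' identities that yielded equations~\eqref{gradient_F}--\eqref{gradient_ZZ}. Split the forward parameters as $\theta=(\theta_F,\theta_Z)$ where $P_F(\cdot|\cdot;\theta_F)$ parametrizes the policy and $F(s_m;\theta_Z)$ parametrizes the state flow; analogously $\phi=(\phi_B,\phi_Z)$ for the backward direction. Writing $c(\bar{\tau})=\log\!\big(P_F(\bar{\tau}|s_m)/[P_B(\bar{\tau}|s_n)F(s_n)]\big)$, the Sub-TB loss is $(c(\bar{\tau})+\log F(s_m))^2$, which is structurally identical to the integrand in the TB case after replacing $R(x)\mapsto F(s_n)$ and $Z\mapsto F(s_m)$.

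First, for the policy-gradient piece: differentiating $\tfrac{1}{2}\mathbb{E}_{P_{F,\mu}(\bar{\tau})}[L_{Sub\text{-}TB}]$ with respect to $\theta_F$ and applying REINFORCE gives a cross term $\mathbb{E}_{P_{F,\mu}}[(c(\bar{\tau})+\log F(s_m))\nabla_{\theta_F}\log P_F(\bar{\tau}|s_m;\theta_F)]$ which, after restoring the zero term $\mathbb{E}_{P_F(\bar{\tau}|s_m)}[\nabla_{\theta_F}c(\bar{\tau};\theta_F)]=0$ exactly as in step $(a)$ of equation~\eqref{gradient_F}, collapses to $\nabla_{\theta_F}D_{KL}^{\mu}(P_F(\bar{\tau}|s_m;\theta_F),\widetilde{P}_{B,\rho}(\bar{\tau}|s_m))$, using that $\widetilde{P}_{B,\rho}(\bar{\tau}|s_m)=\rho(s_n)P_B(\bar{\tau}|s_n)/\mu(s_m)\propto F(s_n)P_B(\bar{\tau}|s_n)/F(s_m)$. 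Next, for the flow-estimator piece: differentiating with respect to $\theta_Z$ replays equation~\eqref{gradient_Z} with $\log Z(\theta_Z)$ playing the role of $\log F(s_m;\theta_Z)$, giving $\nabla_{\theta_Z}\frac{F(s_m;\theta_Z)}{\widehat{Z}_m}\cdot D_{KL}(P_F(\bar{\tau}|s_m),\widetilde{P}_{B,\rho}(\bar{\tau}|s_m))=\nabla_{\theta_Z}D_{KL}^{\mu(\cdot;\theta_Z)}(P_F(\bar{\tau}|s_m),\widetilde{P}_{B,\rho}(\bar{\tau}|s_m))$. Adding the two pieces produces the ``tilde'' form of the claimed identity.

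To obtain the other (decomposed) form, I would verify the key normalization identity $P_{B,\rho}(s_m)=\mu^*(s_m)$, which follows because $P_{B,\rho}(s_m)=\sum_{\bar{\tau}\ni s_m}\rho(s_n)P_B(\bar{\tau}|s_n)=F^*(s_m)/\widehat{Z}_n$, and by summing over $s_m$ one checks $\widehat{Z}_n=\widehat{Z}^*_m$ in the graded DAG setting. Consequently $\widetilde{P}_{B,\rho}(\bar{\tau}|s_m)/P_{B,\rho}(\bar{\tau}|s_m)=\mu^*(s_m)/\mu(s_m)$, and a one-line computation gives $D_{KL}^{\mu}(P_F(\bar{\tau}|s_m),\widetilde{P}_{B,\rho}(\bar{\tau}|s_m))=D_{KL}^{\mu}(P_F(\bar{\tau}|s_m),P_{B,\rho}(\bar{\tau}|s_m))+D_{KL}(\mu(s_m),\mu^*(s_m))$. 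Since $\mu^*$ depends only on $(\phi_B,\phi_Z)$ and not on $\theta$, taking $\nabla_\theta$ passes through cleanly, yielding the first equality in the proposition. The backward direction is obtained by an entirely symmetric argument, swapping the roles of $(P_F,\mu,s_m)$ and $(P_B,\rho,s_n)$ and using $P_{F,\mu}(s_n)=\rho^*(s_n)$.

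The main obstacle is bookkeeping rather than any new mathematical idea: one has to track carefully which normalization constants are clamped (hence contribute no gradient), which are ground-truth (hence depend on the \emph{other} parameter block), and which are the learned $\mu,\rho$ that do carry $\theta$- or $\phi$-dependence. In particular, one must verify $\widehat{Z}^*_m=\widehat{Z}_n$ (and its dual $\widehat{Z}^*_n=\widehat{Z}_m$) so that $P_{B,\rho}(\bar{\tau}|s_m)$ and $P_{F,\mu}(\bar{\tau}|s_n)$ are genuine conditional distributions, and check that the zero-gradient identities of Lemma~\ref{reforce-trick} still apply when the ``target'' distribution in the KL depends on $\theta$ through the flow estimator. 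Once these bookkeeping details are lined up, the computation is a direct lift of equations~\eqref{gradient_F}--\eqref{gradient_BB} from the TB case.
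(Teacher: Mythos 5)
Your proposal is correct and, for the bulk of the argument, follows the same route as the paper: split the parameters into a policy block and a flow-estimator block, apply Lemma~\ref{reforce-trick} to the cross terms, and insert the same vanishing REINFORCE terms to reach the ``tilde'' form $\nabla_\theta D_{KL}^{\mu(\cdot;\theta)}(P_F(\bar{\tau}|s_m;\theta),\widetilde{P}_{B,\rho}(\bar{\tau}|s_m))$; the bookkeeping you flag (clamped $\widehat{Z}_m,\widehat{Z}_n$, and the identity $\widehat{Z}_n=\widehat{Z}^*_m$ so that $P_{B,\rho}(s_m)=\mu^*(s_m)$) is exactly what is needed and is consistent with the graded-DAG setting.

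The one place you genuinely diverge is the derivation of the decomposed form. The paper re-runs the gradient computation from scratch with a different set of add-and-subtract zero terms (inserting $\log\mu^*(s_m)$ inside the expectations) to land directly on $\nabla_\theta D_{KL}^{\mu(\cdot;\theta)}(P_F(\bar{\tau}|s_m;\theta),P_{B,\rho}(\bar{\tau}|s_m))+\nabla_\theta D_{KL}(\mu(s_m;\theta),\mu^*(s_m))$. You instead observe the pointwise algebraic identity $\log\bigl(P_{B,\rho}(\bar{\tau}|s_m)/\widetilde{P}_{B,\rho}(\bar{\tau}|s_m)\bigr)=\log\bigl(\mu(s_m)/\mu^*(s_m)\bigr)$, which gives the value-level decomposition of the KL and lets you differentiate once, using that $\mu^*$ carries no $\theta$-dependence. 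This is valid (the identity holds for every value of the clamped normalizers treated as free constants, so partial differentiation under the clamping convention preserves it) and is arguably cleaner, since it replaces a second round of REINFORCE manipulations with a one-line algebraic step; the paper's version has the minor advantage of exhibiting each vanishing term explicitly, which makes the role of the clamping more transparent.
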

\begin{proof}
First of all, we split the parameters of the state flow estimator and forward transition probability and denote them as $F(\cdot;\theta_M)$ and $P_F(\cdot|\cdot;\theta_F)$ respectively. We further define $c(\bar{\tau})=\left(\log\frac{P_F(\bar{\tau}|s_m)}{F(s_n)P_B(\bar{\tau}|s_{n})}\right)$.   

For the gradients w.r.t. $\theta_F$:
\begin{align}
&\frac{1}{2}\nabla_{\theta_F}\mathbb{E}_{P_{F,\mu}(\bar{\tau})}[L_{Sub-TB}(\bar{\tau};\theta_F)]\nonumber=\frac{1}{2}\mathbb{E}_{P_{F,\mu}(\bar{\tau})}\left[\nabla_{\theta_{F}}\left(c(\bar{\tau};\theta_F)+\log F(s_m)\right)^2\right]
\nonumber\\
&=\mathbb{E}_{P_{F,\mu}(\bar{\tau})}\left[(c(\bar{\tau})+\log F(s_m))\nabla_{\theta_{F}}\log P_F(\bar{\tau}|s_m;\theta_F)\right]+\underbrace{\mathbb{E}_{P_{F,\mu}(\bar{\tau})}\left[\nabla_{\theta_F}(c(\bar{\tau};\theta_F)+\log F(s_m))\right]}_{=0 \text { by Lemma \ref{reforce-trick}}}\nonumber\\&=\nabla_{\theta_{F}}\mathbb{E}_{P_{F,\mu}(\bar{\tau};\theta_F)}\left[(c(\bar{\tau};\theta_F)+\log F(s_m))\right]\nonumber+\underbrace{\nabla_{\theta_{F}}\mathbb{E}_{P_{F,\mu}(\bar{\tau};\theta_F)}[\log \widehat{Z}_n-\log\widehat{Z}_m]}_{=0}\\
&=\nabla_{\theta_{F}}D_{KL}^{\mu}(P_F(\bar{\tau}|s_m;\theta_F),\widetilde{P}_{B,\rho}(\bar{\tau}|s_m)).\label{sub-gradient_F}
\end{align}
Besides,
\begin{align}
&\frac{1}{2}\nabla_{\theta_F}\mathbb{E}_{P_{F,\mu}(\bar{\tau})}[L_{Sub-TB}(\bar{\tau};\theta_F)]\nonumber=\frac{1}{2}\mathbb{E}_{P_{F,\mu}(\bar{\tau})}\left[\nabla_{\theta_{F}}\left(c(\bar{\tau};\theta_F)+\log F(s_m)\right)^2\right]
\nonumber\\
&=\mathbb{E}_{P_{F,\mu}(\bar{\tau})}\left[c(\bar{\tau})\nabla_{\theta_{F}}\log P_F(\bar{\tau}|s_m;\theta_F)\right]+\mathbb{E}_{\mu(s_m)}\big[\log F(s_m)\underbrace{\mathbb{E}_{P_F(\bar{\tau}|s_m)}\left[\nabla_{\theta_{F}}\log P_F(\bar{\tau}|s_m;\theta_F)\right]}_{=0 \text { by Lemma \ref{reforce-trick}}}\big]\nonumber\\
&=\mathbb{E}_{P_{F,\mu}(\bar{\tau})}\left[c(\bar{\tau})\nabla_{\theta_{F}}\log P_F(\bar{\tau}|s_m;\theta_F)\right]+\underbrace{\mathbb{E}_{P_{F,\mu}(\bar{\tau})}\left[\nabla_{\theta_{F}} c(\bar{\tau};\theta_F)\right]}_{=0}\nonumber\\&=\nabla_{\theta_F}\mathbb{E}_{P_{F,\mu}(\bar{\tau};\theta_F)}\left[c(\bar{\tau};\theta_F)\right]+\mathbb{E}_{\mu(s_m)}\big[\underbrace{\nabla_{\theta_F}\mathbb{E}_{P_F(\bar{\tau}|s_m;\theta_F)}[\log \mu^*(s_m)]}_{=\log \mu^*(s_m)\nabla_{\theta_F}1=0}\big]\nonumber+\underbrace{\nabla_{\theta_{F}}\mathbb{E}_{P_{F,\mu}(\bar{\tau};\theta_F)}[\log \widehat{Z}_n]}_{=0}\\&=\nabla_{\theta_F}\mathbb{E}_{P_{F,\mu}(\bar{\tau};\theta_F)}\left[c(\bar{\tau};\theta_F)\right]+\nabla_{\theta_F}\mathbb{E}_{P_{F,\mu}(\bar{\tau};\theta_F)}[\log \mu^*(s_m)+\log\widehat{Z}_n]\nonumber\\
&=\nabla_{\theta_{F}}D_{KL}^{\mu}(P_F(\bar{\tau}|s_m;\theta_F),P_{B,\rho}(\bar{\tau}|s_m))].\label{sub-gradient_FF}
\end{align}
For the gradients w.r.t. $\theta_M$, we have:
\begin{align}
&\frac{1}{2}\nabla_{\theta_M}\mathbb{E}_{P_{F,\mu}(\bar{\tau})}[L_{Sub-TB}(\bar{\tau};\theta_M)]=\frac{1}{2}\mathbb{E}_{P_{F,\mu}(\bar{\tau})}\left[\nabla_{\theta_{M}}\left(c(\bar{\tau})+\log F(s_m;\theta_M)\right)^2\right]
\nonumber\\
&=\mathbb{E}_{P_{F,\mu}(\bar{\tau})}\left[(c(\bar{\tau})+\log F(s_m))\nabla_{\theta_{M}}\log F(s_m;\theta_M)\right]+\underbrace{\mathbb{E}_{P_{F,\mu}(\bar{\tau})}[(\log \widehat{Z}_n-\log \widehat{Z}_m)\nabla_{\theta_{M}}\log F(s_m;\theta_M)]}_{=0 \text { by Lemma \ref{reforce-trick}}}\nonumber\\
&=\mathbb{E}_{\mu(s_m)}\left[D_{KL}(P_F(\bar{\tau}|s_m),\widetilde{P}_B(\bar{\tau}|s_m))\nabla_{\theta_{M}}\log \frac{F(s_m;\theta_M)}{\widehat{Z}_m}\right]\nonumber\\
&=\nabla_{\theta_{M}}D_{KL}^{\mu(\cdot;\theta_M)}(P_F(\bar{\tau}|s_m),\widetilde{P}_B(\bar{\tau}|s_m)).\label{sub-gradient_Z}
\end{align}
Besides,
\begin{align}
&\frac{1}{2}\nabla_{\theta_M}\mathbb{E}_{P_{F,\mu}(\bar{\tau})}[L_{Sub-TB}(\bar{\tau};\theta_M)]=\frac{1}{2}\mathbb{E}_{P_{F,\mu}(\bar{\tau})}\left[\nabla_{\theta_{M}}\left(c(\bar{\tau})+\log F(s_m;\theta_M)\right)^2\right]
\nonumber\\
&=\mathbb{E}_{P_{F,\mu}(\bar{\tau})}\left[c(\bar{\tau})\nabla_{\theta_{F}}\log F(s_m;\theta_M)\right]+\mathbb{E}_{\mu(s_m)}\left[\log F(s_m)\nabla_{\theta_{M}} \log F(s_m;\theta_M)\right]\nonumber\\
&=\mathbb{E}_{P_{F,\mu}(\bar{\tau})}\left[(c(\bar{\tau})+\log \mu^*(s_m))\nabla_{\theta_{M}}\log F(s_m;\theta_M)\right]+\mathbb{E}_{\mu(s_m)}\left[(\log F(s_m)-\log \mu^*(s_m))\nabla_{\theta_{M}} \log F(s_m;\theta_M)\right]\nonumber\\&\quad+\underbrace{\mathbb{E}_{P_{F,\mu}(\bar{\tau})}[(\log \widehat{Z}_n-\log \widehat{Z}_m)\nabla_{\theta_{M}}\log F(s_m;\theta_M)]}_{=0 \text { by Lemma \ref{reforce-trick}}}
\nonumber\\
&=\mathbb{E}_{\mu(s_m)}\left[D_{KL}(P_F(\bar{\tau}|s_m),P_{B,\rho}(\bar{\tau}|s_m))\nabla_{\theta_{M}}\log\frac{F(s_m;\theta_M)}{\widehat{Z}_m}\right]\nonumber\\&\quad+\mathbb{E}_{\mu(s_m)}\left[\left(\log \mu(s_m)-\log \mu^*(s_m)\right)\nabla_{\theta_{M}} \log \frac{F(s_m;\theta_M)}{ \widehat{Z}_m}\right]+\underbrace{\mathbb{E}_{\mu(s_m)}\left[\nabla_{\theta_{M}}\left(\log\mu(s_m;\theta_M)-\log\mu^\ast(s_m)\right)\right]}_{=0\text{ By Lemma~\ref{reforce-trick}}}\nonumber\\&=\nabla_{\theta_{M}}D_{KL}^{\mu(\cdot;\theta_M)}(P_F(\bar{\tau}|s_m),P_{B,\rho}(\bar{\tau}|s_m))+\nabla_{\theta_M}D_{KL}(\mu(s_m;\theta_M),\mu^*(s_m)).\label{sub-gradient_ZZ}
\end{align}
Combining equations \eqref{sub-gradient_F} and \eqref{sub-gradient_Z}, we obtain
\begin{equation}
    \frac{1}{2}\nabla_{\theta}\mathbb{E}_{P_{F,\mu}(\bar{\tau})}[L_{Sub-TB}(\bar{\tau};\theta)]=\nabla_{\theta}D_{KL}^{\mu(\cdot;\theta)}(P_F(\bar{\tau}|s_m;\theta),\widetilde{P}_B(\bar{\tau}|s_m)).
\end{equation}
Combining equations \eqref{sub-gradient_FF} and \eqref{sub-gradient_ZZ}, we obtain
\begin{equation}
    \frac{1}{2}\nabla_{\theta}\mathbb{E}_{P_{F,\mu}(\bar{\tau})}[L_{Sub-TB}(\bar{\tau};\theta)]=\nabla_{\theta}\left\{D_{KL}^{\mu(\cdot;\theta)}(P_F(\bar{\tau}|s_m;\theta),P_B(\bar{\tau}|s_m))+D_{KL}(\mu(s_m;\theta),\mu^*(s_m))\right\}.
\end{equation}
Splitting $\phi$ into $\phi_B$ and $\phi_M$ and denoting $c(\bar{\tau})=\log \frac{P_B(\bar{\tau}|s_n)}{F(s_m)P_F(\bar{\tau}|s_m)}$, the gradient derivation of $\phi$ follows the similar way as $\theta$, and is omitted here.
\end{proof}
\section{RL Framework}
\subsection{Derivation of RL Functions} \label{RL_def}
Let's first consider the case of forward policies. For any  $s\in \mathcal{S}_t$ and $a=(s{\rightarrow} s')\in \mathcal{A}(s)$ with $t\in \{0,\ldots,T-1\}$, we define the $V_{F,t}$ and $Q_{F,t}$ as:
\begin{align}
V_{F,t}(s)&:=\mathbb{E}_{P_F(\tau_{> t}|s_t)}\left[\sum_{l=t}^{T-1}{R_{F}(s_{l},a_{l})}\bigg|s_t=s\right]\nonumber\\&=R_F(s)+\mathbb{E}_{P_F(s_{t+1}|s_t)}\left[\mathbb{E}_{P_F(\tau_{> t+1}|s_{t+1})}\left[\sum_{l=t+1}^{T}{R_{F}(s_{l},a_{l})}\bigg|s_{t+1}=s'\right]\bigg|s_t=s\right]\nonumber\\&=R_F(s)+ \mathbb{E}_{\pi_F(s,a)}[V_{F,t+1}(s')],
\nonumber\\
Q_{F,t}(s,a)&:=\mathbb{E}_{P_{F}(\tau_{>t+1}|s_t,a_t)}\left[\sum_{l=t}^{T-1}{ R_{F}(s_{l},a_{l})}\bigg|s_t=s,a_t=a\right]\nonumber\\&=R_F(s,a)+\mathbb{E}_{P_F(\tau_{>t+1}|s_{t+1})}\left[\sum_{l=t+1}^{T}R_{F}(s_{l},a_{l})\bigg|s_{t+1}=s'\right]
\nonumber\\&=R_F(s,a)+ V_{F,t+1}(s'),
\end{align}
where $R_F(s):=\mathbb{E}_{\pi_F(s,a)}[R_F(s,a)]$, $V_{F,T}(\cdot):=0$, and $Q_{F,T}(\cdot,\cdot):=0$. 
Since $S_t \cap S_{t^\prime}=\emptyset$ for any $t\neq t^\prime$, we can read off the time indices (topological orders) from state values. Plus the fact that $R_F(s,a):=0$ for any $a\notin A(s)$, we are allowed to define two universal functions $V_F:\mathcal{S}\rightarrow \mathbb{R}$  and $Q_F:\mathcal{S}\times\mathcal{A} \rightarrow \mathbb{R}$ such that $V_F(s_t=s):=V_{F,t}(s)$ and $Q_F(s_t=s,a):=Q_{F,t}(s,a)$.
\begin{remark}
While the transition environment $\mathcal{G}$ is exactly known, the state space $\mathcal{S}$ can be exponentially large, making the exact values of $V$ and $Q$ intractable. This fact, in spirit, corresponds to a regular RL problem where the exact values of $V$ and $Q$ are infeasible due to the unknown and uncertain transition environment $P(s'|s, a)$.
\end{remark}
For backward policies, rewards are accumulated from $T-1$ to $1$. Similarly, for $s'\in \mathcal{S}_t$ and $a=(s {\rightarrow} s')\in \dot{\mathcal{A}}(s')$,
\begin{align}
V_{B,t}(s')&:=\mathbb{E}_{P_B(\tau_{<t}|s_t)}\left[\sum_{l=1}^{t}{R_{B}(s_{l},a_{l})}\bigg|s_{t}=s'\right]=R_B(s')+ \mathbb{E}_{\pi_B(s',a)}[V_{B,t-1}(s)],\nonumber\\
Q_{B,t}(s',a)&:=\mathbb{E}_{P_{B}(\tau_{<t-1}|s_t,a_t)}\left[\sum_{l=1}^{t}{ R_{B}(s_{l},a_{l})}\bigg|s_t=s',a_t=a\right]=R_B(s',a)+ V_{B,t-1}(s),
\end{align}
where $R_B(s')=\mathbb{E}_{\pi_B(s',a)}[R_B(s',a)]$, $V_{B,0}(\cdot):=0$, and $Q_{B,0}(\cdot,\cdot):=0$. For the same reason as forward policies, we can define universal functions 
$V_B:\mathcal{S}\rightarrow \mathbb{R}$  and $Q_B:\mathcal{S}\times\dot{\mathcal{A}} \rightarrow \mathbb{R}$ such that $V_B(s_t=s'):=V_{B,t}(s')$ and $Q_B(s_t=s',a):=Q_{B,t}(s',a)$.

Based on the definitions above, the expected value functions are defined as:
\begin{align}
    J_F:=\mathbb{E}_{\mu(s_0)}[V_F(s_0)],\quad J_B:=\mathbb{E}_{\rho(x)}[V_B(x)].    
\end{align}
By definitions, $V_F(s_0)=\mathbb{E}_{P_F(\tau|s_0)}[\sum_{t=0}^{T-1}{R_{F}(s_{l},a_{l})}|s_0]=D_{KL}(P_{F}(\tau|s_0),\widetilde{P}_{B}(\tau|s_0))$
, so $J_F=D_{KL}^{\mu}(P_{F}(\tau|s_0),\widetilde{P}_{B}(\tau|s_0))$. Likewise, we can obtain $ J_B=D_{KL}^{\rho}(P_{B}(\tau|x),\widetilde{P}_{F}(\tau|x))$. 
The advantages functions are defined as: 
\begin{align}
A_F(s,a):=Q_F(s,a)-V_F(s),\quad A_B(s^\prime,a):=Q_B(s^\prime,a)-V_B(s^\prime).    
\end{align}
We define the forward accumulated state distribution as $d_{F,\mu}(s):=\frac{1}{T}\sum_{t=0}^{T-1}{P_{F,\mu}(s_t=s)}$ such that for arbitrary function $f:\mathcal{S}\times\mathcal{A}\rightarrow \mathbb{R}$, 
\begin{align}
        &\mathbb{E}_{P_{F,\mu}(\tau)}\left[\sum_{t=0}^{T-1} f(s_t,a_t)\right]=   \sum_{t=0}^{T-1}\mathbb{E}_{P_{F,\mu}(s_t\rightarrow s_{t+1})}\left[ f(s_t,a_t)\right] =\sum_{t=0}^{T-1} \mathbb{E}_{P_{F,\mu}(s_t),\pi_F(s_t,a_t)}\left[f(s_t,a_t)\right]\nonumber \\& =\sum_{t=0}^{T-1}\sum_s^\mathcal{S} P_{F,\mu}(s_t=s)\sum_a^\mathcal{A} \pi_F(s,a) f(s,a)\label{acc-dist}= \sum_s^\mathcal{S} \sum_a^\mathcal{A} 
\left(\sum_{t=0}^{T-1} P_{F,\mu}(s_t=s)\right) \pi_F(s,a) f(s,a)\nonumber\\&=T\,\sum_s^\mathcal{S} \sum_a^\mathcal{A} 
  d_{F,\mu}(s) \pi_F(s,a) f(s,a)=T\, \mathbb{E}_{d_{F,\mu}(s),\pi_F(s,a)}[f(s,a)],
\end{align}
where equation~\eqref{acc-dist}  holds in that  $\forall s\notin \mathcal{S}_t: P(s_t=s)=0$ and $\forall a\notin \mathcal{A}(s): \pi_F(s,a)=0$.
By the fact that $S_t \cap S_{t^\prime}=\emptyset$ for any $t\neq t^\prime$ and any trajectory $\tau\in \mathcal{T}$ must pass some $s_t\in \mathcal{S}_t$ for $t\in \{0,\ldots,T-1\}$, $P_{F,\mu}(s_t)$ is a valid distribution over $\mathcal{S}_t$ and $\sum_{s_t}P_{F,\mu}(s_t)=1$. Accordingly, $d_{F,\mu}(s)$ is a valid distribution over $\mathcal{S}$ and $T\,d_{F,\mu}(s_t)=P_{F,\mu}(s_t)$. Analogously, we can define 
$d_{B,\rho}(s^\prime):=\frac{1}{T-1}\sum_{t=1}^{T-1}{P_{B,\rho}(s_t=s')}$ such that 
for arbitrary function $f:\mathcal{S}\times\dot{\mathcal{A}}\rightarrow \mathbb{R}$, 
\begin{align}
& \mathbb{E}_{P_{B,\rho}(\tau)}\left[\sum_{t=1}^{T-1} f(s_t,a_t)\right]=(T-1)\,\mathbb{E}_{d_{B,\rho}(s'),\pi_B(s',a)}[f(s',a)].
\end{align}

\subsection{DAGs as Transition Environments}\label{environment}
\begin{theorem}~\citep{golpar2019nilpotent}\label{DAG} Let  $P\in \mathbb{R}^{N\times N}$  be a non-negative matrix. The following statements are equivalent:
\vspace{-3mm}
\begin{enumerate}
\item$P$ is nilpotent;
	\item$P^N$ =0;
	\item The directed graph $\mathcal{G}(\mathcal{S} ,\mathcal{A})$ associated with $P$ is a DAG;
	\item There exists a permutation matrix $U$  such that $U^TPU$  is a strictly triangular matrix.
\end{enumerate}\vspace{-3mm}
where  $\mathcal{S}=\{s^0,\ldots,s^{N-1}\}$ and $\mathcal{A}=\{(s^i{\rightarrow} s^j)|P_{i,j}\neq0\}$ are node and edge sets.
\end{theorem}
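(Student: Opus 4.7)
The plan is to prove the four-way equivalence by establishing the cycle of implications $(1)\Rightarrow(2)\Rightarrow(3)\Rightarrow(4)\Rightarrow(1)$. Two of these directions are essentially bookkeeping from linear algebra, and the real combinatorial content lies in $(2)\Rightarrow(3)$ and $(3)\Rightarrow(4)$, which together reveal why the non-negativity hypothesis is what ties nilpotency to acyclicity.

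For $(1)\Rightarrow(2)$, I would recall that a nilpotent matrix has $0$ as its only eigenvalue, so its characteristic polynomial equals $\chi_P(x)=x^N$ and Cayley--Hamilton yields $P^N=0$ immediately. The reverse $(2)\Rightarrow(1)$ is the definition of nilpotency. For $(4)\Rightarrow(1)$, I would observe that a strictly triangular $N\times N$ matrix $T$ satisfies $T^N=0$, either by iterating the shift action on standard basis vectors or again by Cayley--Hamilton since all diagonal entries, hence all eigenvalues, are zero; conjugation by the permutation matrix $U$ preserves nilpotency, so $P=U(U^TPU)U^T$ is nilpotent as well, closing the cycle.

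The main work is $(2)\Rightarrow(3)$, and this is where non-negativity becomes essential. For non-negative $P$ the identity $(P^k)_{ij}=\sum_{i_0=i,\,i_1,\dots,\,i_k=j}\prod_{\ell=1}^{k}P_{i_{\ell-1},i_\ell}$ is a sum of non-negative terms, so $(P^k)_{ij}=0$ if and only if no directed walk of length $k$ from $s^i$ to $s^j$ exists in the associated graph $\mathcal{G}$. If $\mathcal{G}$ contained a directed cycle, then for every vertex $s^i$ on that cycle there would exist closed walks from $s^i$ to itself of arbitrarily large length (any multiple of the cycle length), giving $(P^m)_{ii}>0$ for infinitely many $m$. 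But $P^N=0$ entails $P^m=0$ for all $m\geq N$ by induction, a contradiction. Hence $\mathcal{G}$ is acyclic.

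For $(3)\Rightarrow(4)$, I would invoke a topological sort of $\mathcal{G}$: since $\mathcal{G}$ is a DAG, there is a bijection $\sigma:\{0,\dots,N-1\}\to\{0,\dots,N-1\}$ such that every edge $(s^i{\to}s^j)\in\mathcal{A}$ satisfies $\sigma^{-1}(i)<\sigma^{-1}(j)$. Letting $U$ be the permutation matrix realising $\sigma$, the $(p,q)$-entry of $U^TPU$ equals $P_{\sigma(p),\sigma(q)}$, which vanishes whenever $p\geq q$, so $U^TPU$ is strictly upper triangular. The main obstacle to anticipate is precisely the non-negativity hypothesis in the walk-counting step of $(2)\Rightarrow(3)$: without it, cancellation inside the sum could drive $(P^N)_{ij}$ to zero even when cycles exist, and the equivalence with the DAG condition would collapse; the remaining implications are routine linear-algebra and graph-theoretic manipulations.
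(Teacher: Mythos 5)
The paper does not prove this statement at all --- it is imported verbatim from the cited reference \citet{golpar2019nilpotent} and used as a black box to establish Lemma~\ref{np-order} and the expression $\sum_{t=0}^{T-1}(\bar P_F)^t=(I-\bar P_F)^{-1}$. Your proof is therefore filling a gap the paper deliberately leaves open, and it is correct. The implication cycle $(1)\Rightarrow(2)\Rightarrow(3)\Rightarrow(4)\Rightarrow(1)$ is complete; the Cayley--Hamilton steps for $(1)\Rightarrow(2)$ and $(4)\Rightarrow(1)$ are sound (for $(1)\Rightarrow(2)$ one should note the eigenvalue argument is carried out over $\mathbb{C}$, or equivalently that the minimal polynomial divides $x^k$ and has degree at most $N$); the topological-sort construction of $U$ in $(3)\Rightarrow(4)$ gives $(U^TPU)_{pq}=P_{\sigma(p),\sigma(q)}$, which vanishes for $p\ge q$ exactly as you say. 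Most importantly, your walk-counting argument for $(2)\Rightarrow(3)$ correctly identifies the only place non-negativity is needed: for a non-negative matrix, $(P^k)_{ij}>0$ if and only if a length-$k$ walk exists because no cancellation can occur in the sum over walks, and a cycle of length $c$ then forces $(P^{mc})_{ii}>0$ for $mc\ge N$, contradicting $P^N=0$. This is the same mechanism the paper implicitly relies on in its proof of Lemma~\ref{np-order}, where it expands $[P^t]_{i,j}$ as a sum over increasing chains of states, so your argument is consistent with how the result is actually used downstream.
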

\begin{lemma}\label{np-order} For any DAG graph $\mathcal{G}(\mathcal{S},\mathcal{A})$ associated with $P\in R^{N\times N}$  with $T+1(\le N)$ different topological node orders indexed by integers $[0,T]$,
\begin{equation}
    \forall t > T,\quad P^t=\mathbf{0}.
\end{equation}
\end{lemma}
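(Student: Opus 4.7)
The plan is to reduce the lemma to a structural property of walks in the DAG, using the fact that there are $T+1$ distinct topological levels. Recall that
\[(P^t)_{ij}=\sum_{v_0=s^i,\,v_t=s^j}\prod_{k=0}^{t-1}P_{v_k,v_{k+1}},\]
where the sum runs over all sequences $v_0,\ldots,v_t\in\mathcal{S}$. Each nonzero term corresponds to a walk in $\mathcal{G}$ of length $t$ from $s^i$ to $s^j$, because $P_{v_k,v_{k+1}}\neq 0$ if and only if $(v_k{\rightarrow}v_{k+1})\in\mathcal{A}$. So it suffices to show that no walk of length greater than $T$ can exist.

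To do this, I would assign to each node $s\in\mathcal{S}$ its topological level $\ell(s)\in\{0,\ldots,T\}$ furnished by the statement. By the definition of a topological ordering of a DAG, every edge $(s{\rightarrow}s')\in\mathcal{A}$ satisfies $\ell(s')\ge\ell(s)+1$. An induction on $t$ then yields the key claim: if $(P^t)_{ij}\neq 0$, then $\ell(s^j)\ge\ell(s^i)+t$. The base case $t=1$ is immediate from $P_{ij}\neq 0\Rightarrow(s^i{\rightarrow}s^j)\in\mathcal{A}$. For the inductive step, expand $(P^{t+1})_{ij}=\sum_k P_{ik}(P^t)_{kj}$, pick any $k$ for which both factors are nonzero, and combine $\ell(s^k)\ge\ell(s^i)+1$ with $\ell(s^j)\ge\ell(s^k)+t$ to get $\ell(s^j)\ge\ell(s^i)+t+1$.

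Combining these, for $t>T$ and any indices $i,j$, a nonzero entry $(P^t)_{ij}$ would force $\ell(s^j)\ge\ell(s^i)+t\ge t>T$, contradicting $\ell(s^j)\le T$. Hence $P^t=\mathbf{0}$.

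The argument is essentially bookkeeping, and I do not foresee a serious obstacle. The only subtlety is to fix the interpretation of the phrase ``$T+1$ different topological node orders'' so that the associated levels in $\{0,\ldots,T\}$ strictly increase along each edge; this is the natural reading and matches the graded decomposition used throughout the paper. A direct application of Theorem~\ref{DAG} would only yield $P^N=\mathbf{0}$, which is weaker than needed since $N$ can greatly exceed $T+1$; it is the level-based refinement above, rather than the triangularization itself, that provides the sharper bound $t>T$.
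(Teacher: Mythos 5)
Your proof is correct and follows essentially the same route as the paper's: both arguments reduce a nonzero entry of $P^t$ to the existence of a length-$t$ walk whose nodes occupy $t+1$ strictly increasing topological orders, which is impossible when only $T+1$ orders exist. Your level-function induction is just a slightly more formal packaging of the paper's direct contradiction on the expanded sum, and your closing remark that Theorem~\ref{DAG} alone only gives $P^N=\mathbf{0}$ matches the paper's reason for stating this refinement separately.
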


\begin{proof}
We prove the result by contradiction. Assuming $P^t( t> T)$ is not zero, then $\exists\,i\neq j$:  
\begin{equation}
[P^t]_{i,j}=\sum_{k_{1:{t-1}}}P_{i,k_1}P_{k_1,k_2}\ldots P_{k_{t-1},j}>0.
\end{equation}
By the nature of DAGs, $\forall (s'{\rightarrow} s)\in \mathcal{A}: s^\prime\prec s$. Then the above expression is equal to:
\begin{align}
    [P^t]_{i,j}&=\sum_{k_1: s^i\prec s^{k_1}} P_{i,k_1}\left(\sum_{k_2:s^{k_1}\prec s^{k_2}}P_{k_1,k_2}\ldots\left(\sum_{k_{t-1}:s^{k_{t-2}}\prec s^{k_{t-1}}}P_{k_{t-2},k_{t-1}}P_{k_{t-1},j}\right)\right)\nonumber\\
&=\sum_{k_{1:t-1}: (s^i\prec s^{k_1}\prec\ldots \prec s^{k_{t-1}}\prec s^j)}{P_{i,k_1}P_{k_1,k_2}\ldots P_{k_{t-1},j}}\nonumber\\&>0.   
\end{align}
This means that there at least exists a trajectory $(s^i\prec s^{k_1}\prec \ldots \prec s^{k_{t-1}}\prec s^j)$ with non-zero probability. However, there are $t+1$ distinct node orders in the path, which contradicts the assumption that there are $T+1$ different node orders. 
\end{proof}

Let's return to the graded DAG, $\mathcal{G}(\mathcal{S}, \mathcal{A})$ in GFlowNets. For the easiness of analysis, we restrict forward and backward policies and initial distribution to be tabular forms, $P_F\in \mathbb{R}^{|\mathcal{S}|\times |\mathcal{S}|}$, $\mu\in \mathbb{R}^{|\mathcal{S}|}$, $P_B\in \mathbb{R}^{|\mathcal{S}|\times |\mathcal{S}|}$, and $\rho \in \mathbb{R}^{|\mathcal{S}|}$ such that $P_F(s^j|s^i)=[P_F]_{j,i}$ and $P_B(s^i|s^j)=[P_B]_{i,j}$. 
Besides, we split initial distribution vectors by $\mu=(\bar{\mu};0) \in \mathbb{R}^{|\mathcal{S}|}$ and $\rho=(0;\bar{\rho})\in \mathbb{R}^{|\mathcal{S}|}$, where $\bar{\mu}$ and $\bar{\rho}$ denote the probabilities of states except $s^f$ and $s^0$ respectively. We denote the graph equipped with a self-loop over $s^f$ as $\mathcal{G}_F(\mathcal{S},\mathcal{A}\cup\{{(s^f{\rightarrow} s^f)}\})$, and the reverse graph equipped with a self-loop over $s^0$ as $\mathcal{G}_B(\mathcal{S},\dot{\mathcal{A}}\cup\{{(s^0{\rightarrow} s^0)}\})$. Accordingly, we enhance $P_F$ and $P_B$ by defining $P_F(s^f|s^f):=1$ and $P_B(s^0|s^0):=1$. $(\mathcal{G}_F, P_F)$ specifies an absorbing Markov Chains: $s^f$ is the only absorbing state as only the self-loop is allowed once entering $s^f$; the sub-graph over $S\setminus\{s^f\}$, denoted as $\Bar{\mathcal{G}}_F$ is still a DAG, so any state $s\in S\setminus\{s^f\}$ is transient as it can be visited at most one time. Similarly, $(\mathcal{G}_B, P_B)$ specifies another absorbing Markov Chain with absorbing state $s^0$ and a DAG over $S\setminus\{s^0\}$, denoted as $\Bar{\mathcal{G}}_B$. For graph $\mathcal{G}_F$ and $\mathcal{G}_B$, their transition matrices $P_F$ and $P_B$ can be decomposed into:
\begin{equation}
P_F=\left(\begin{matrix}\bar{P}_F&\mathbf{0}\\r_F&1\\\end{matrix}\right), P_B=\left(\begin{matrix}1&r_B\\\mathbf{0}&\bar{P}_B\\\end{matrix}\right).
\end{equation}
In the equations above, $r_F\in \mathbb{R}^{1\times(|\mathcal{S}|-1)}$ and $ r_B\in \mathbb{R}^{1\times(|\mathcal{S}|-1)}$ denote the  probabilities of $(s{\rightarrow} s^f)$ for any $ s\in \mathcal{S}\setminus\{s^f\}$ and $(s^0 \leftarrow s)$ for any $s\in \mathcal{S}\setminus\{s^0\}$ respectively;  $\bar{P}_F\in \mathbb{R}^{(|\mathcal{S}|-1)\times (|\mathcal{S}|-1)}$ and $\bar{P}_B\in \mathbb{R}^{(|\mathcal{S}|-1)\times (|\mathcal{S}|-1)}$ denote probability of $(s {\rightarrow} s')$ for any $s,s' \in \mathcal{S}\setminus\{s^f\}$ and $(s \leftarrow s')$ for any $s,s' \in \mathcal{S}\setminus\{s^0\}$, that is, the transition matrices over $\Bar{\mathcal{G}}_F$ and $\Bar{\mathcal{G}}_B$ respectively.
\begin{lemma}For $(\mathcal{G}_, P_F,\mu)$ and $(\mathcal{G}_B, P_B,\rho)$, $d_{F,\mu}\in \mathbb{R}^{|S|}$\ and $d_{B,\rho} \in \mathbb{R}^{|S|}$, can be written in the following forms:
\begin{align} 
d_{F,\mu}&=\left(\begin{matrix}\bar{d}_{F,\mu}\\0\end{matrix}\right),\quad\bar{d}_{F,\mu}=\frac{1}{T}(I-\bar{P}_F)^{-1}\bar{\mu},\nonumber\\
    d_{B,\rho}&=\left(\begin{matrix}
        0\\ \bar{d}_{B,\rho}
    \end{matrix}\right),\quad \bar{d}_{B,\rho}=\frac{1}{T-1}(I-\bar{P}_B)^{-1}\bar{\rho}.
\end{align}
\end{lemma}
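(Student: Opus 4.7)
The plan is to compute $d_{F,\mu}$ directly from its definition $d_{F,\mu} = \frac{1}{T}\sum_{t=0}^{T-1} P_F^{\,t}\mu$, combining the block form of the enhanced matrix $P_F$ with the nilpotency result of Lemma~\ref{np-order}. Since $\mu=(\bar\mu;0)^\top$ is supported on $\mathcal{S}\setminus\{s^f\}$ and $P_F$ is block-lower-triangular with upper-left block $\bar P_F$, lower-left block $r_F$, and absorbing bottom-right $1$, a short induction yields $P_F^{\,t}\mu = (\bar P_F^{\,t}\bar\mu;\,c_t)^\top$ with $c_t$ the mass already absorbed at $s^f$ by step $t$. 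The graded structure forces $c_t = 0$ for every $t < T$: at step $t$ the non-$s^f$ mass lives in $\mathcal{S}_t$, whereas $r_F$ is supported on $\mathcal{S}_{T-1}$, so the absorbing increment $r_F\bar P_F^{\,t-1}\bar\mu$ vanishes whenever $t\neq T$. Hence the $s^f$-entry of $d_{F,\mu}$ is $0$ and its non-$s^f$ part reduces to $\frac{1}{T}\sum_{t=0}^{T-1}\bar P_F^{\,t}\bar\mu$.

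Second, I would apply Lemma~\ref{np-order} to the sub-graph $\bar{\mathcal{G}}_F$, which is itself a DAG with exactly $T$ topological layers $\mathcal{S}_0,\ldots,\mathcal{S}_{T-1}$. The lemma gives $\bar P_F^{\,t} = 0$ for all $t\geq T$, so the truncated partial sum coincides with the full series $\sum_{t=0}^{\infty}\bar P_F^{\,t}$. The telescoping identity $(I-\bar P_F)(I+\bar P_F+\cdots+\bar P_F^{\,T-1}) = I-\bar P_F^{\,T} = I$ simultaneously certifies invertibility of $I-\bar P_F$ and evaluates the series to $(I-\bar P_F)^{-1}$, giving $\bar d_{F,\mu}=\frac{1}{T}(I-\bar P_F)^{-1}\bar\mu$ as claimed.

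The backward identity follows by a symmetric argument on $\mathcal{G}_B$, modulo a reindexing shift. Since $\rho=(0;\bar\rho)^\top$ is placed at time $T-1$ and propagates backward, substituting $k=T-1-t$ converts the sum in $d_{B,\rho}$ into $\frac{1}{T-1}\sum_{k=0}^{T-2}\bar P_B^{\,k}\bar\rho$. The reverse sub-graph $\bar{\mathcal{G}}_B$ carries $T-1$ topological layers $\mathcal{S}_{T-1},\ldots,\mathcal{S}_1$, so Lemma~\ref{np-order} delivers $\bar P_B^{\,T-1}=0$, the series collapses to $(I-\bar P_B)^{-1}\bar\rho$, and the $s^0$-component of $d_{B,\rho}$ vanishes because $s^0$ is visited only at step $0$, which is excluded from the summation range.

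The only step requiring genuine care is the bookkeeping of topological layers in each truncated sub-graph: it is precisely the match between $T$ layers for $\bar{\mathcal{G}}_F$ and the truncation at $t=T-1$, and between $T-1$ layers for $\bar{\mathcal{G}}_B$ and the truncation at $t=T-2$, that lets the partial sums collapse to the full Neumann inverses in the statement. Once that alignment is spelled out, everything else is a routine matrix computation.
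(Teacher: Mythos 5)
Your proof is correct and follows essentially the same route as the paper's: the block decomposition of the enhanced transition matrix, nilpotency of the sub-DAG block (via Lemma~\ref{np-order}) to identify the truncated sum with the Neumann series $(I-\bar{P}_F)^{-1}$, and the graded structure to show the absorbing-state component vanishes. The only cosmetic difference is that you track the absorbed mass $c_t$ explicitly through the support of $r_F$, whereas the paper invokes the fundamental-matrix interpretation of absorbing chains to reach the same conclusion.
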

\begin{proof}
We first prove the result for the forward case. By the nature of Markov Chains, $P_{F,\mu}(s_t=s^i)=[(P_F)^t\mu]_i$, and $d_{F,\mu}=\frac{1}{T}\sum_{t=0}^{T-1}(P_F)^t\mu$. Then, it can be easily verified~\citep{grinstead2006introduction} that:
\begin{equation}
(P_F)^t=\left(\begin{matrix}(\bar{P}_F)^t&\mathbf{0}\\\ast&1\\\end{matrix}\right),
\end{equation}
where the explicit expression of the upper right corner is omitted. By Theorem \ref{DAG}, $\bar{P}_F$ is a nilpotent matrix and by Lemma \ref{np-order}, $\sum_{t=0}^{T-1}(\bar{P}_F)^t =\sum_{t=0}^{\infty}(\bar{P}_F)^t=(I-\bar{P}_F)^{-1}$, where the first equality follows from the fact that $\Bar{\mathcal{G}}_F$ has $T$ topological orders, and the second equality is by the fact that $(I-\bar{P}_F)\sum_{t=0}^{\infty}(\bar{P}_F)^t=\sum_{t=0}^{\infty}(\bar{P}_F)^t-\sum_{t=1}^{\infty}(\bar{P}_F)^t=I$. Therefore,
\begin{align}
d_{F,\mu}=\frac{1}{T}\left(\begin{matrix}\sum_{t=0}^{T-1} {\bar{P}_F}^t&\mathbf{0}\\\ast&1\\\end{matrix}\right)\mu=\frac{1}{T}\left(\begin{matrix}(I-\bar{P}_F)^{-1}\bar{\mu}\\  \ast\bar{\mu} \end{matrix}\right).
\end{align}
By Theorem 11.4 in~\citet{grinstead2006introduction}, $[(I-\bar{P}_F)^{-1}]_{j,i}$ is the expected number of times the chain is in state $s^j$, starting from $s^i$, before being absorbed in $s^f$. And $[(I-\bar{P}_F)^{-1}\bar{\mu}]_j$ is the expected number of times the chain is in state $s^j$ before being absorbed. Since  $\forall s\notin \mathcal{S}_0:\, \mu(s)=0$ and  $\bar{\mathcal{G}}_F$ is graded,  any forward trajectory over sub-graph $\bar{\mathcal{G}}_F$ must start from $s\in\mathcal{S}_0$ and end in  $s\in \mathcal{S}_{T-1}$, meaning $\sum_j [(I-\bar{P}_F)^{-1}\bar{\mu}]_j=T$.   Thus,  $\frac{1}{T}[(I-\bar{P}_F)^{-1}\bar{\mu}]_j$ denotes the fraction of staying in transient state  $s^j$  before being absorbed, that is, the probability observing state $s^j$ within $T$ time steps. By the same reasoning, we can conclude that $ \ast\bar{\mu}=0$ as $s^f$ can not be reached within $T$ time steps.

For backward case, any backward trajectory over sub-graph $\bar{\mathcal{G}}_B$ must start from $s\in\mathcal{S}_{T-1}$ and end in  $s\in \mathcal{S}_{1}$ as $\forall s\notin \mathcal{S}_{T-1}: \rho(s)=0$ and  $\bar{\mathcal{G}}_B$ is graded. Then, a proof procedure for the desired result can be derived similarly, so it is omitted. 
\end{proof}

\begin{lemma}\label{d_dist} For two forward policy, $\pi_F$ and $\pi_F^\prime$, and two backward policy, $\pi_B$ and $\pi_B^\prime$, we have:
\begin{align}
        D_{TV}(d_{F,\mu}^\prime(\cdot),d_{F,\mu}(\cdot))\le  D_{TV}^{d_{F,\mu}^\prime}(\pi_F^\prime(s,\cdot),\pi_F(s,\cdot)), \nonumber\\     D_{TV}(d_{B,\rho}^\prime(\cdot),d_{B,\rho}(\cdot))\le  D_{TV}^{d_{B,\rho}^\prime}(\pi_B^\prime(s,\cdot),\pi_B(s,\cdot)),
\end{align}
where for three arbitrary distributions $p$,$q$ and $u$, $D_{TV}(p(\cdot),q(\cdot)):=\frac{1}{2}\left\|p(\cdot)-q(\cdot)\right\|_1$ and $D_{TV}^u(p(\cdot|s),q(\cdot|s)):=\frac{1}{2}\mathbb{E}_{u(s)}\left[\left\|p(\cdot|s)-q(\cdot|s)\right\|_1\right]$.
\end{lemma}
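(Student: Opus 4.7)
The plan is to reduce the TV distance between $d'_{F,\mu}$ and $d_{F,\mu}$ to a $1$-norm bound on a matrix-vector product via the closed-form expressions for $\bar{d}_{F,\mu}$ and $\bar{d}'_{F,\mu}$ obtained in the preceding lemma, and then to extract the expected column-wise policy TV distance from that bound.

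First I would apply the resolvent identity $N^{-1} - M^{-1} = M^{-1}(M-N)N^{-1}$ with $M = I - \bar{P}_F$ and $N = I - \bar{P}'_F$ to obtain
\begin{equation*}
\bar{d}'_{F,\mu} - \bar{d}_{F,\mu} = \tfrac{1}{T}\!\left[(I - \bar{P}'_F)^{-1} - (I - \bar{P}_F)^{-1}\right]\bar{\mu} = (I - \bar{P}_F)^{-1}(\bar{P}'_F - \bar{P}_F)\,\bar{d}'_{F,\mu},
\end{equation*}
where the last equality uses $(I - \bar{P}'_F)^{-1}\bar{\mu} = T\bar{d}'_{F,\mu}$. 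Setting $h := (\bar{P}'_F - \bar{P}_F)\bar{d}'_{F,\mu}$, the key structural observation is that each column of $\bar{P}'_F - \bar{P}_F$ sums to zero: for every non-terminal $s^j$, both $\pi_F(s^j,\cdot)$ and $\pi'_F(s^j,\cdot)$ are probability distributions on $Ch_{\mathcal{G}}(s^j)$. Hence $\|[\bar{P}'_F - \bar{P}_F]_{:,j}\|_1 = 2\,D_{TV}(\pi'_F(s^j,\cdot), \pi_F(s^j,\cdot))$, and taking a weighted sum against $\bar{d}'_{F,\mu}(s^j)$ yields $\|h\|_1 \le 2\,D_{TV}^{d'_{F,\mu}}(\pi'_F, \pi_F)$, while $\mathbf{1}^{\top}h = 0$ by the same observation.

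The remaining and central step is to control $\|(I - \bar{P}_F)^{-1}h\|_1$. Here I would expand $(I - \bar{P}_F)^{-1} = \sum_{t=0}^{T-1}\bar{P}_F^{\,t}$ using the nilpotency result of Lemma~\ref{np-order} and exploit the graded DAG structure. Splitting $h = h^+ - h^-$ into its positive and negative parts, the zero-sum property gives $\|h^+\|_1 = \|h^-\|_1$; since $\bar{P}_F$ advances mass strictly one stratum per application, I would couple the two absorbing chains started from the normalized $h^+$ and $h^-$ stratum-by-stratum so that coalesced mass cancels in the occupancy measure. This would give the non-expansion $\|(I-\bar{P}_F)^{-1}h\|_1 \le \|h\|_1$ and the claimed inequality $2\,D_{TV}(d'_{F,\mu}, d_{F,\mu}) \le 2\,D_{TV}^{d'_{F,\mu}}(\pi'_F, \pi_F)$. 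The backward inequality follows by running the same argument on the reverse absorbing chain through $\bar{d}_{B,\rho} = \tfrac{1}{T-1}(I - \bar{P}_B)^{-1}\bar{\rho}$.

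The main obstacle is precisely this last contraction step. A naive operator-norm bound $\|(I-\bar{P}_F)^{-1}\|_{1\to 1}$ equals the maximum expected absorption time and can be as large as $T$, so it would introduce a spurious horizon factor. Obtaining the constant claimed in the lemma requires using both the zero-sum structure of $h$ and the graded nature of the DAG, so that the coupling argument can match positive and negative mass flow level by level and prevent any accumulation along the chain.
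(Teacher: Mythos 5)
Your setup follows the paper's own route: the resolvent identity yielding $\bar{d}'_{F,\mu}-\bar{d}_{F,\mu}=(I-\bar{P}_F)^{-1}(\bar{P}'_F-\bar{P}_F)\,\bar{d}'_{F,\mu}$, and the coordinatewise expansion showing $\bigl\|(\bar{P}'_F-\bar{P}_F)\bar{d}'_{F,\mu}\bigr\|_1\le \mathbb{E}_{d'_{F,\mu}(s)}\bigl[\|\pi'_F(s,\cdot)-\pi_F(s,\cdot)\|_1\bigr]$ are exactly the two ingredients of the paper's argument. The divergence is in the step you yourself flag as the main obstacle, and there the proposal has a genuine gap: the non-expansion $\bigl\|(I-\bar{P}_F)^{-1}h\bigr\|_1\le\|h\|_1$ for zero-sum $h$ is false in general, and the level-by-level coupling cannot establish it. Decompose $h=\sum_t h_t$ by stratum; each $h_t$ sums to zero, and $(I-\bar{P}_F)^{-1}h_t=\sum_{k\ge 0}\bar{P}_F^{\,k}h_t$ with the summands supported on \emph{disjoint} levels $t+k$, so $\bigl\|\sum_k\bar{P}_F^{\,k}h_t\bigr\|_1=\sum_k\bigl\|\bar{P}_F^{\,k}h_t\bigr\|_1$: there is no cancellation \emph{across} powers, only (possibly) within each one. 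If the positive and negative parts of $h_t$ sit on two states whose descendant sets are disjoint deterministic paths --- perfectly consistent with a graded DAG --- then no coalescence ever occurs, $\bigl\|\bar{P}_F^{\,k}h_t\bigr\|_1=\|h_t\|_1$ for every $k$ until absorption, and the sum equals $(T-t)\|h_t\|_1$. The zero-sum property and gradedness therefore do not deliver the contraction; the horizon factor you were trying to avoid reappears.

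For comparison, the paper does not attempt any such contraction: it carries the prefactor $\tfrac{1}{T}$ coming from $\bar{d}_{F,\mu}=\tfrac{1}{T}\bar{N}_F\bar{\mu}$ through the computation and cancels it against the crude operator bound $\|\bar{N}_F\|_1\le\sum_{t=0}^{T-1}\|\bar{P}_F^{\,t}\|_1\le T$, never using the zero-sum structure of $h$ at this stage. You should scrutinize that bookkeeping yourself --- since $\bar{N}_F\Delta\bar{N}'_F\bar{\mu}=T\,\bar{N}_F\Delta\bar{d}'_{F,\mu}$, the $\tfrac{1}{T}$ appears to be spent once converting $\bar{N}'_F\bar{\mu}$ into $\bar{d}'_{F,\mu}$ and again absorbing $\|\bar{N}_F\|_1$, and your own two-disjoint-paths example (change the policy only at $s^0$) is worth testing against the stated inequality. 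But as submitted, your proof does not go through: its central step is asserted, not proved, and the sketched coupling argument fails on the configuration above.
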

\begin{proof}
The proof procedure follows that of Lemma 3 in~\citet{achiam2017constrained}.
For two forward policy $\pi_F$ and $\pi_F^\prime$, let $\bar{N}_F:=(I-\bar{P}_F)^{-1}$ and $\bar{N}_F^{\prime}:=(I-\bar{P}_F^\prime)^{-1}$. Then, 
\begin{equation}
     \Delta:=\bar{P}_F-\bar{P}_F^\prime=(\bar{N}_F^{\prime})^{-1}-{\bar{N}_F}^{-1},
\end{equation}
and
\begin{equation}
\bar{N}_F-\bar{N}_F^\prime=\bar{N}_F\Delta\bar{N}_F^\prime.
\end{equation}
Then,
\begin{align}
 \left\|d_{F,\mu}-d_{F,\mu}^\prime\right\|_1&= \left\|\bar{d}_{F,\mu}-\bar{d}_{F,\mu}^\prime\right\|_1\nonumber\\&=\frac{1}{T}\left\|( \bar{N}_F-\bar{N}_F^\prime)\bar{\mu}\right\|_1=\frac{1}{T}\left\|\bar{N}_F\Delta \bar{d}_{F,\mu}^\prime\right\|_1 \nonumber\\
&\leq \frac{1}{T}\left\|\bar{N}_F\right\|_1\left\|\Delta \bar{d}_{F,\mu}^\prime\right\|_1\leq\frac{1}{T}\left(\sum_{t=0}^{T-1}\left\|P_F^t\right\|_1\right)\left\|\Delta \bar{d}_{F,\mu}^\prime\right\|_1\nonumber\\
&\le\left\|\Delta \bar{d}_{F,\mu}^\prime\right\|_1=\left\|(\bar{P}_F-\bar{P}_F^\prime)\bar{d}_{F,\mu}^\prime\right\|_1.
\end{align}
Therefore, we have
\begin{align}
\left\|d_{F,\mu}^\prime-d_{F,\mu}\right\|_1&\leq\left\|(\bar{P}_F^\prime-\bar{P}_F)\bar{d}_{F,\mu}^\prime\right\|_1\nonumber\\&\leq\left\|(\bar{P}_F^\prime-\bar{P}_F)\bar{d}_{F,\mu}^\prime\right\|_1+\left|(r_F^\prime-r_F)\bar{d}_{F,\mu}^\prime\right|=\left\|(P_F^\prime-P_F)d_{F,\mu}^\prime\right\|_1\nonumber\\&=\sum_s\left|\sum_{s^\prime}\left(P_F^\prime(s'|s)-P_F(s'|s)\right)d_{F,\mu}^\prime(s)\right|\nonumber\\ &\le\sum_{s,s'}\left|P_F^\prime(s'|s)-P_F(s'|s)\right|d_{F,\mu}^\prime(s)\nonumber\\&=\sum_{s,a}\left|\pi_F^\prime(s,a)-\pi_F(s,a)\right|d_{F,\mu}^\prime(s)
=\mathbb{E}_{d_{F,\mu}^\prime(s)}\left[\left\|\pi_F^\prime(s,\cdot)-\pi_F(s,\cdot)\right\|_1\right].
\end{align}

The result for backward policies can be derived analogously and is omitted here.
\end{proof}

\subsection{Derivation of Gradients}\label{policy_gradient}
\begin{proposition}
 The gradients of  $J_F(\theta)$ and $J_B(\phi)$ w.r.t. $\theta$ and $\phi$ can be written as:
\begin{align}
\nabla_{\theta}J_F(\theta)&=T\,\mathbb{E}_{d_{F,\mu}(s)\pi_F(s,a)}\left[Q_F(s,a)\nabla_{\theta} \log \pi_F(s,a;\theta)\right]+\mathbb{E}_{\mu\left(s_0\right)}[V_F(s_0)\nabla_{\theta}\log  \mu(s_0;\theta)]\nonumber\\&=T\,\mathbb{E}_{d_{F,\mu}(s)\pi_F(s,a)}\left[A_F(s,a)\nabla_{\theta} \log \pi_F(s,a;\theta)\right]+\mathbb{E}_{\mu\left(s_0\right)}[V_F(s_0)\nabla_{\theta}\log  \mu(s_0;\theta)],\nonumber\\
\nabla_{\phi}J_B(\phi)&=(T-1)\, \mathbb{E}_{d_{B,\rho}(s)\pi_B(s,a)}\left[Q_B(s,a)\nabla_{\phi} \log \pi_B(s,a;\phi)\right]\nonumber\\&=(T-1)\, \mathbb{E}_{d_{B,\rho}(s)\pi_B(s,a)}\left[A_B(s,a)\nabla_{\phi} \log \pi_B(s,a;\phi)\right].
\end{align}
\end{proposition}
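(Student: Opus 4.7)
The plan is to derive both identities by writing each objective as a trajectory-level expectation, applying the REINFORCE identity of Lemma~\ref{reforce-trick} once, and then collapsing the resulting double sum with the standard policy-gradient causality argument.

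For the forward case, I would start from $J_F(\theta)=\mathbb{E}_{P_{F,\mu}(\tau;\theta)}\!\big[\sum_{t=0}^{T-1} R_F(s_t,a_t;\theta)\big]$, which holds by unrolling the Bellman recursion for $V_F$ and integrating over $\mu$. Applying Lemma~\ref{reforce-trick} with an integrand that itself depends on $\theta$ produces a score-function term plus a reward-gradient correction:
\begin{equation*}
\nabla_\theta J_F = \mathbb{E}_{P_{F,\mu}(\tau)}\!\Big[\Big(\sum_t R_F(s_t,a_t)\Big)\nabla_\theta \log P_{F,\mu}(\tau;\theta)\Big] + \mathbb{E}_{P_{F,\mu}(\tau)}\!\Big[\sum_t \nabla_\theta R_F(s_t,a_t;\theta)\Big].
\end{equation*}
Using $\nabla \log P_{F,\mu}(\tau;\theta)=\nabla\log\mu(s_0;\theta)+\sum_{t'}\nabla\log\pi_F(s_{t'},a_{t'};\theta)$, I would invoke causality --- $\mathbb{E}[R_F(s_t,a_t)\nabla\log\pi_F(s_{t'},a_{t'})]=0$ whenever $t'>t$, by conditioning on $\tau_{\le t}$ and applying Lemma~\ref{reforce-trick} on the inner expectation over $\pi_F(s_{t'},\cdot)$ --- to collapse the double sum to $\sum_{t'}\mathbb{E}[Q_F(s_{t'},a_{t'})\nabla\log\pi_F(s_{t'},a_{t'})]$, plus the initial-state contribution $\mathbb{E}[V_F(s_0)\nabla\log\mu(s_0;\theta)]$ (since the conditional expectation of $\sum_t R_F(s_t,a_t)$ given $s_0$ is $V_F(s_0)$). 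Aggregating the sum over $t'$ using $d_{F,\mu}(s)=\tfrac{1}{T}\sum_{t'}P_{F,\mu}(s_{t'}=s)$ produces the prefactor $T$.

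The reward-gradient correction is the main obstacle. Writing $\nabla_\theta R_F=\nabla_\theta\log\pi_F-\nabla_\theta\log\pi_B$, the $\pi_F$ piece vanishes in expectation under $\pi_F$ by Lemma~\ref{reforce-trick}; the $\pi_B$ piece would only be nonzero at the terminal edge where $\pi_B(s^f,a)=R(x)/Z$, and the clamping convention used throughout the paper (the same one that makes Proposition~\ref{TB-equivalence} consistent) treats this implicit $Z$ as constant for the purpose of $\nabla\log\pi_B$, so the piece is zero. What remains is precisely the first stated identity, and to convert $Q_F$ to $A_F$ I would apply the baseline identity $\mathbb{E}_{\pi_F(s,\cdot)}[V_F(s)\nabla\log\pi_F(s,a;\theta)]=V_F(s)\cdot \nabla_\theta 1=0$, which can be subtracted inside the expectation without changing its value.

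The backward identity follows the same template with three simplifications that make it strictly easier. First, $\rho$ does not depend on $\phi$, so the analogue of the $\log\mu$ term drops out. Second, $\pi_F$ inside $R_B$ is $\phi$-independent, so $\nabla_\phi R_B=\nabla_\phi\log\pi_B$, whose expectation under $\pi_B(s',\cdot)$ is $0$ by Lemma~\ref{reforce-trick}, eliminating the reward-gradient correction outright. Third, the sum index runs over $l\in\{1,\ldots,T-1\}$ and causality now uses the backward-chain time ordering (action $a_{l''}$ with smaller $l''$ is taken \emph{later} in backward time, so $\mathbb{E}[R_B(s_l,a_l)\nabla\log\pi_B(s_{l''},a_{l''})]=0$ when $l''<l$), yielding the prefactor $T-1$ after aggregation with $d_{B,\rho}$. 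The $Q_B\to A_B$ step is identical to the forward case via the baseline identity.
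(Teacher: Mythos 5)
Your proof is correct, but it takes a genuinely different route from the paper's. You prove the result at the trajectory level: write $J_F=\mathbb{E}_{P_{F,\mu}(\tau;\theta)}[\sum_t R_F(s_t,a_t;\theta)]$, apply the likelihood-ratio identity once to the whole trajectory, and then collapse the resulting double sum via the causality argument ($\mathbb{E}[R_F(s_t,a_t)\nabla\log\pi_F(s_{t'},a_{t'})]=0$ for $t'>t$), dispatching the reward-gradient correction $\mathbb{E}[\sum_t\nabla_\theta R_F]$ globally in one step. The paper instead proves it by recursively unrolling the Bellman recursion for $\nabla_\theta V_F(s_t;\theta)$: at each stage it peels off one transition, shows the local $\nabla_\theta R_F$ term vanishes by Lemma~\ref{reforce-trick}, and accumulates one $Q_F\nabla\log\pi_F$ term, so that causality is baked into the recursion rather than invoked explicitly. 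The two arguments are the two standard proofs of the policy gradient theorem adapted to policy-dependent rewards and reach identical expressions; your version makes the total reward-gradient correction visible as a single object (and you correctly identify the one place it could fail to vanish --- the terminal edge where $\pi_B(s^f,a)=R(x)/Z$ --- resolving it by the same clamping convention the paper uses to route all $Z$-dependence through $\mu$), while the paper's recursion avoids ever stating the causality lemma but obscures where the policy-dependence of the reward is being discharged. Your handling of the backward case, including the reversed time ordering and the $T-1$ prefactor from $d_{B,\rho}$, matches the paper's claim.
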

\begin{remark}
This result implies that an estimated value function, which may differ from the exact one, does not lead to biased gradient estimation.\label{unbias-V}
\end{remark}
\begin{proof}
\begin{align}
{\nabla}_{\theta}J_{F}(\theta)&=
\mathbb{E}_{\mu\left(s_0\right)}[V_F(s_0)\nabla_{\theta}\log  \mu(s_0;\theta)]+\underbrace{\mathbb{E}_{\mu(s_0)}[{\nabla}_{\theta}V_F(s_0;\theta)]}_{(1)}\nonumber\\
&\overset{(1)}{=}\mathbb{E}_{P_{F,\mu}(s_0)}\left[{\nabla}_{\theta}\mathbb{E}_{\pi_F(s_0,a_0;\theta)}[Q_F(s_0,a_0;\theta)]\right]\nonumber\\
&=\mathbb{E}_{P_{F,\mu}(s_0)}\left[\mathbb{E}_{\pi_F(s_0,a_0)}[Q_F(s_0,a_0)\nabla_{\theta}\log\pi_F(s_0,a_0;\theta)+\nabla_{\theta}Q_F(s_0,a_0;\theta)]\right]\nonumber\\
&=\mathbb{E}_{P_{F,\mu}(s_0\rightarrow s_1)}\left[Q_F(s_0,a_0)\nabla_{\theta}\log\pi_F(s_0,a_0;\theta)\right]+\underbrace{\mathbb{E}_{P_{F,\mu}(s_0\rightarrow s_1)}\left[\nabla_{\theta}R_F(s_0,a_0;\theta)+\nabla_{\theta} V_{F}(s_1;\theta)\right]}_{(2)}\nonumber\\
&\overset{(2)}{=}\underbrace{\mathbb{E}_{P_{F,\mu}(s_0\rightarrow s_1)}\left[\nabla_\theta \log\frac{\pi_F(s_0,a_0;\theta)}{\pi_{B}(s_1,a_0)} \right]}_{(3)}+ \mathbb{E}_{P_{F,\mu}(s_1)}\left[\nabla_\theta V_F(s_1;\theta) \right]\nonumber\\
&\overset{(3)}{=}\mathbb{E}_{P_{F,\mu}(s_0)}\big[\underbrace{\mathbb{E}_{\pi_F(s_0,a_0)}[1\cdot \nabla_\theta \log\pi_F(s_0,a_0;\theta) ]}_{=0\text{ by Lemma~\ref{reforce-trick}}}\big].
\end{align}
Therefore,
\begin{align}
\mathbb{E}_{P_{F,\mu}(s_0)}[{\nabla}_{\theta}V_F(s_0;\theta)]\overset{(1)}{=}\mathbb{E}_{P_{F,\mu}(s_0\rightarrow s_1)}\left[Q_F(s_0,a_0)\nabla_{\theta}\log\pi_F(s_0,a_0;\theta)\right]+\mathbb{E}_{P_{F,\mu}(s_1)}\left[\nabla_\theta V_F(s_1;\theta) \right].
\end{align}
Keep doing the process,  we have 
\begin{equation}
\mathbb{E}_{P_{F,\mu}(s_t)}[{\nabla}_{\theta}V_F(s_t;\theta)]=\mathbb{E}_{P_{F,\mu}(s_t\rightarrow s_{t+1})}\left[Q_F(s_t,a_t)\nabla_{\theta}\log\pi_F(s_t,a_t;\theta)\right]+\mathbb{E}_{P_{F,\mu}(s_{t+1})}\big[\nabla_\theta \underbrace{V_F(s_{t+1};\theta)}_{V_F(s_T)=0} \big].
\end{equation}
Then,
\begin{align}
\overset{(1)}{=}\mathbb{E}_{P_{F,\mu}(\tau)}\left[\sum_{t=0}^{T-1}Q_F(s_t,a_t)\nabla_{\theta}\log\pi_F(s_t,a_t;\theta)\right]=\mathbb{E}_{d_{F,\mu}(s)\pi_F(s,a)}\left[Q_F(s,a)\nabla_{\theta}\log\pi_F(s,a;\theta)\right].
\end{align}
Besides,
\begin{align}
&\overset{(1)}{=}\mathbb{E}_{d_{F,\mu}(s)\pi_F(s,a)}\left[Q_F(s,a)\nabla_{\theta}\log\pi_F(s,a;\theta)\right]-\mathbb{E}_{d_{F,\mu}(s)}\big[V_F(s)\underbrace{\mathbb{E}_{\pi_F(s,a)}[\nabla_{\theta}\log\pi_F(s,a;\theta)}_{=0}]\big]\nonumber\\
    &=\mathbb{E}_{d_{F,\mu}(s)\pi_F(s,a)}\left[A_F(s,a)\nabla_{\theta}\log\pi_F(s,a;\theta)\right].
\end{align}
The derivation of $\nabla_{\phi}J_B(\phi)$ follows the similar way to  $\nabla_{\theta}J_F(\theta)$, and is omitted here.
\end{proof}

\subsection{Connection of Policy-based Training to TB-based Training}\label{relation-TB-RL}
The gradient of the TB objective w.r.t. $\theta_F$ can be written as:
\begin{align}
&\frac{1}{2}\nabla_{\theta_F}\mathcal{L}_{TB}(P_{F,\mu};\theta_F)=\sum_{t=1}^{T}\mathbb{E}_{P_{F,\mu}(\tau)}\left[\nabla_{\theta_{F}}\log P_F(s_{t}|s_{t-1};\theta_F)\left(\sum_{t=1}^{T}\log\frac{P_F(s_t|s_{t-1})}{\widetilde{P}_B(s_{t-1}|s_t)}\right)\right].\label{TB-sum}\end{align}
In equation~\eqref{TB-sum}, each term for $t>1$ can be expanded as:

\begin{align}
&\mathbb{E}_{P_{F,\mu}(\tau_{\geq t-1})}\left[\nabla_{\theta_{F}}\log P_F(s_{t}|s_{t-1};\theta_F)\left(\sum_{l=t}^{T}\log\frac{P_F(s_{l}|s_{l-1})}{\widetilde{P}_B(s_{l-1}|s_{l})}\right)\right]\nonumber\\&+\mathbb{E}_{P_{F,\mu}(\tau_{\leq t})}\left[\nabla_{\theta_{F}}\log P_F(s_{t}|s_{t-1};\theta_F)\left(\sum_{l=1}^{t-1}\log\frac{P_F(s_{l}|s_{l-1})}{\widetilde{P}_B(s_{l-1}|s_{l})}\right)\right]\nonumber\\=&\mathbb{E}_{P_{F,\mu}(\tau_{\geq t-1})}\left[\nabla_{\theta_{F}}\log P_F(s_{t}|s_{t-1};\theta_F)\left(\sum_{l=t}^{T}\log\frac{P_F(s_{l}|s_{l-1})}{\widetilde{P}_B(s_{l-1}|s_{l})}\right)\right]\nonumber\\&+\mathbb{E}_{P_{F,\mu}(\tau_{\leq t-1})}\left[\left(\sum_{l=1}^{t-1}\log\frac{P_F(s_{l}|s_{l-1})}{\widetilde{P}_B(s_{l-1}|s_l)}\right)\underbrace{\mathbb{E}_{P_{F,\mu}(s_t|s_{t-1})}[\nabla_{\theta_{F}}\log P_F(s_{t}|s_{t-1};\theta_F)]}_{=0 \text{ by Lemma \ref{reforce-trick}}}\right].
\end{align}
Thus, 
\begin{align}
    \frac{1}{2}\nabla_{\theta_F}\mathcal{L}_{TB}(P_{F,\mu};\theta_F)&=\sum_{t=1}^{T}\mathbb{E}_{P_{F,\mu}(\tau)}\left[\nabla_{\theta_{F}}\log P_F(s_{t}|s_{t-1};\theta_F)\left(\sum_{l=t}^{T}\log\frac{P_F(s_{l}|s_{l-1})}{\widetilde{P}_B(s_{l-1}|s_{l})}\right)\right]\nonumber\\&-C\sum_{t=0}^{T-1}\underbrace{\mathbb{E}_{P_{F,\mu}(\tau)}\left[\nabla_{\theta_F}\log P_F(s_t,a_t;\theta_F)\right]}_{=0 \text{ by Lemma~\ref{reforce-trick}}}\nonumber\\&=\mathbb{E}_{P_{F,\mu}(\tau)}\left[\sum_{t=1}^{T}\nabla_{\theta_{F}}\log P_F(s_{t}|s_{t-1};\theta_F)\left(\sum_{l=t}^{T}\log\frac{P_F(s_{l}|s_{l-1})}{\widetilde{P}_B(s_{l-1}|s_{l})}-C\right)\right], 
\end{align}
where $C$ is an added baseline and constant w.r.t. $\theta_F$ for variance reduction during gradient estimation. As shown in Appendix \ref{policy_gradient}, the gradient of $J_F$ w.r.t. $\theta_F$ can be written as:
\begin{align}    \nabla_{\theta_F}J_F(\theta_F)&=T\, \mathbb{E}_{d_{F,\mu}(s)\pi_F(s,a)}\left[A_F(s,a)\nabla_{\theta_F} \log\pi_F(s,a;\theta_F)\right]\nonumber\\
    &=\mathbb{E}_{P_{F,\mu}(\tau)}\left[\sum_{t=0}^{T-1}\nabla_{\theta_F}\log\pi_F(s_t,a_t;\theta_F)Q_F(s_t,a_t)\right]\nonumber\\&=\mathbb{E}_{P_{F,\mu}(\tau)}\left[\sum_{t=0}^{T-1}\nabla_{\theta_F}\log\pi_F(s_t,a_t;\theta_F)\mathbb{E}_{P_{F,\mu}(\tau_{>t+1}|s_t,a_t)}\left[\sum_{l=t}^{T-1}R_F(s_{l},a_{l})\bigg|s_t,a_t\right]\right]\nonumber\\&=\mathbb{E}_{P_{F,\mu}(\tau)}\left[\sum_{t=0}^{T-1}\nabla_{\theta_F}\log\pi_F(s_t,a_t;\theta_F)\left(\sum_{l=t}^{T-1}R_F(s_{l},a_{l})\right)\right]\nonumber\nonumber\\&=\mathbb{E}_{P_{F,\mu}(\tau)}\left[\sum_{t=0}^{T-1}\nabla_{\theta_F}\log\pi_F(s_t,a_t;\theta_F)\left(\sum_{l=t}^{T-1}R_F(s_{l},a_{l})-C\right)\right].
\end{align} 
This result implies that: when we update the forward policy by the estimation of $\nabla_{\theta_F}\mathcal{L}_{TB}(\theta_F)$ based on a batch of sampled trajectories,
we approximate $Q_F(s_t,a_t)$ empirically by 
$\widehat{Q}_F(s_t,a_t)=\sum_{l=t}^{T-1}R_F(s_{l}, a_{l})$ for each sample, and can further reduce the estimation variance by some unbiased constant baseline $C$. By comparison, the RL formulation generalizes the constant to an unbiased functional baseline $\widetilde{V}_F(s;\eta)$, which is the approximation of exact $V_F(s)$. This enables to approximate $Q_F(s_t,a_t)$ and $A_F(s_t,a_t)$ functionally by $\widehat{Q}_F(s_t,a_t)=R(s_t,a_t)+\widetilde{V}_F(s_{t+1})$ and $\widehat{A}_F(s_t,a_t)=\widehat{Q}_F(s_t,a_t)-\widetilde{V}_F(s_t)$. Here, $\widehat{A}_F(s_t,a_t)$ can further be generalized to $\sum_{l=t}^{T-1}\lambda^{l-t}\left(\widehat{Q}_F(s_l,a_l)-\widetilde{V}_F(s_l)\right)$, allowing flexible bias-variance trade-off for gradient estimation~(Appendix~\ref{updating-rules}). 

\subsection{Connection between Policy-based Training and Soft-Q-learning}\label{relation-RL-softQ}
In the following text, we discuss the relationship between our policy-based method and soft-Q-learning~\citep{haarnoja2018soft}, one of the most representative Maximum-Entropy~(MaxEnt) RL methods.

Firstly, we introduce their connection when the total flow estimator $\log Z$ is \textbf{fixed}. We can expand $-J_F$ as:
\begin{align}
    -J_F&=T\,\mathbb{E}_{d_{F,\mu}(s),\pi_F(s,a)}\left[\log\pi_B(s',a)-\log\pi_F(s,a)\right]\nonumber\\&=T\,\mathbb{E}_{d_{F,\mu}(s),\pi_F(s,a)}\left[\log \pi_B(s',a))+\mathbb{E}_{\pi_F(s,a)}[-\log\pi_F(s,a)]\right]\nonumber\\&=T\,\mathbb{E}_{d_{F,\mu}(s),\pi_F(s,a)}\left[\log \pi_B(s',a))+\mathcal{H}(\pi_F(s,\cdot))\right],
\end{align}where $a=(s\rightarrow s')$, and $\mathcal{H}$ 
 denotes the entropy of a distribution. The equation above implies that fixing the total flow estimator $\log Z$, maximizing $-J_F$ w.r.t. $\pi_F$ can be interpreted as a MaxEnt RL problem, where $\log\pi_B(s',a)$ is the \textbf{static} reward w.r.t. $\pi_F(s,a)$. We define $Q_F^S(s,a):=\mathbb{E}_{P_F(\tau_{>t+1}|s_t,a_t)}\Big[\pi_B(s_{t+1},a_t)+\sum_{l=t+1}^{T-1}\pi_B(s_{l+1},a_l)+\mathcal{H}(\pi_F(s_l,\cdot))|s_t=s,a_t=a\Big]=-Q_F(s,a)+\log\pi_F(s,a)$ and $V_F^S(s,a):=\mathbb{E}_{P_F(\tau_{>t}|s_t)}\Big[\sum_{l=t}^{T-1}\pi_B(s_{l+1},a_l)+\mathcal{H}(\pi_F(s_l,\cdot))|s_t=s\Big]=-V_F(s,a)$, which implies that 
$Q_F^S(s,a)=\log\pi_B(s^\prime,a)+V_F^S(s^\prime)$.
 We use a parameterized function $\widetilde{Q}_F^S$ as the estimator of $Q_F^S$, and define $\widetilde{V}_F^S(s):=\log\sum_a\exp\{\widetilde{Q}_F^S(s,a)\}$ as the estimator of $V_F^S$. Then, we define $\pi_F(s,a;\theta):=\frac{\exp\widetilde{Q}_F^S(s,a;\theta)}{\exp\widetilde{V}_F^S(s;\theta)}$, which implies that:
 \begin{align} \widetilde{Q}_F^S(s,a)=\widetilde{V}_F^S(s)+\log\pi_F(s,a).\label{soft-Q-eq2}
 \end{align}  
 In soft-Q-learning, we define the target function as:
 \begin{align}
\widehat{Q}_F^S(s,a):=\log\pi_B(s^\prime,a)+\widetilde{V}_F^S(s^\prime).
 \end{align}
Then, $\pi_F$ is updated by the gradient of the following objective:
\begin{align}
    \frac{T}{2}\mathbb{E}_{d_{\mathcal{D}}(s),\pi_{\mathcal{D}}(s,a)}\left[\left(\widetilde{Q}_F^S(s,a;\theta)-\widehat{Q}_F^S(s,a)\right)^2\right].\label{soft-Q-obj}
\end{align}
It has been shown that the policy gradients for static rewards plus the gradients of the policy entropy (or KL divergence from some reference policy) are equivalent to the gradients of the corresponding soft $Q$ estimator~\citep{schulman2017equivalence}. Likewise, we demonstrate our policy-based method with $\lambda=0$ is equivalent to soft-Q-learning with $\pi_\mathcal{D}$ equal to $\pi_F$, without any adaption. Noting soft-Q-learning is off-policy, the gradients of equation~\eqref{soft-Q-obj} w.r.t. $\theta$ can be written as:
\begin{align}
&\frac{T}{2}\nabla_{\theta}\mathbb{E}_{d_{F,\mu}(s)\pi_{F}(s,a)}\left[\left(\widetilde{Q}_F^S(s,a;\theta)-(\log P_B(s',a)+\widetilde{V}_F^S(s'))\right)^2\right]\nonumber\\&=T\mathbb{E}_{d_{F,\mu}(s)\pi_{F}(s,a)}\left[\nabla_{\theta}\widetilde{Q}_F^S(s,a;\theta)\left(\widetilde{Q}_F^S(s,a;\theta)-\log P_B(s',a)-\widetilde{V}_F^S(s')\right)\right]\nonumber\\&=T\mathbb{E}_{d_{F,\mu}(s)\pi_{F}(s,a)}\bigg[\nabla_{\theta}\left(\log\pi_F(s,a)+\widetilde{V}_F^S(s)\right)\left(\widetilde{V}_F^S(s)+R_F(s,a)-\widetilde{V}_F^S(s')\right)\bigg]\nonumber\\&=T\mathbb{E}_{d_{F,\mu}(s)\pi_{F}(s,a)}\bigg[\nabla_{\theta}\left(\log\pi_F(s,a)+\widetilde{V}_F^S(s)\right)\hat{\delta}_F(s,a)\bigg]\nonumber\\&=\mathbb{E}_{P_{F,\mu}(\tau)}\left[\sum_{t=0}^{T-1}\nabla_{\theta}\log\pi_F(s_t,a_t;\theta)\hat{\delta}_F(s_t,a_t)\right]+\mathbb{E}_{P_{F,\mu}(\tau)}\left[\sum_{t=0}^{T-1}\nabla_{\theta}\widetilde{V}_F^S(s_t;\theta)\hat{\delta}_F(s_t,a_t)\right], 
\end{align}
where $\hat{\delta}_F(s,a):=-\widetilde{V}_F(s)+R_F(s,a)+\widetilde{V}_F(s')$, and $\widetilde{V}_F:=-\widetilde{V}_F^S$. Compared to formula~\eqref{pg-update-obj} with $\lambda=0$, a clear equivalence can be established.

We further connect the soft-Q-learning objective~\eqref{soft-Q-obj} to the Flow Matching (FM) objective~\citep{bengio2021flow} and explain the role that $\log Z$ plays during training. 
When $\widetilde{Q}_F^S$ achieves the optimal point, we have $\widetilde{Q}_F^S=\widehat{Q}_F^S$. Consequently, for the desired flow $F$, $\widetilde{V}_F^S(s^f)=0$ by definition, $\widetilde{Q}_F^S(x,a_{T-1})=\log\pi_B(x,a_{T-1})+0=\log\frac{F(x\rightarrow s^f)}{Z}$, and  $\widetilde{V}_F^S(x)=\log\sum_a\exp Q_F^S(x,a)=\log\frac{F(x)}{Z}~(=\log \frac{R(x)}{Z})$. Accordingly, $\widetilde{Q}_F^S(s_{T-2},a_{T-2})=\widetilde{V}_F^S(x)+\log \pi_B(x,a_{T-2})=\log\frac{F(x)}{Z}+\log\frac{F(s_{T-2}\rightarrow x)}{F(x)}=\log\frac{F(s_{T-2}\rightarrow x)}{Z}$, and $\widetilde{V}_F^S(s_{T-2})=\log\sum_a\exp Q_F^S(s_{T-2},a)=\log \frac{F(s_{T-2})}{Z}$. Continuing this process, it can be verified  that  $\widetilde{Q}_F^S(s,a)=\log \frac{F(s\rightarrow s')}{Z}$ and $\widetilde{V}_F^S(s)=\log \frac{F(s)}{Z}$ when $\widetilde{Q}^S$ achieve the optimum. Based on the above optimum condition of $\widetilde{Q}^S(s,a)$ and the fact that $\widetilde{Q}^S(s,a)\in \mathbb{R}$ is a parametrized function with no assumption over its output form during training, we can safely substitute it by $F^{\log}(s\rightarrow s^\prime):=\widetilde{Q}^S(s,a)+\log Z\in\mathbb{R}$, where $F^{\log}(s\rightarrow s^\prime)$ is the estimator for the logarithm of the desired edge flow, $\log F(s\rightarrow s^\prime)$, and no assumption over its output form during training is made as well. Then, objective~\eqref{soft-Q-obj} can be equivalently rewritten as:
\begin{align}
\mathbb{E}_{P_{\mathcal{D}}(\tau)}\left[\sum_{t=1}^{T-1}\left(F^{\log}(s_{t-1}\rightarrow s_t)-\log \left(P_B(s_{t-1}|s_t)\sum_{s_{t+1}}\exp F^{\log}(s_t\rightarrow s_{t+1})\right)\right)^2\right], \label{soft-Q-flow-obj}
\end{align}
This objective is similar to the FM objective, which can be written as:
\begin{align}
    \mathbb{E}_{P_{\mathcal{D}}(\tau)}\left[\sum_{t=1}^{T-1}\left(\log\left(\sum_{s_{t-1}} \exp F^{\log}(s_{t-1}\rightarrow s_t)\right)-\log\left(\sum_{s_t} \exp F^{\log}(s_t\rightarrow s_{t+1})\right)\right)^2\right].~\label{flow-match-obj}
\end{align} The reason is that the optimal solution of the objective~\eqref{soft-Q-flow-obj} satisfies $\exp F^{\log}(s_{t-1}\rightarrow s_t)=P_B(s_{t-1}|s_t)\sum_{s_{t+1}}\exp F^{\log}(s_t\rightarrow s_{t+1})$. Taking summation over $s_{t-1}$ of this equation, we have $\sum_{s_{t-1}}\exp F^{\log}(s_{t-1}\rightarrow s_t)=\sum_{s_{t+1}}\exp F^{\log}(s_t\rightarrow s_{t+1})$, the optimal solution of the objective~\eqref{flow-match-obj}. We can see that $\log Z$ serves as a baseline for modeling $\log F$. Without $\log Z$, we need to approximate $\log F$ by $F^{\log}$ directly. This often leads to numerical issues. For example, let's suppose a small perturbation of $\log F$, denoted as $\epsilon$. Then the flow difference is $\exp(\log F+\epsilon)-F=(\exp \epsilon-1)F$. As values of $F$ can be exponentially large, especially for nodes near the root, the flow difference can be large even if  $\epsilon$ is small, making approximation to $ F$ by $\exp F^{\log}$ very difficult.
By contrast, TB-based methods and our policy-based method allow the updating of $\log Z$ to dynamically scale down the value of $\widetilde{Q}$ during training. This also complements the claims by~\citet{malkin2022trajectory}, who show that the TB-based methods is more efficient than flow-matching and DB-based methods.

\subsection{Model Parameter Updating Rules}\label{updating-rules}

In the following context, we will explain the updating rules for $P_F$ and $\mu$ within the vanilla policy-based method, also called the Actor-Critic method, and the TRPO method. The updating rules for $P_B$ follow those of $P_F$ analogously. 
\paragraph{Actor-Critic} First of all, we split the parameter $\theta$ into $\theta_F$ and $\theta_Z$ corresponding to $\pi_F$ and $Z$. Since computing the exact $V_F$ is usually intractable, we use $\widetilde{V}_F$ parametrized by $\eta$ as the functional approximation.
Given a batch of trajectories samples, we compute the sampling averaging approximation of the following gradient estimators to update $\theta_F,\theta_Z$ and $\eta$ as proposed by~\citet{Schulmanetal_ICLR2016} and ~\citet{tsitsiklis1996analysis}:
\begin{align}
&\mathbb{E}_{P_{F,\mu}(\tau)}\left[\sum_{t=0}^{T-1}\widehat{A}_F^{\lambda}(s_t,a_t)\nabla_{\theta_F}\log \pi_F(s_t,a_t;\theta_F) \right]+\mathbb{E}_{\mu(s_0)}\left[\widehat{V}_F^\lambda(s_0)\nabla_{\theta_Z}\log\mu(s_0;\theta_Z)\right],\nonumber\\
&\mathbb{E}_{P_{F,\mu}(s_t)}\left[\sum_{t=0}^{T-1}\nabla_{\eta}(\widehat{V}_F^\lambda(s_t)-\widetilde{V}_F(s_t;\eta))^2\right],\label{pg-update-obj}
\end{align}
where $\lambda\in [0,1]$, 
\begin{align}
&\widehat{A}_F^\lambda(s_t,a_t):=\sum_{l=t}^{T-1}\lambda^{l-t}\hat{\delta}_F(s_{l},a_{l}),
\quad\widehat{V}_F^\lambda(s_t):=\sum_{l=t}^{T-1}\lambda^{l-t}\hat{\delta}_F(s_{l},a_{l})+\widetilde{V}_F(s_t),\nonumber\\
&\hat{\delta}_F(s_t,a_t):=R_F(s_t,a_t)+\widetilde{V}_F(s_{t+1})-\widetilde{V}_F(s_t),
\end{align}$
\widehat{A}_F$ is called \textbf{critic} and $\pi_F$ is called \textbf{actor}. It can be verified that $\widehat{A}_F^1(s_t,a_t)=\sum_{l=t}^{T-1}R_F(s_{l},a_{l})-\widetilde{V}_F(s_t;\eta)$ renders an unbiased estimator of $\nabla_{\theta_F}J(\theta)$ as the first term is an unbiased estimation of $Q_F$ and  $\widetilde{V}_F(\cdot;\eta)$ does not introduce estimation bias (Remark~\ref{unbias-V}); 
$\widehat{A}_F^0(s_t,a_t)=R(s_t,a_t)+\widetilde{V}_F(s_{t+1})-\widetilde{V}_F(s_t)$ provided an direct functional approximation of $A_F(s_t,a_t)$, which usually render biased estimation with lower variance as $\widetilde{V}_F$ may not equal to $V_F$ exactly. Thus, $\lambda$ enables the \textbf{variance-bias trade-off} for robust gradient estimation. Likewise, $\widehat{V}_F^1(s_t)=\sum_{l=t}^{T-1}R(s_l,a_l)$ and $\widehat{V}_F^0(s_t)=R(s_t,a_t)+\widetilde{V}_F(s_{t+1})$ for each $\tau$, corresponding to unbiased and biased estimation of $V_F(s_t)$. 
Denoting the estimated gradient w.r.t. $(\theta_F$, $\theta_Z$, $\eta)$ as $(\hat{g}_F$, $\hat{g}_Z$, $\hat{g}_V)$, these parameters are updated by $(\theta_F^\prime, \theta_Z^\prime,\eta^\prime )\gets (\theta_F-\alpha_F\hat{g}_F, \theta_Z-\alpha_Z \hat{g}_Z, \eta-\alpha_V \hat{g}_V$).
\paragraph{TRPO} Parameters $\theta_Z$ and $\eta$ are updated in the same way as the actor-critic method. Parameter $\theta_F$ is updated by the linear approximation of objective~(\ref{trpo_obj}): 
\begin{align}
\min_{\theta^\prime_F}
\quad& T\,g_F^\top\big(\theta^\prime_F-\theta_F\big)\nonumber\\
\textrm{s.t.} \quad &\frac{1}{2}(\theta^\prime_F-\theta_F)^\top H_F(\theta^\prime_F-\theta_F)\leq \zeta_F,
\end{align}
with
\begin{align}
g_F=\nabla_{\theta^\prime_F}\mathbb{E}_{d_{F,\mu}(s;\theta),\pi_F(s,a;\theta^\prime_F)}\left[\widehat{A}_F^\lambda(s,a;\theta_F) \right],\quad H_F=\nabla^2_{\theta_F^\prime}D_{KL}^{d_{F,\mu}(\cdot;\theta)}\left(\pi_F(s,a;\theta_F),\pi_F(s,a;\theta^\prime_F)\right).
\end{align}
Let's denote the Lagrangian formulation of the above problem as $L(\delta,\kappa):=Tg_F^\top \delta-\kappa (\delta^\top H_F \delta-\zeta_F)$ with Lagrangian constant $\kappa$ and $\delta:=\theta_F^\prime-\theta_F$. By the optimal conditions of $L(\delta,\kappa)$, $\nabla_\kappa L(\delta,\kappa)=0$ and $\nabla_{\delta}L(\delta,\kappa)=0$, we have $\delta=\frac{1}{\kappa}H_F^{-1}g_F$ and $\kappa=\left(\frac{g_F^\top H_F^{-1}g_F}{2\zeta_F}\right)^{0.5}$. Thus, 
the maximal updating step of model parameters is: 
$\theta^\prime_F \gets \theta_F-\left(\frac{2\zeta_F}{\hat{g}_F^\top \widehat{H}_F^{-1}\hat{g}_F}\right)^{0.5}\widehat{H}_F^{-1}\hat{g}_F$. When the dimension of $\theta_F^\prime$ is high, computing  $\widehat{H}_F^{-1}$ is time-demanding. Thus, we adopt the conjugate gradient method to estimate $\widehat{H}_F^{-1}\hat{g}_F$ based on $\widehat{H}_F\hat{g}_F$~\citep{hestenes1952methods}. Besides, following~\citet{schulman2015trust}, we perform a line search of updating step size to improve performance, instead of taking the maximal step.


\section{Performance Analysis}
\begin{lemma}\label{descent}(Descent lemma~\citep{beck2017first}) Supposing $f(\cdot)$ is a $\beta$-smooth function, then for any $\theta$ and $\theta^\prime$:
\begin{align}
    f(\theta^\prime)\leq f(\theta)+\langle\nabla_{\theta}f(\theta), \theta^\prime-\theta\rangle+\frac{\beta}{2}\left\|\theta-\theta^\prime\right\|_2^2.
\end{align}
\end{lemma}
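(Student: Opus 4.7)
The plan is to apply the fundamental theorem of calculus to $f$ along the line segment from $\theta$ to $\theta'$, then exploit the gradient Lipschitz condition implicit in $\beta$-smoothness. Under the standard interpretation, $\beta$-smoothness means $\|\nabla f(u)-\nabla f(v)\|_2\le \beta\|u-v\|_2$, which is the property I will use to control the residual between $f(\theta')-f(\theta)$ and its first-order linearization at $\theta$.

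First I would parameterize the segment by $\gamma(t):=\theta+t(\theta'-\theta)$ for $t\in[0,1]$, observe that $f\circ\gamma$ is continuously differentiable on $[0,1]$ with derivative $\langle\nabla f(\gamma(t)),\theta'-\theta\rangle$, and invoke the fundamental theorem of calculus to write
\begin{equation*}
f(\theta')-f(\theta)=\int_0^1 \langle \nabla f(\gamma(t)),\,\theta'-\theta\rangle\,dt.
\end{equation*}
Subtracting the first-order term, which I can write as $\int_0^1 \langle \nabla f(\theta),\theta'-\theta\rangle\,dt$, gives
\begin{equation*}
f(\theta')-f(\theta)-\langle\nabla f(\theta),\theta'-\theta\rangle=\int_0^1 \langle \nabla f(\gamma(t))-\nabla f(\theta),\,\theta'-\theta\rangle\,dt.
\end{equation*}
Then I would apply Cauchy--Schwarz inside the integral and invoke the Lipschitz bound $\|\nabla f(\gamma(t))-\nabla f(\theta)\|_2\le \beta\|\gamma(t)-\theta\|_2=\beta t\|\theta'-\theta\|_2$, so that the integrand is dominated by $\beta t\|\theta'-\theta\|_2^2$. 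Integrating this over $t\in[0,1]$ produces exactly the $\tfrac{\beta}{2}\|\theta'-\theta\|_2^2$ upper bound stated in the lemma.

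There is no serious obstacle here, as this is a textbook inequality. The only mild subtlety worth flagging is that the paper does not explicitly define $\beta$-smoothness, so one should confirm the intended meaning is the Lipschitz-gradient condition above (for $C^2$ functions this is equivalent to having operator norm of the Hessian bounded by $\beta$); once that interpretation is fixed, the chain of steps above delivers the claim without further work.
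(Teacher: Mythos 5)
Your proof is correct: the paper itself does not prove this lemma but simply cites it as a standard result from~\citet{beck2017first}, and your argument (fundamental theorem of calculus along the segment, Cauchy--Schwarz, and the Lipschitz-gradient bound $\|\nabla f(\gamma(t))-\nabla f(\theta)\|_2\le \beta t\|\theta'-\theta\|_2$, integrated over $t\in[0,1]$) is exactly the textbook proof found in that reference. Your flag about the intended meaning of $\beta$-smoothness is also the right reading, since this Lipschitz-gradient interpretation is what the paper's Theorem~\ref{pg-stable} relies on.
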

\begin{lemma}\label{pdl} Given two forward policies $(\pi_F^{\prime}, \pi_F)$ or two backward policies $(\pi_B^{\prime}, \pi_B)$, we have 
\begin{equation}
\begin{split}
\frac{1}{T}(J_F^\prime-J_F)&\leq \mathbb{E}_{d_{F,\mu}^\prime(s), \pi_F^\prime(s,a)}[A_F(s,a)]+D_{KL}^{d_{F,\mu}^{\prime}}(\pi_F^{\prime}(s,\cdot),\pi_F(s,\cdot)),\\
\frac{1}{T-1}(J_B^\prime-J_B)&\leq \mathbb{E}_{d_{B,\rho}^\prime(s), \pi_B^\prime(s,a)}[A_B(s,a)]+D_{KL}^{d_{B,\rho}^{\prime}}(\pi_B^{\prime}(s,\cdot),\pi_B(s,\cdot)).
\end{split}
\end{equation}
\end{lemma}
\begin{proof}
The proof procedure is analogous to that of~\citet{schulman2015trust} and~\citet{rengarajan2022reinforcement}. By the definition of $A_F$, 
\begin{align}
\mathbb{E}_{P_F^\prime(\tau|s_0)}\left[\sum_{t=0}^{T-1}{A_F(s_t,a_t)}\right]&=\mathbb{E}_{P_F^\prime(\tau|s_0)}\left[\sum_{t=0}^{T-1}\big(R_F(s_t,a_t)+V_F(s_{t+1})-V_F(s_t)\big)\right]\nonumber\\&=\mathbb{E}_{P_F^\prime(\tau|s_0)}\left[\sum_{t=0}^{T-1}{R_F(s_t,a_t)}\right]+\mathbb{E}_{P_F^\prime(\tau|s_0)}[\underbrace{V_F(s_T)}_{=0}] -V_F(s_0)\nonumber\\&=\mathbb{E}_{P_F^\prime(\tau|s_0)}\left[\sum_{t=0}^{T-1}{(R_F^\prime(s_t,a_t)+R_F(s_t,a_t)-R_F^\prime(s_t,a_t))}\right] -V_F(s_0)\nonumber\\&=V_F^\prime(s_0)-V_F(s_0)+\mathbb{E}_{P_F^\prime(\tau|s_0)}\left[\sum_{t=0}^{T-1}(R_F(s_t,a_t)-R_F^\prime(s_t,a_t))\right].
\end{align}
Thus,\begin{align}
J_F^\prime-J_F&=\mathbb{E}_{P_{F,\mu}^\prime(\tau)}\left[\sum_{t=0}^{T-1}{A_F(s_t,a_t)}\right]+\mathbb{E}_{P_{F,\mu}^\prime(\tau)}\left[\sum_{t=0}^{T-1}D_{KL}(\pi_F^\prime(s_t,\cdot),\pi_F(s_t,\cdot))\right]\nonumber\\&=T\,\mathbb{E}_{d_{F,\mu}^\prime(s),\pi_F^\prime(s,a)}[A_F(s,a)]+T\,D_{KL}^{d_{F,\mu}^\prime}(\pi_F^\prime(s,\cdot),\pi_F(s,\cdot))
\end{align}
Using the fact that $V_B^\prime(s_0)=0$ and backward rewards are accumulated from $T-1$ back to $1$, the results for the backward case can be derived by a similar procedure as in the forward case, so it is omitted here.
\end{proof}

\subsection{Proof of Theorem \ref{G-F-B}}\label{proof-GFB}
\begin{proof}
Firstly,
\begin{align}
    J_F^G&=J_F+(J_F^G-J_F)\nonumber\\&=J_F+\mathbb{E}_{P_{F,\mu}(\tau)}\left[\log\frac{P_F(\tau|s_0)Z}{P_G(\tau|x)R(x)}-\log\frac{P_F(\tau|s_0)Z}{P_B(\tau|x)R(x)}\right]\nonumber\\&=J_F+\mathbb{E}_{P_{F,\mu}(\tau)}\left[\log\frac{P_B(\tau|x)}{P_G(\tau|x)}\right]+\mathbb{E}_{P_{B,\rho}(\tau)}\left[\log\frac{P_B(\tau|x)}{P_G(\tau|x)}\right]-\mathbb{E}_{P_{B,\rho}(\tau)}\left[\log\frac{P_B(\tau|x)}{P_G(\tau|x)}\right]\nonumber\\&=
    J_F+J_B^G
    +\sum_{\tau}\left(P_{F,\mu}(\tau)-P_{B,\rho}(\tau)\right)R_B^G(\tau),  
\end{align}
where $R_B^G(\tau):=\log\frac{P_B(\tau|x)}{P_G(\tau|x)}=\sum_{t=1}^{T-1}R_B^G(s_t,a_t)$. Then,
\begin{align}
    J_F^G&=J_F+J_B^G+\left\langle P_{F,\mu}(\cdot)-P_{B,\rho}(\cdot),R_B^G(\cdot)  \right\rangle\nonumber\\
    &\leq J_F+J_B^G+\left\| P_{F,\mu}(\cdot)-P_{B,\rho}(\cdot)\right\|_1\left\|R_B^G(\cdot)\right\|_{\infty} \nonumber\\    &\leq J_F+J_B^G+(T-1)\,\left\| P_{F,\mu}(\cdot)-P_{B,\rho}(\cdot)\right\|_1 R_B^{G,\max},  
\end{align}
where the first inequality holds by Hölder's inequality, and the second inequality holds by $ R_B^{G,\max}:=\max_{s,a}\big|R_B^G(s,a)\big|\geq \frac{1}{T-1}\max_{\tau}\big|R_B^G(\tau)\big|$.
By Pinsker's inequality:
\begin{equation}
\left\|P_{F,\mu}(\cdot)-P_{B,\rho}(\cdot)\right\|_1\le \sqrt{\frac{1}{2}D_{KL}(P_{F,\mu}(\tau),P_{B,\rho}(\tau))}.
\end{equation}
Besides, 
\begin{align}
        D_{KL}(P_{F,\mu}(\tau),P_{B,\rho}(\tau)) &=\mathbb{E}_{P_{F,\mu}(\tau)}\left[\log\frac{P_{F,\mu}(\tau|x)P_{F,\mu}^\top(x)}{P_B(\tau|x)P_{F,\mu}^\top(x)}\right]   \nonumber \\&\leq\mathbb{E}_{P_{F,\mu}(\tau)}\left[\log\frac{P_{F,\mu}(\tau|x)}{P_B(\tau|x)}\right]+\underbrace{\mathbb{E}_{P_{F,\mu}^\top(x)}\left[\log\frac{P_{F,\mu}^\top(x)}{R(x)/Z^*}\right]}_{\geq 0}\nonumber\\&=D_{KL}(P_{F,\mu}(\tau),P_B(\tau))\nonumber\\&= D_{KL}^\mu(P_F(\tau|s_0),P_B(\tau|s_0))+\underbrace{D_{KL}(\mu(s_0),P_B(s_0))}_{=0}\nonumber\\&=D_{KL}^{\mu}(P_F(\tau|s_0),\widetilde{P}_B(\tau|s_0))-\log Z+\log Z^*\nonumber\\&= J_F+\log Z^* -\log Z.
\end{align}
Then, we have:
\begin{equation}
    J_F^G\le J_F+J_B^G+(T-1)\, R_B^{G,\max}\sqrt{\frac{1}{2}(J_F+\log Z^\ast-\log Z)}.
\end{equation}
\end{proof}

\subsection{Proof of Theorem \ref{trpo}}\label{proof-trpo}
\begin{proof}
By Lemma \ref{pdl} and the definition of $\zeta_F$:
\begin{equation}
      \frac{1}{T} (J_F^\prime-J_F)\leq\mathbb{E}_{ d_{F,\mu}^\prime(s),\pi_F^\prime(s,a)}[A_F(s,a)]+\zeta_F.\label{proof-trpo-part1}
\end{equation}
Let $\bar{A}_F\in R^{|\mathcal{S}|}$ denote the vector components of $\mathbb{E}_{\pi_F^\prime(s,a)}[A_F(s,a)]$. Then, we have:
\begin{align}
\mathbb{E}_{d_{F,\mu}^\prime(s)\pi_F^\prime(s,a)}[A_F(s,a)]&=\left\langle d_{F,\mu}^\prime,\bar{A}_F\right\rangle\nonumber\\&=\left\langle d_{F,\mu},\bar{A}_F\right\rangle+\left\langle d_{F,\mu}^\prime-d_{F,\mu},\bar{A}_F\right\rangle\nonumber\\
&\leq  \mathbb{E}_{d_{F,\mu}(s)\pi_F^\prime(s,a)}[A_F(s,a)]+ \left\|d_{F,\mu}^\prime-d_{F,\mu}\right\|_1\left\|\bar{A}_F\right\|_{\infty},
\end{align}
where the last inequality holds by Hölder's inequality.
By Lemma \ref{d_dist} and the definition of $\epsilon_F$:
\begin{align}
     \mathbb{E}_{d_{F,\mu}^\prime(s)\pi_F^\prime(s,a)}[A_F(s,a)]&\leq   \mathbb{E}_{d_{F,\mu}(s)\pi_F^\prime(s,a)}[A_F(s,a)]+2\mathbb{E}_{d_{F,\mu}^\prime(s)}\big[D_{TV}(\pi_F^\prime(s,\cdot),\pi_F(s,\cdot))\big]\epsilon_F.
\end{align}
By Pinsker's inequality, 
\begin{equation}
    D_{TV}(\pi_F^\prime(s,\cdot),\pi_F(s,\cdot))\leq \left(\frac{1}{2}D_{KL}(\pi_F^\prime(s,\cdot),\pi_F(s,\cdot)\right)^{0.5}.\nonumber
\end{equation} 
By Jensen's inequality and the definition of $\zeta_F$, 
\begin{equation}
    \mathbb{E}_{d_{F,\mu}^\prime(s)}\left[\left(\frac{1}{2}D_{KL}(\pi_F^\prime(s,\cdot),\pi_F(s,\cdot)\right)^{0.5}\right]\leq  \left(\frac{1}{2}\mathbb{E}_{d_{F,\mu}^\prime(s)}\left[D_{KL}(\pi_F^\prime(s,\cdot),\pi_F(s,\cdot)\right]\right)^{0.5}\leq \left(\frac{\zeta_F}{2}\right)^{0.5}.\nonumber
\end{equation}
Thus, we have:
\begin{equation}
    \mathbb{E}_{d_{F,\mu}^\prime(s)\pi_F^\prime(s,a)}[A_F(s,a)]\leq   \mathbb{E}_{d_{F,\mu}(s)\pi_F^\prime(s,a)}[A_F(s,a)]+(2\zeta_F)^{0.5}\epsilon_F.\label{proof-trpo-part2}
\end{equation}
Combing inequalities \eqref{proof-trpo-part2} and \eqref{proof-trpo-part1}, we have:
\begin{equation}
     \frac{1}{T}(J_F^\prime-J_F) \le E_{d_{F,\mu}(s)\pi_F^\prime(s,a)}[A_F(s,a)]+\zeta_F+(2\zeta_F)^{0.5}\epsilon_F.
\end{equation}
\end{proof}
\subsection{Proof of Theorem \ref{pg-stable}}\label{proof-pg-stable}
\begin{proof}
By Lemma~\ref{descent}, 
\begin{align}
     J_F(\theta_{n+1})\leq J_F(\theta_n)+ \left\langle\nabla_{\theta_n}J_F(\theta_n),\theta_{n+1}-\theta_n\right\rangle+ \frac{\beta}{2}\left\|\theta_{n+1}-\theta_n\right\|^2_2. 
\end{align}
Thus, 
\begin{align}
     -\left\langle\nabla_{\theta_n}J_F(\theta_n),\theta_{n+1}-\theta_n\right\rangle&\leq J_F(\theta_n)-J_F(\theta_{n+1})+\frac{\beta}{2}\left\|\theta_{n+1}-\theta_n\right\|^2_2,\nonumber\\\alpha\big\langle\nabla_{\theta_n}J_F(\theta_n),\widehat{\nabla}_{\theta_n}J_F(\theta_n)\big\rangle&\leq J_F(\theta_n)-J_F(\theta_{n+1})+\frac{\beta\alpha^2}{2}\big\|\widehat{\nabla}_{\theta_n}J_F(\theta_n)\big\|^2_2. \nonumber
\end{align}
Conditioning on $\theta_n$, taking expectations over both sides and noting that $\mathbb{E}_{P(\cdot|\theta_n)}\left[\big\langle\nabla_{\theta_n}J_F(\theta_n),\widehat{\nabla}_{\theta_n}J_F(\theta_n)\big\rangle\right]=\big\langle\nabla_{\theta_n}J_F(\theta_n),\mathbb{E}_{P(\cdot|\theta_n)}[\widehat{\nabla}_{\theta_n}J_F(\theta_n)]\big\rangle=\left\|\nabla_{\theta_n}J_F(\theta_n)\right\|^2_2$, we have:
\begin{align} 
\alpha\left\|\nabla_{\theta_n}J_F(\theta_n)\right\|^2_2 &\leq J_F(\theta_n)-\mathbb{E}_{P(\theta_{n+1}|\theta_n)}\left[J_F(\theta_{n+1})\right]+\frac{\beta\alpha^2}{2}\mathbb{E}_{P(\cdot|\theta_n)}\left[\big\|\widehat{\nabla}_{\theta_n}J_F(\theta_n)\big\|^2_2\right]. 
\end{align}
By the assumption that $\mathbb{E}_{P(\cdot|\theta)}\left[\big\|\widehat{\nabla}_{\theta} J_F(\theta) -\nabla_{\theta}J_F(\theta)\big\|_2^2\right]=\mathbb{E}_{P(\cdot|\theta)}\left[ \big\|\widehat{\nabla}_{\theta} J_F(\theta)\big\|_2^2\right]-\left\|\nabla_{\theta}J_F(\theta)\right\|_2^2\leq \sigma_F$, we have:
\begin{align} 
\alpha\left\|\nabla_{\theta_n}J_F(\theta_n)\right\|^2_2&\leq J_F(\theta_n)-\mathbb{E}_{P(\theta_{n+1}|\theta_n)}\left[J_F(\theta_{n+1})\right]+\frac{\beta\alpha^2}{2}\left\|\nabla_{\theta_n}J_F(\theta_n)\right\|_2^2+\frac{\beta\alpha^2\sigma_F}{2}.
\end{align}
Consequently, we have:
\begin{align}
\left(\alpha-\frac{\beta\alpha^2}{2}\right)\mathbb{E}_{P(\theta_{0:N-1})} \left[\sum_{n=0}^{N-1}\left\|\nabla_{\theta_n}J_F(\theta_n)\right\|_2^2\right]&\leq \frac{N\beta\alpha^2\sigma_F}{2}+\mathbb{E}_{P(\theta_{0:N})} \left[\sum_{n=0}^{N-1} J_F(\theta_n)-J_F(\theta_{n+1})\right],\nonumber\\
\left(\alpha-\frac{\beta\alpha^2}{2}\right)\sum_{n=0}^{N-1}\mathbb{E}_{P(\theta_{n})} \left[\left\|\nabla_{\theta_n}J_F(\theta_n)\right\|_2^2\right]&\leq \frac{N\beta\alpha^2\sigma_F}{2}+\mathbb{E}_{P(\theta_{0:N})} \left[J_F(\theta_0)-J_F(\theta_N)\right],\nonumber\\
  \left(\alpha-\frac{\beta\alpha^2}{2}\right)N\min_{n\in\{0,\ldots,N-1\}}\mathbb{E}_{P(\theta_{n})} \left[\left\|\nabla_{\theta_n}J_F(\theta_n)\right\|_2^2\right] &\leq \frac{N\beta\alpha^2\sigma_F}{2}+ \mathbb{E}_{P(\theta_0)}\left[ J_F(\theta_0)\right]-\mathbb{E}_{P(\theta_N)}\left[J_F(\theta_N)\right].
\end{align}
Setting $\alpha=\sqrt{2/(\beta N)}$, we have:
\begin{align}
\left(\sqrt{(2N)/\beta}-1\right)\min_{n\in\{0,\ldots,N-1\}}\mathbb{E}_{P(\theta_{n})} \left[\left\|\nabla_{\theta_n}J_F(\theta_n)\right\|^2_2\right] &\leq \sigma_F+\mathbb{E}_{P(\theta_0)}\left[ J_F(\theta_0)\right]-\mathbb{E}_{P(\theta_N)}\left[J_F(\theta_N)\right].
\end{align}
Since $J_F(\theta)+\log Z^*-\log Z(\theta)=D_{KL}^{\mu(\cdot;\theta)}(P_F(\tau|s_0;\theta),P_B(\tau|s_0))$, and $J_F(\theta^*)=0$ with $\log Z^*=\log Z(\theta^*)$ for optimal parameter $\theta^*$, then $J_F(\theta_{N})+\log Z^*-\log Z(\theta_{N})\geq J_F(\theta^*)$ and we have:
\begin{align}
\min_{n\in \{0,\ldots,N-1\}}\mathbb{E}_{P(\theta_{n})} \left[\left\|\nabla_{\theta_n}J_F(\theta_n)\right\|_2^2 \right]&\leq \frac{\sigma_F+\mathbb{E}_{P(\theta_0)}\left[ J_F(\theta_0)\right]-\mathbb{E}_{P(\theta_N)}\left[J_F(\theta_N)+\log Z^*-\log Z(\theta_{N})\right]}{\left(\sqrt{(2N)/\beta}-1\right)}\nonumber\\&\quad+\frac{\mathbb{E}_{P(\theta_N)}\left[\log Z^*-\log Z(\theta_{N})\right]}{\left(\sqrt{(2N)/\beta}-1\right)}\nonumber\\&\leq \frac{\sigma_F+\mathbb{E}_{P(\theta_0)}\left[ J_F(\theta_0)\right]+\mathbb{E}_{P(\theta_N)}\left[\left|\log Z^*-\log Z(\theta_{N})\right|\right]}{\left(\sqrt{(2N)/\beta}-1\right)}.
\end{align}
By the assumption that $\left|\log Z-\log Z^*\right|\leq \sigma_Z$, we have:
\begin{align}
\min_{n\in \{0,\ldots,N-1\}}\mathbb{E}_{P(\theta_{n})} \left[\left\|\nabla_{\theta_n}J_F(\theta_n)\right\|_2^2\right] &\leq \frac{\sigma_F+\sigma_Z+\mathbb{E}_{P(\theta_0)}[J_F(\theta_0)]}{\left(\sqrt{(2N)/\beta}-1\right)}.
\end{align}
\end{proof}

\section{Additional Discussion about Policy-based and Value-based Methods}\label{policy-value-diff}

The goal of traditional RL is to learn a policy $\pi$ that achieves the optimality in the expected accumulated reward $J_{\pi}$ (for GFlowNet training, corresponding to the distance between 
$P_F(\tau)$ and $P_B(\tau)$, $\mathrm{Dist}(P_F(\tau),P_B(\tau))$) addressing the challenge of the exploration-exploitation (Exp-Exp) dilemma. While valued-based methods are usually off-policy allowing to explicitly balance the \textbf{Exp-Exp trade-off} by designing $P_{\mathcal{D}}$, the objectives of the valued-based methods are optimized to encourage the improvement of $J_{\pi}$ but they do not directly solve the optimization formulation with $J_{\pi}$. Policy-based methods directly optimize $J_{\pi}$ w.r.t. $\pi$, enabling optimization techniques that tackle the \textbf{Exp-Exp trade-off} implicitly but efficiently. Our joint framework manages to inherit both advantages of the value-based and the policy-based methods by keeping the optimization formulation of $J_{\pi}$ and allowing explicit design of $P_G$ as $P_{\mathcal{D}}$. We provide more detailed explanations of our arguments as follows:
\begin{itemize}
    \item The \textbf{Exp-Exp dilemma} is the main challenge in decision-making including different reinforcement learning (RL) formulations. RL is guided by reward functions. To learn the desired policies, a reinforcement learning agent must prefer actions that it has tried in the past and found to be effective in producing rewards (exploitation). But to discover such actions, it has to try actions that it has not been selected before (exploration) at the expense of an exploitation opportunity~\citep{sutton2018reinforcement}. Therefore, both policy-based and value-based methods face the fundamental challenge and try to overcome them in different ways. 
    
    \item In RL, the goal is to learn a policy $\pi$ that achieves the optimality in the expected accumulated reward $J_{\pi}$. The value-based methods, represented by Q-learning and soft-Q-learning, do not optimize $J_{\pi}$ w.r.t. $\pi$ directly. They leverage the fact that the optimal policy should satisfy the Bellman equation. By minimizing the mismatch of the Bellman equation, which typically takes an off-policy form $\mathbb{E}_{(s,a)\sim P_{\mathcal{D}}}[(Q_\pi(s,a)-\widehat{Q}_{\pi}(s,a))^2]$~\citep{haarnoja2018soft}, the corresponding $J_{\pi}$ is encouraged to be improved. The improvement, however, is not guaranteed, since they do not directly solve the optimization formulation with $J_\pi$. So the core of the value-based methods turns to explicitly design a sampler (agent), $P_{\mathcal{D}}$, that can effectively identify the state-action pair that gives rise to the mismatch the most (exploration) while allowing revisiting the state-action pair that has already been found to be effective (exploitation), and how to represent the target function $\widehat{Q}_{\pi}$ that approximates the optimal function $Q^\ast$ while balancing the trade-off properly as well. To exemplify that both exploration and exploitation are important, let's take the hyper-grid experiment as an example, where modes are highly separated but do not exactly lie on the margin of the grids. In the extreme case, a purely explorative sampler will always favor taking actions that lead to visiting the marginal coordinates, which, however, yield low rewards.
    \item Likewise, original GFlowNet methods do not optimize the distance between $P_F(\tau)$ and $P_B(\tau)$ directly (so that $P_F^\top(x)=P_B^\top(x)$), but optimizes the flow mismatch associated with, $P_F$, which implicitly encourages the minimization of the distance and the core of training efficiency is to design a sampler $P_{\mathcal{D}}$ that effectively balance the \textbf{Exp-Exp trade-off}. This motivates back-and-forth local search~\citep{kim2023local}, Thompson sampling~\citep{rector2023thompson} and  temperature conditioning~\citep{kim2023learning}. Besides, Detailed Balance (DB) objective can be understood as favoring exploitation as the `target' edge flow is $P_B(s|s^\prime)F(s^\prime)$, where $F(s^\prime)$ is learned from the data collected so far and represents our partial knowledge about the environment. The Trajectory Balance (TB) objective can be understood as favoring exploration as the target trajectory flow is $P_B(\tau|x)R(x)$ which can be fixed w.r.t. $P_F$ and regarded as pure environment feedback. So, finding better `target' flow representations that properly balance the \textbf{Exp-Exp trade-off} is one of the common motivations for sub-trajectory balance~\citep{madan2023learning}, forward-looking~\citep{pan2023better}, and energy decomposition~\citep{kim2023learning} approaches.
    \item  Policy-based methods reformulate the problem of balancing the Bellman equation into optimizing $J_{\pi}$ w.r.t. $\pi$ directly. On one hand, the nature of the policy-based methods requires to be on-policy (i.e. $P_{\mathcal{D}}=P_\pi$), so we can not explicitly design $P_{\mathcal{D}}$ to overcome the exploration-exploitation dilemma. On the other hand, it saves us from the difficulty in sampler design. The policy-based methods compute the gradients of $J_{\pi}$ w.r.t. $\pi$ and learn $\pi$ by gradient-based strategies. So the problem of designing a sampler is converted to gradient-based optimization with robust gradient estimation. Related techniques include variance reduction techniques, improvement of gradient descent directions like natural policy gradients and mirror policy descent, and conservative policy updates such as TRPO and PPO. These methods implicitly address \textbf{Exp-Exp trade-off}. Because a policy $\pi$ that always favors either exploration or exploitation will not render the maximum or minimum $J_{\pi}$ unless it is equal to the optimal policy. Besides, the intuition of conservative policy updates like TRPO is that we keep the policy unchanged to prevent it from getting trapped into local optima (exploration) unless we find a better point in the trust region (exploitation).
    \item Our joint training framework manages to inherit the advantages of both value-based methods and policy-based methods. It keeps optimizing $J_\pi$ directly and makes the associated gradient-based optimization techniques applicable. In the meanwhile, we can explicitly design guided policy as the design of $P_{\mathcal{D}}$ in the off-policy case, to integrate expert knowledge about the environment and balance the \textbf{Exp-Exp trade-off} explicitly.
\end{itemize}

\section{Additional Experimental Settings and Results}\label{experment-detail}
In all experiments, we follow a regular way of designing off-policy sampler,$P_\mathcal{D}$ for value-based methods: $P_\mathcal{D}$ is a mixture of the learned forward policy and a uniform policy where the mix-in factor of the uniform policy starts at $1.0$ and decays exponentially at a rate of $\gamma$ after each training iteration, where $\gamma$ is set to $0.99$ based on the results of the ablation study. In TB-Sub, the objective is a convex combination of the sub-trajectory balance losses following~\citet{madan2023learning}, where the hyperparameter that controls the weights assigned to sub-trajectories of different
lengths, is set to $0.9$ selected from $\{0.80,0.85,0.90,0.95,0.99\}$. For our policy-based methods, we set the value of hyper-parameter $\lambda$ to $0.99$ based on the results of the ablation study. The gradients of the total flow estimator $Z$ and the backward value estimator $\widetilde{V}_B$ that approximates $V_B$ are estimated unbiasedly, which corresponds to setting $\lambda$ specifically to 1. Trust region hyper-parameter $\zeta_F$ is set to $0.01$ selected from $\{0.01,0.02,0.03,0.04,0.05\}$.

We use the Adam optimizer for model optimization. The learning rates of forward and backward policy are equal to $1\times10^{-3}$, which is selected from $\{5\times 10^{-3},1\times10^{-3},5\times 10^{-4},1\times 10^{-4}\}$ by TB-U. The learning rates of value functions are set to $5\times10^{-3}$, which is selected from $\{1\times 10^{-2},5\times 10^{-3},1\times10^{-3}\}$ by RL-U.  The learning rates of total flow estimator is $1\times10^{-1}$, which is selected from $\{1\times 10^{-1},5\times 10^{-2},1\times 10^{-2},5\times10^{-3}\}$ by TB-U. The sample batch size is set to 128 for each optimization iteration.  For all experiments, we report the performance with five different random seeds.

\subsection{Evaluation Metrics}\label{metric-detail}
The total variation $D_{TV}$ between $P_F^\top(x)$ and $P^\ast(x)$ is defined as:
\begin{align}
    D_{TV}(P_F^\top, P^\ast)=\frac{1}{2}\sum_{x\in \mathcal{X}} |P^\top_F(x)-P^\ast(x)|. 
\end{align}
The total variation is similar to the average $l_1$-distance used in prior works, which can be computed by $\frac{1}{|\mathcal{X}|}\sum_x |P^\ast(x)-P^\top_F(x)|$. However, the average $l_1$-distance may be inappropriate as  $|\mathcal{X}|$ is usually large ($>10^4$) and $\sum_x |P^\ast(x)-P^\top_F(x)|\leq 2$,  resulting in the average $l_1$-distance being heavily scaled down by $|\mathcal{X}|$. 

The Jensen–Shannon divergence $D_{JSD}$ between $P_F^\top(x)$ and $P^\ast(x)$ is defined as:
\begin{align}
    D_{JSD}(P_F^\top,P^\ast)=\frac{1}{2}D_{KL}(P_F^\top,P^M)+ \frac{1}{2}D_{KL}(P^\ast,P^M),\quad P^M=P_F^\top+P^\ast.
\end{align}

Following~\citet{shen2023towards}, the mode accuracy $Acc$ of $P_F^\top(x)$ w.r.t. $P^\ast(x)$ is defined as:
\begin{align}
Acc(P_F^\top,P^\ast)=\min\left(\frac{\mathbb{E}_{P_F^\top(x)}[R(x)]}{\mathbb{E}_{P^\ast(x)}[R(x)]}, 1\right). 
\end{align}

For biological and molecular sequence experiments, we also count the number of modes, that is, the number of modes discovered during training. At every 10 training iterations, we sample  $|\mathcal{X}^{\mathrm{mode}}|$ terminating states by the current learned $P_F(\cdot|\cdot)$ and store the states which are modes and have not been discovered before; then we count the total number of discovered unique modes for evaluation.~\citep{shen2023towards,kim2023local}. The mode set $\mathcal{X}^{\mathrm{mode}}$ is defined as the set of terminating states whose rewards are in the top 0.5\%, 0.5\%, 0.5\%, and 0.1\%  of all rewards for the SIX6, QM9, PHO4 and sEH datasets respectively.

\subsection{Hyper-grid Modeling}\label{Hyper-grid_add}

\paragraph{Environment} In this environment, $\mathcal{S}\setminus\{s^f\}$ is equal to 
$\left\{s=([s]_0,\dots,[s]_d,\ldots,[s]_D)|[s]_d\in \{0,\ldots, N-1\}\right\}\,(=\{1,\ldots, N-1\}^D)$, where the initial state $s^0=(0,\ldots,0)$, and the final state $s^f$ can be represented by any invalid coordinate tuple of the hyper-grid, denoted as $(-1,\ldots,-1)$ in our implementation. For state $s\in \mathcal{S}\setminus\{s^f\}$, we have $D+1$ possible actions in $\mathcal{A}(s)$: (1) increment the coordinate $[s]_d$ by one, arriving at $s'=([s]_0,\ldots,[s]_d+1,\ldots)$; (2) choose stopping actions $(s{\rightarrow}s^f)$, terminating the process and returning $s$ as the terminating coordinate tuple $x$. In this environment, $\mathcal{G}$ is not a graded DAG, and $\mathcal{S}\setminus\{s^f\}=\mathcal{X}$ as all coordinate tuples can be returned as the terminating states. The reward $R(x)$ is defined as:
\begin{align}
R(x)=R_0+R_1\prod_{d=1}^D\mathbb{I}\left[\left|\frac{[s]_d}{N-1}-0.5\right|\in (0.25,0.5]\right]+R_2\prod_{d=1}^D\mathbb{I}\left[\left|\frac{[s]_d}{N-1}-0.5\right|\in (0.3,0.4]\right],
\end{align}
where $R_0=10^{-2}$, $R_1=0.5$ and $R_2=2$ in our experiment. Conditioning on $x$, we use an unnormalized conditional guided trajectory distribution $\widetilde{P}_G(\tau|x)$ for backward policy design, which is defined as:
\begin{align}
    &\widetilde{P}_G(\tau|x\rightarrow s^f):=P_f(\tau_{\preceq x})= \prod_{t=1}^{T-1} P_f(s_t|s_{t-1}),  \nonumber \\\forall s_t\neq s^f:\, &P_f(s_t|s_{t-1}):= \left\{\begin{matrix}
        \frac{P_F(s_t|s_{t-1})}{\sum_{s: s\neq s^f} P_F(s|s_{t-1})+\epsilon^f}  &\text{  if  } R(s_{t-1})\leq R_0\\P_F(s_t|s_{t-1}) &\text{  otherwise}
    \end{matrix}\right.\text{,}  \nonumber\\   &P_f(s^f|s_{t-1}):= \left\{\begin{matrix}
        \frac{\epsilon^f}{\sum_{s: s\neq s^f} P_F(s|s_{t-1})+\epsilon^f}  &\text{  if  } R(s_{t-1})\leq R_0\\P_F(s^f|s_{t-1}) &\text{  otherwise}
    \end{matrix}\right.\text{,}
\end{align}
where $\epsilon^f =10^{-5}$, the corresponding normalized distribution can be understood as $P_G(\tau|x\rightarrow s^f)=P_f(\tau|x\rightarrow s^f)\propto P_f(\tau_{\preceq x})$, and $P_f(\tau)=\prod_{t=1}^{T} P_f(s_t|s_{t-1})$. Similar to the proof of Proposition~\ref{TB-equivalence}, it can be verified that $\nabla_{\phi} D_{KL}^{\rho}(P_{B}(\tau|x\rightarrow s^f;\phi), P_G(\tau|x\rightarrow s^f))$$=\nabla_{\phi} D_{KL}^{\rho}(P_{B}(\tau|x\rightarrow s^f;\phi),\widetilde{P}_G(\tau|x\rightarrow s^f))$. As all the coordinate tuples can be terminating states (i.e., $s^f$ is the child of all the other states) the expression above means that $P_f$ assigns a low probability to the event of the terminating state being a state with a low reward. In this way, we discourage the generative process from stopping early at low reward coordinate tuples.
\begin{figure*}[t]
  \centering
\begin{minipage}[t]{0.45\linewidth}
  \centering
\includegraphics[width=1.0\textwidth]{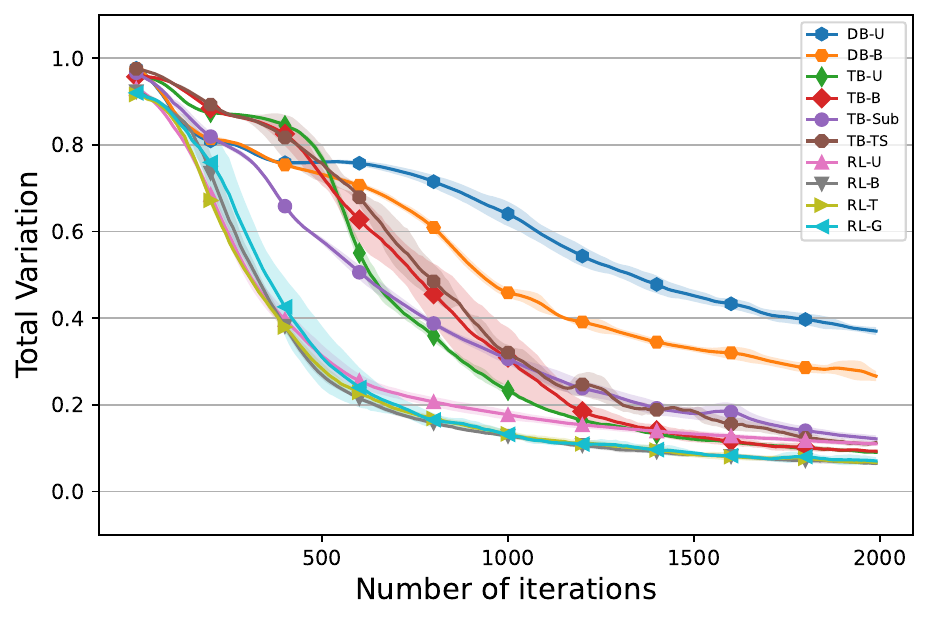}
\end{minipage}
\begin{minipage}[t]{0.45\linewidth}
  \centering
\includegraphics[width=1.0\textwidth]{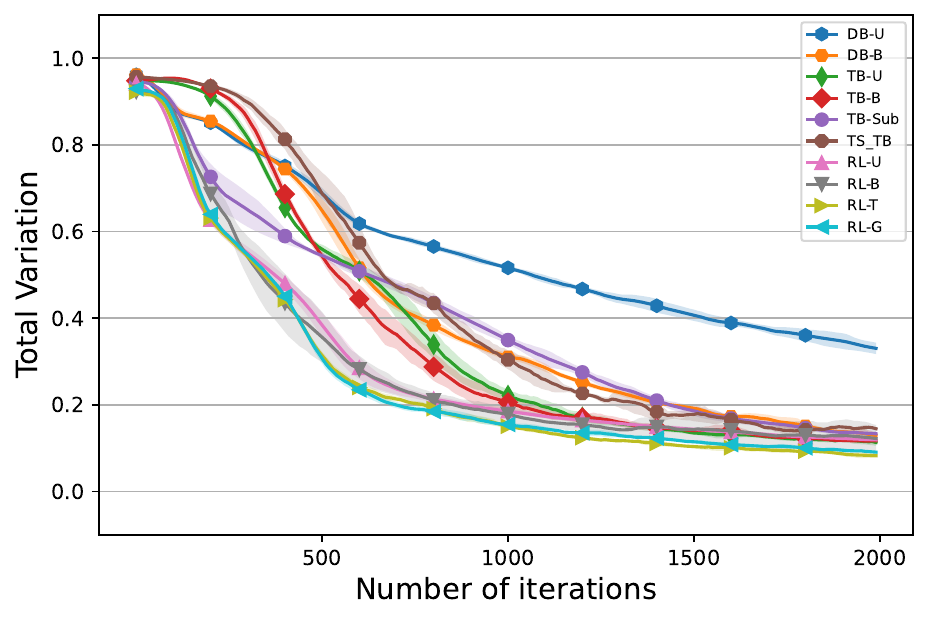}
 \end{minipage}
 \caption{Training curves by $D_{TV}$ between $P_F^\top$ and $P^\ast$ for $64\times64\times64$~(left) and $32\times32\times32\times32$ hyper-grids~(right). The curves are plotted
based on means and standard deviations of metric values across five runs and smoothed by a sliding window of length 10. Metric values
are computed every 10 iterations.}\label{Hyper_training_64_32}
\end{figure*}


\begin{figure*}[t]\hspace{-1.5mm}
  \centering
\begin{minipage}[t]{0.45\linewidth}
  \centering
\includegraphics[width=1.0\textwidth]{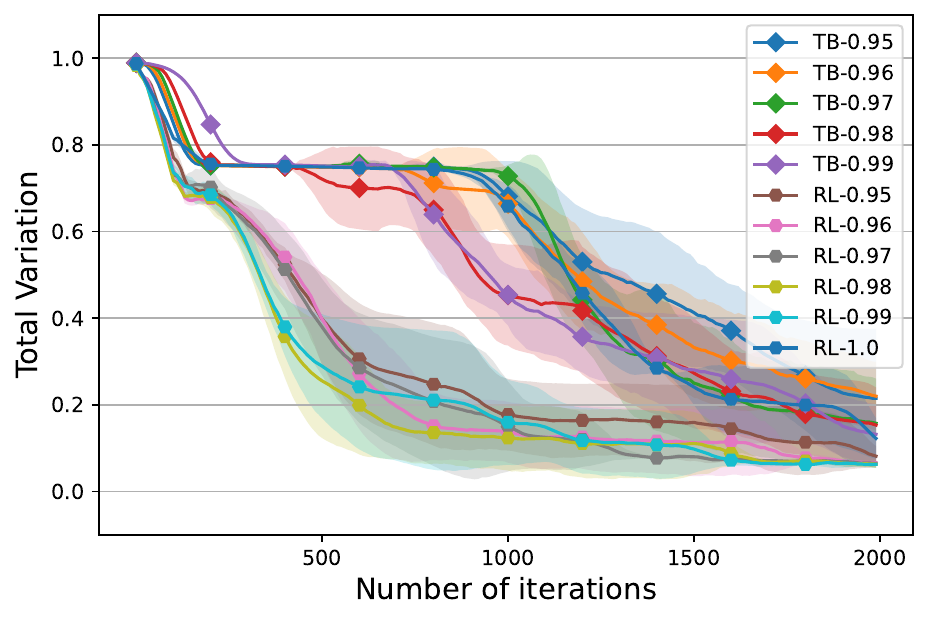}
\end{minipage}\hspace{3mm}
\begin{minipage}[t]{0.45\linewidth}
  \centering
\includegraphics[width=1.0\textwidth]{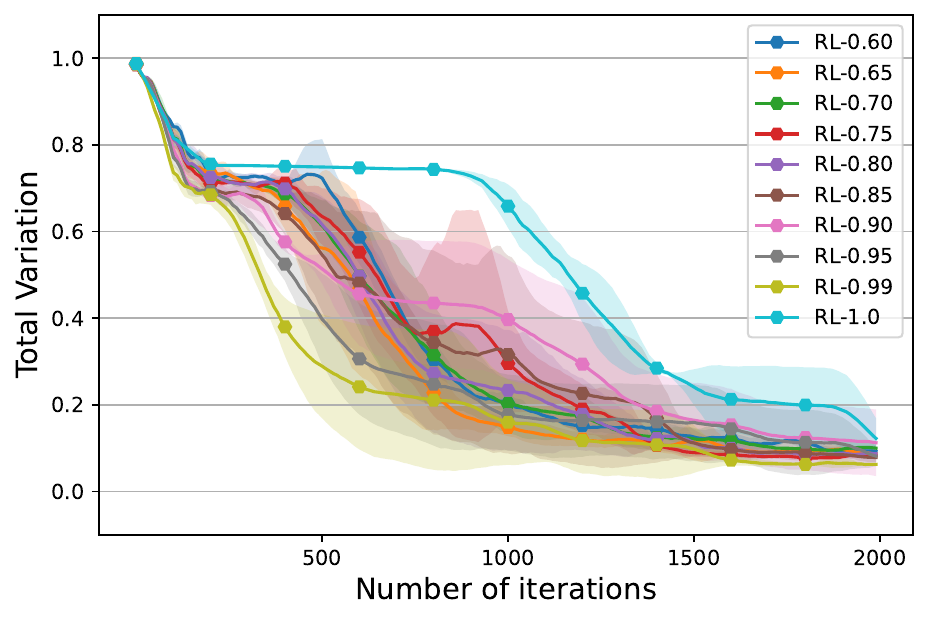}
 \end{minipage}
 \caption{Performance comparison between RL-U with different $\lambda$ values and TB-U with different $\gamma$ values. The curves are plotted based
on their mean and standard deviation values across five runs and smoothed by a sliding window of length 10. Metric values
are computed every 10 iterations.}\label{TB-RL-fig}
\end{figure*}
\paragraph{Model architecture} Forward policy $P_F(\cdot|\cdot)$ is parametrized by a neural network with 4 hidden layers and the hidden dimension is $256$. Backward policy $P_B(\cdot|\cdot)$ is fixed to be uniform over valid actions or parameterized in the same way as $P_F$. Coordinate tuples are transformed by K-hot encoding before being fed into neural networks. 

\paragraph{Additional experiment results}
The obtained results from five runs on $64\times 64\times64$ and $32\times 32\times32\times32$ grids are summarized in Fig.~\ref{Hyper_training_64_32}  and Table \ref{hyper_grid_table_64_32}. The graphical illustrations of $P_F^\top(x)$ are shown in Figs.~\ref{64-plots} and~\ref{32-plots}. We can observe similar performance trends to those in $256\times 256$ and $128\times128$ grids: both TB-based methods and our policy-based method are better than DB-based method, and our policy-based methods achieve much faster convergence than TB-based methods. While these trends are less obvious than in $256\times 256$ and $128\times128$ grids, this phenomenon can be ascribed to the fact that the environment height $N$ has more influence on the modeling difficulty than the environment dimension $D$. The reason is that hyper-grids are homogeneous w.r.t. each dimension, and the minimum distance between modes only depends on $N$.

\subsection{Biological and Molecular Sequence Design}\label{seq-add}
\begin{figure*}[t]
  \centering
\begin{minipage}[t]{0.45\linewidth}
  \centering
\includegraphics[width=1.0\textwidth]{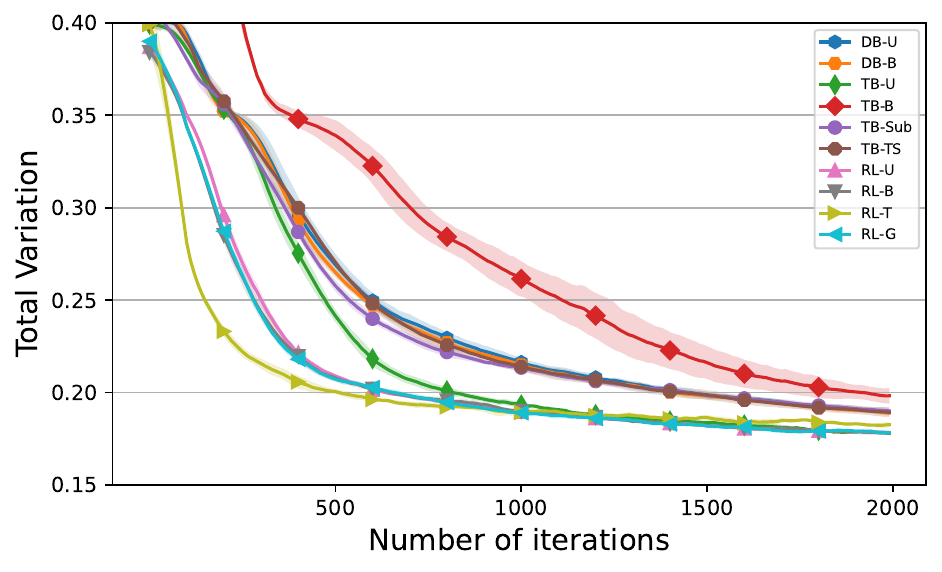}
\end{minipage}\hspace{3mm}
\begin{minipage}[t]{0.45\linewidth}
  \centering
\includegraphics[width=1.0\textwidth]{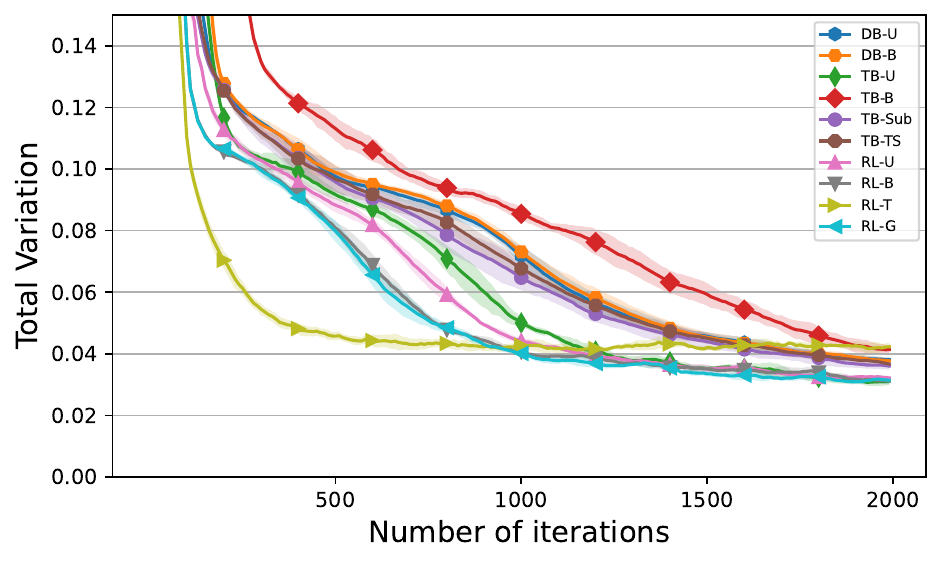}
 \end{minipage}
 \caption{Training curves by $D_{TV}$ between $P_F^\top$ and $P^\ast$ for SIX6~(left) and QM9~(right). The curves are plotted
based on means and standard deviations of metric values across five runs and smoothed by a sliding window of length 10. Metric values
are computed every 10 iterations.}\label{qm9-Six6-training-TV}
\end{figure*}

\begin{figure*}[t]
  \centering
\begin{minipage}[t]{0.45\linewidth}
  \centering
\includegraphics[width=1.0\textwidth]{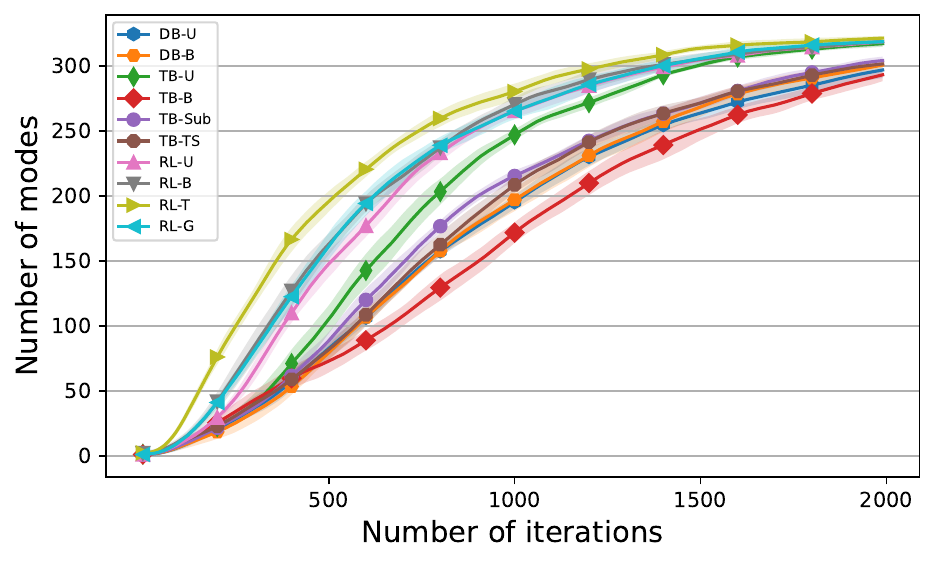}
\end{minipage}\hspace{3mm}
\begin{minipage}[t]{0.45\linewidth}
  \centering
\includegraphics[width=1.0\textwidth]{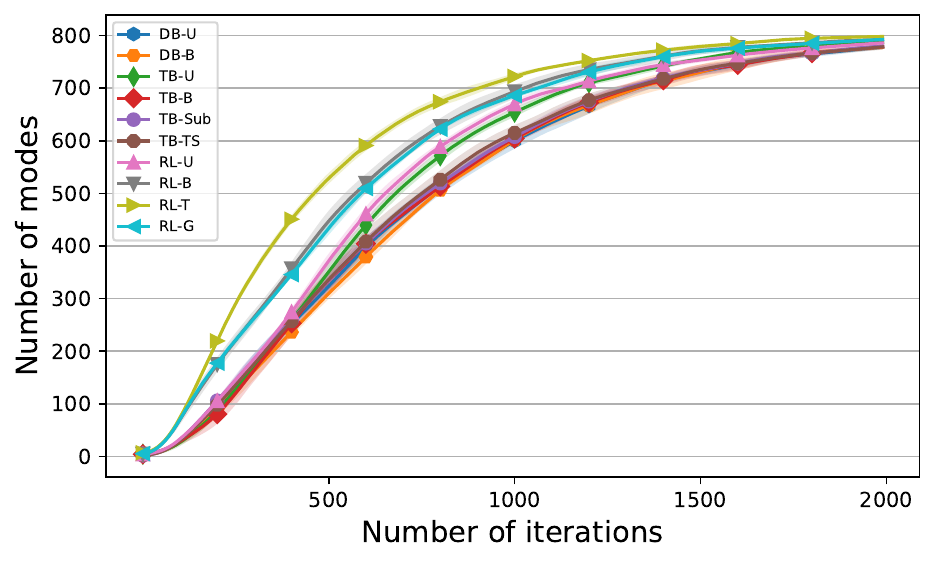}
 \end{minipage}
 \caption{Training curves by the number of modes discovered during training for SIX6~(left) and QM9~(right). The curves are plotted
based on means and standard deviations of metric values across five runs and smoothed by a sliding window of length 10. Metric values
are computed every 10 iterations.}\label{qm9-Six6-training-mode}
\end{figure*}
\paragraph{Environment} In this environment, $\mathcal{S}\setminus{s^f}=\{-1,0,\ldots,N-1\}^D$ with element $s$ corresponds to a sequence composed of integers ranging from $-1$ to $N-1$. The set $\{0,\ldots,N-1\}$ denotes the $N$ nucleotide types or molecular building blocks, and the integer $-1$ represents that the corresponding position within $s$ is unfilled. The initial state $s^0=(-1,\ldots,-1)$ represents an empty sequence and the final state $s^f=(N,\ldots,N)$. For $s_t\in \mathcal{S}_t$, there are $t$ elements in $\{0,\ldots,N-1\}$ and the rest equal to $-1$. There are $N\cdot(D-t)$ actions in $\mathcal{A}(s)$ that correspond to fill in one of the empty slots by one integer in $\{0,\ldots, N-1\}$. The generative process will not stop until sequences are fulfilled. By definition, $\mathcal{G}$ is a graded DAG and $\mathcal{S}_D=\mathcal{X}=\{0,\ldots,N-1\}^D$. We use the reward values provided in the dataset directly. Following~\citet{shen2023towards}, we compute reward exponents $R^{\beta}(x)$ with hyper-parameter $\beta$ set to 3, 5, 3, 6 and normalize the reward exponents to $[1\times10^{-3},10]$, $[1\times10^{-3},10]$, $[0,10]$ and $[1\times10^{-3},10]$ for the SIX6, QM9, PHO4 and sEH datasets respectively. The guided distribution design also follows~\citet{shen2023towards}. For content completeness, we provide the definitions as:
\begin{align}
    P_G(\tau|x)=\prod_{t=1}^T P_G(s_t|s_{t-1}, x),\quad P_G(s_t|s_{t-1}, x)=\frac{\mathrm{score}(s_t|x)}{\sum_{s^\prime\in Ch(s_{t-1})}\mathrm{score}(s'|x)},\nonumber\\  \mathrm{score}(s|x):= \left\{\begin{matrix}
      \mathrm{mean}(\{R(x')|s\in x', x'\in \mathcal{X}^{\mathrm{replay}}\})  &\text{  if  } s \in x\\0 &\text{  otherwise}
    \end{matrix}\right.
\end{align}
where $\mathcal{X}^{\mathrm{replay}}$ corresponds to a replay buffer that stores the sampled terminating states during training.

\begin{figure*}[t]
  \centering
  \begin{minipage}[t]{0.45\linewidth}
  \centering
\includegraphics[width=1.0\textwidth]{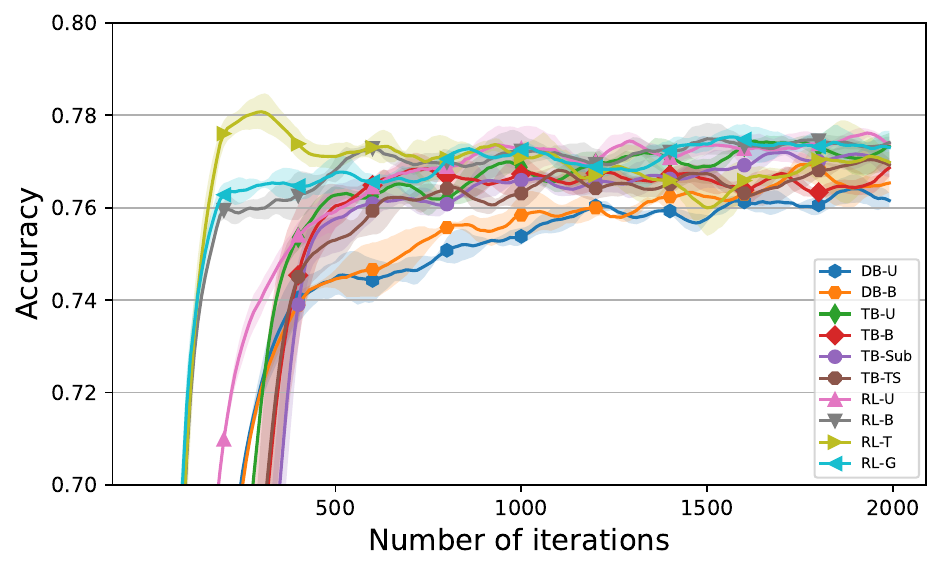}
 \end{minipage}\hspace{3mm}
   \begin{minipage}[t]{0.45\linewidth}
  \centering
\includegraphics[width=1.0\textwidth]{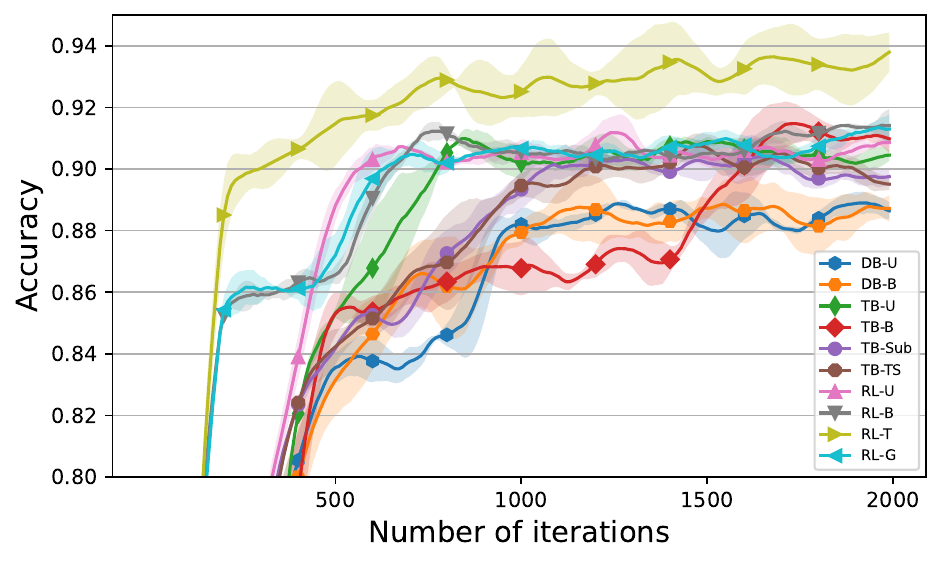}
 \end{minipage}\caption{Training curves by $Acc$ of $P_F^\top$ w.r.t. $P^\ast$ for PHO4 and sEH. The curves are plotted based on their mean and standard deviation values. The curves are plotted
based on means and standard deviations of metric values across five runs and smoothed by a sliding window of length 10. Metric values
are computed every 10 iterations.}\label{TF10-sEH-training}
\end{figure*}

\begin{figure*}[t]
  \centering
  \begin{minipage}[t]{0.45\linewidth}
  \centering
\includegraphics[width=1.0\textwidth]{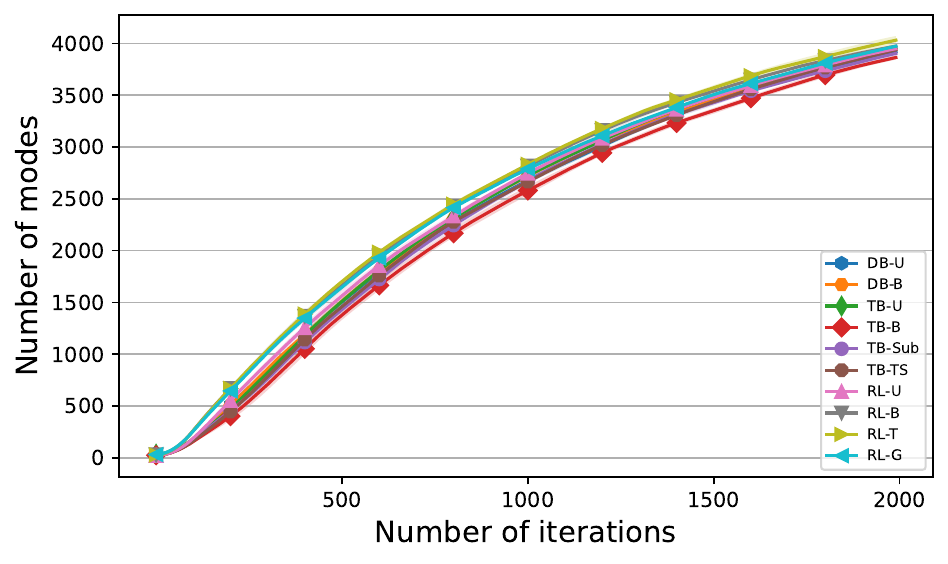}
 \end{minipage}\hspace{3mm}
   \begin{minipage}[t]{0.45\linewidth}
  \centering
\includegraphics[width=1.0\textwidth]{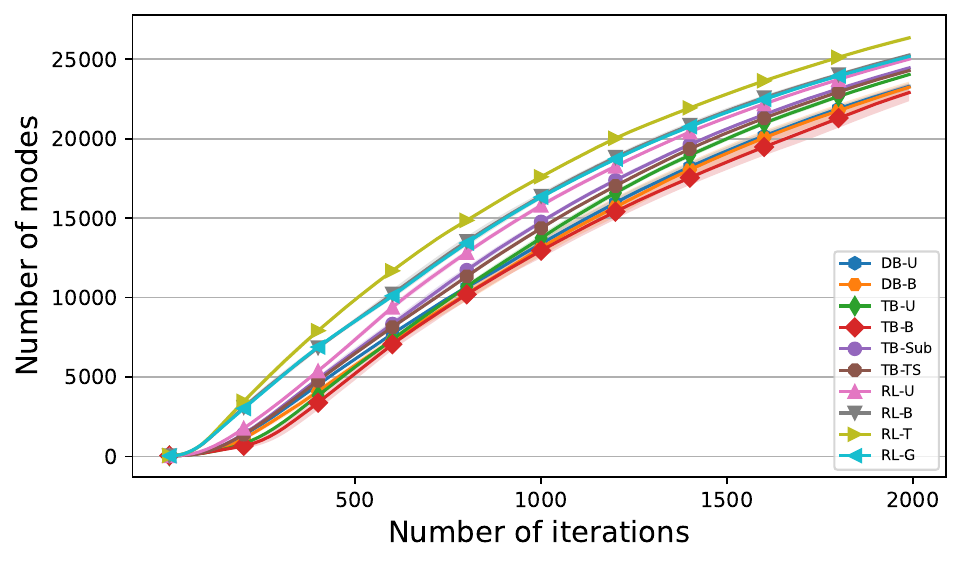}
 \end{minipage}\caption{Training curves by the number of modes discovered over training for PHO4 and sEH. The curves are plotted
based on means and standard deviations of metric values across five runs and smoothed by a sliding window of length 10. Metric values
are computed every 10 iterations. }\label{TF10-sEH-training-mode}\vspace{-2mm}
\end{figure*}

\paragraph{Model architecture} Policies are constructed in the same way as the hyper-grid modeling experiment. Integer sequences are transformed by K-hot encoding before being fed into neural networks.

\paragraph{Additional experiment results} For PHO4 and sEH dataset, the exact computation of $P_F^\top$ by dynamic programming is expensive. Thus, we only plot the training curves by $Acc$ and the number of modes as shown in Figs.~\ref{TF10-sEH-training} and~\ref{TF10-sEH-training-mode}, and the means and standard deviations of metric values at the last iteration are provided in Table~\ref{Bio-table2}. Here, the expectation $\mathbb{E}_{P_F^\top(x)}[R(x)]$ in $Acc$ is approximated by averaging over $10^5$ terminating state samples. We can observe similar performance trends as in the QM9 and SIX6 datasets. Our policy-based methods achieve faster convergence rates and better converged $Acc$ values than the TB-based and DB-based methods. While the converged $Acc$ of RL-T is slightly worse than the other policy-based methods, it achieves the fastest convergence rate. RL-G and RL-B, both employing a parametrized $\pi_B$, demonstrate similar performance and converge faster than RL-U, which utilizes a uniform $\pi_B$.

\subsection{Bayesian Network Structure Learning}\label{DAG-add}

In this experiment, we investigate GFlowNets for BN structure learning following the settings adopted in~\citet{malkin2022gflownets}. The set $\mathcal{X}$ corresponds to a set of BN structures, which are also DAGs. BN structure learning can be understood as approximating $P(x|\mathcal{D})\propto R(x)$ given a dataset $\mathcal{D}$. Given a set of nodes, the state space for GFlowNets is the set of all possible DAGs over the node set. The actions correspond to adding edges over a DAG without introducing a cycle. The generative process of a BN structure is interpreted as starting from an empty graph, an action is taken to decide to add an edge or terminate the generative process at the current graph structure.

\vspace{-2mm}
\paragraph{Environment} A Bayesian Network is a probabilistic model that represents the joint distribution of $N$ random variable and the joint distribution factorizes according to the network structure $x$:
\begin{align}
    P(y_1,\ldots,y_N)=\prod_{n=1}^N P(y_n|Pa_x(y_n))
\end{align}
where $Pa_x(y_n)$ denote the set of parent nodes of $y_n$ according to graph $x$. As the structure of any graph can be represented by its adjacency matrix, the state space can be defined as $\mathcal{S}:=\left\{s|\mathcal{C}(s)=0, s\in\{0,1\}^{N\times N}\right\}$ where $\mathcal{C}$ corresponds to the acyclic graph constraint~\citep{deleu2022bayesian}, the initial state $s^0=\mathbf{0}^{N\times N}$
and specially $s^f:=-\mathbf{1}^{N\times N}$ in our implementation. For each state $s$,  $a\in \mathcal{A}(s)$ can be any action that turns one of 0 values of $s$ to be $1$ (i.e. adding an edge) while keeping $\mathcal{C}(s')=0$ for the resulting graph $s'$, or equal to $(s\rightarrow s^f)$ that stopping the generative process and return $x=s$ as the terminating state. By definition, the corresponding $\mathcal{G}$ in this environment is not graded.
Given observation dataset $\mathcal{D}_y$ of $y_{1:N}$, the structure learning problem can be understood as approximating $P(x|\mathcal{D}_y)\propto P(x,\mathcal{D}_y)=P(\mathcal{D}_y|x)P(x)$. Without additional information about graph structure $x$, $P(x)$ is often assumed to be uniform. Thus, $P(x|\mathcal{D}_y)\propto P(\mathcal{D}_y|x)$ and the reward function is defined as $R(x)\propto P(\mathcal{D}_y|x)$. Distribution $P(\mathcal{D}_y|x)$ is also called graph score and we use $BGe$ score~\citep{kuipers2014addendum} in our experiment. Following~\citet{malkin2022gflownets}, the ground-truth graph structure and the corresponding observation dataset $\mathcal{D}_y$ are simulated from Erdős–Rényi model. The guided distribution design follows the hyper-grid experiment. A low probability value, $10^{-5}$ is assigned to the transition probability of $(s\rightarrow s^f)$ if $|\log R(s)-\log R^{\max}|\leq 5$. 

\begin{figure}[t]
  \centering
\includegraphics[width=0.45\textwidth]{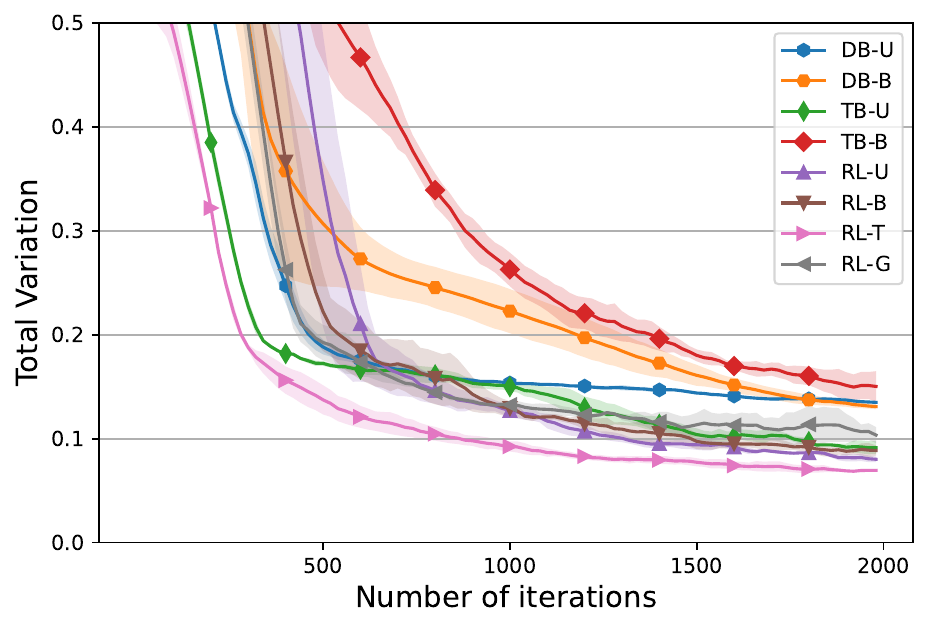} \caption{Training curves by $D_{TV}$ of $P_F^\top$ w.r.t. $P^\ast$ for the BN structure
learning experiment. The curves are plotted
based on means and standard deviations of metric values across five runs and smoothed by a sliding window of length 10. Metric values
are computed every 10 iterations.}\label{DAG-training}
\end{figure}
\paragraph{Model architecture} Policies are constructed in the same way as the hyper-grid modeling experiments, but adjacency matrices are fed into neural networks directly without encoding. As reported in~\citep{deleu2022bayesian}, the distribution can be very peaky between adjacent graph structures. The reward $R(x)$ (typically $\approx e^{80}$) in this experiment is much larger than those (typically $\approx 10$) in the previous two sets of experiments. These facts give rise to numerical issues for reliable estimation of value function. Thus, we compute the gradients w.r.t. $J_B$ empirically as the gradients w.r.t. TB-based objective, that is,  $V_B$ is not utilized during training. Besides, $\log Z$ is also very large, so we set the learning rate to $1$, which is selected from $\{0.1,0.3,0.5,0.8,1.0\}$ by TB-U.

\paragraph{Experiment results}
The number of possible DAGs grows exponentially with the number of nodes. Thus, we test the same
benchmark in~\citet{malkin2022gflownets} with the number of nodes
set to 5 and the corresponding total numbers of DAGs is
about $2.92 \times 10^4$. The number of edges in the ground-truth DAG is set to 5, and the size of the observation dataset is set to 100. The experimental results across five runs
are shown in Fig.~\ref{DAG-training} and Table~\ref{BN-table}. The graphical illustrations of $P_F^\top(x)$ are shown in Fig.~\ref{BN-plots}. As expected, performance trends similar to those in the previous two sets of experiments are observed. The converged $D_{TV}$ values of all the policy-based methods are better than those of the value-based methods, with only TB-U achieving comparable performance. Besides, RL-T achieves
the best converged $D_{TV}$ value and has the fastest convergence among all the methods. These results further demonstrate the
effectiveness of our policy-based methods for GFlowNet
training.

\clearpage
\subsection{Tables of Converged Metric Values}\label{all-table}
\begin{table}[h!]
\centering
\begin{tabular}{lllllll}
\toprule[1.1pt]
 & \multicolumn{3}{c}{$256\times256$} & \multicolumn{3}{c}{$128\times128$} \\ \hline
Method & \multicolumn{1}{c}{$D_{TV}\downarrow(\times10^{-1})$} & \multicolumn{1}{c}{$D_{JSD}\downarrow(\times10^{-2})$} & \multicolumn{1}{c}{Time$\downarrow$} & \multicolumn{1}{c}{$D_{TV}\downarrow(\times10^{-1})$} & \multicolumn{1}{c}{$D_{JSD}\downarrow(\times10^{-2})$} & \multicolumn{1}{c}{Time$\downarrow$} \\ \midrule[1.1pt]
DB-U & $6.050\pm0.129$ & $24.61\pm1.147$ & 37.2 & $3.233\pm0.138$  & $8.357\pm0.471$&  18.5\\
DB-B & $4.459\pm0.135$ & $14.42\pm0.656$ & 46.8 & $2.621\pm0.122$  & $5.621\pm0.502$  &  21.3\\
TB-U & $0.728\pm0.095$ & $0.763\pm0.131$ & \textbf{30.9} & $0.449\pm0.078$ & $0.338\pm0.062$& \textbf{14.3} \\
TB-B & $2.101\pm1.052$  & $6.612\pm5.093$ & 32.5 & $0.441\pm0.080$ & $0.355\pm0.105$ &  16.4\\
TB-Sub & $1.461\pm0.058$  & $2.915\pm0.285$ & 71.5 & $0.450\pm0.032$ & $0.367\pm0.060$ &  31.3\\
TB-TS & $1.277\pm0.268$  & $1.714\pm0.627$ & 38.5 & $0.481\pm0.050$ & $0.364\pm0.058$ &  20.4\\ \hline
RL-U & $0.621\pm0.057$ & $0.770\pm0.096$  & 44.4 & $0.440\pm0.077$ & $0.390\pm0.053$ & 18.3 \\
RL-B & $0.704\pm0.190$ & $1.064\pm0.286$ & 68.5 & $0.490\pm0.088$ &$0.467\pm0.082$  & 35.2 \\
RL-T & $0.708\pm0.058$ &  $0.774\pm0.054$ & 90.2 & $0.503\pm0.043$ & $0.426\pm0.053$ & 58.3 \\
RL-G & $\mathbf{0.439\pm0.037}$ &  $\mathbf{0.541\pm0.017}$& 69.8 & $\mathbf{0.427\pm0.158}$ & $\mathbf{0.353\pm0.180}$& 35.4 \\ \bottomrule[1.1pt]
\end{tabular}\caption{Converged metric values of different methods for the modeling of $256\times256$ and $128\times128$ grids. Training time costs are provided in minutes. }\label{hyper_grid_table_256_128}
\end{table}

\begin{table}[h!]
\centering
\begin{tabular}{lllllll}
\toprule[1.1pt]
 & \multicolumn{3}{c}{$64\times64\times 64$} & \multicolumn{3}{c}{$32\times32\times 32\times 32$ } \\ \hline
Method & \multicolumn{1}{c}{$D_{TV}\downarrow(\times 10^{-1})$} & \multicolumn{1}{c}{$D_{JSD}\downarrow(\times 10^{-2})$} & \multicolumn{1}{c}{Time$\downarrow$} & \multicolumn{1}{c}{$D_{TV}\downarrow(\times 10^{-1})$} & \multicolumn{1}{c}{$D_{JSD}\downarrow(\times 10^{-2})$} & \multicolumn{1}{c}{Time$\downarrow$} \\ \midrule[1.1pt]
DB-U & $3.687\pm0.132$  & $10.83\pm0.611$ & 19.3 & $3.254\pm0.151$  & $7.570\pm0.682$&  19.1\\
DB-B & $2.606\pm0.083$ & $6.559\pm0.486$ & 21.5 &  $1.248\pm0.041$  & $1.684\pm0.119$  &  20.4\\
TB-U & $0.870\pm0.065$& $0.737\pm0.082$& \textbf{16.1} &$1.086\pm0.078$  & $1.123\pm0.110$ & \textbf{17.3} \\
TB-B & $0.909\pm0.069$  &  $0.753\pm0.101$ & 17.4 & $1.191\pm0.112$ & $1.258\pm0.184$  &  18.6\\
TB-Sub & $1.185\pm0.106$  &  $1.397\pm0.276$ & 27.5 & $1.319\pm0.085$ & $1.961\pm0.171$  &  25.2\\
TB-TS & $1.164\pm0.222$  &  $1.255\pm0.410$ & 21.3 & $1.390\pm0.161$ & $1.781\pm0.295$  &  21.2\\\hline
RL-U & $1.082\pm0.060$  & $1.287\pm0.105$  & 17.4 & $1.180\pm0.049$ & $1.455\pm0.111$  &18.5 \\
RL-B & $\mathbf{0.647\pm0.111}$  & $\mathbf{0.440\pm0.132}$& 30.2 & $1.203\pm0.330$ & $1.304\pm0.723$  & 34.4 \\
RL-T & $0.654\pm0.069$ & $0.456\pm0.084$  & 49.0 & $\mathbf{0.838\pm0.105}$& $\mathbf{0.591\pm0.140}$ & 59.3 \\
RL-G &$0.670\pm0.095$   &  $0.527\pm0.198$& 31.8 &$0.888\pm0.068$  & $0.666\pm0.091$  & 36.4 \\ \bottomrule[1.1pt]
\end{tabular}\caption{Converged metric values of different methods for the modeling of $64\times64\times 64$ and $32\times32\times 32\times 32$  grids. Training time costs are provided in minutes. }\label{hyper_grid_table_64_32}
\end{table}

\begin{table}[h!]
\centering
\begin{tabular}{lllll}
\toprule[1.1pt]
 & \multicolumn{3}{c}{SIX6} &  \\ \hline
Method & \multicolumn{1}{c}{$Acc\uparrow(\times 10^2)$} & \multicolumn{1}{c}{$D_{TV}\downarrow(\times 10^{-1})$} & \multicolumn{1}{c}{$D_{JSD}\downarrow(\times 10^{-2})$} & \multicolumn{1}{c}{Number of modes$\uparrow$}\\ \midrule[1.1pt]
DB-U & $86.26\pm1.20$ & $1.889\pm0.021$ & $3.261\pm0.052$ & $298.24\pm6.84$ \\
DB-B & $86.26\pm1.51$ & $1.886\pm0.020$ & $3.263\pm0.064$ & $302.00\pm4.97$  \\
TB-U & $93.88\pm1.13$ & $\mathbf{1.767\pm0.015}$ & $\mathbf{2.915\pm0.025}$ & $317.92\pm2.74$ \\
TB-B & $90.94\pm1.46$ & $1.998\pm0.055$ & $3.542\pm0.146$ & $295.36\pm4.69$ \\
TB-Sub & $88.57\pm0.91$ & $1.893\pm0.021$ & $3.261\pm0.056$ & $305.26\pm3.22$  \\
TB-TS & $88.66\pm1.04$ & $1.887\pm0.029$ & $3.241\pm0.050$ & $302.58\pm1.27$ \\ \hline
RL-U & $94.30\pm1.22$ & $1.779\pm0.018$ & $2.956\pm0.025$ & $318.84\pm3.08$  \\
RL-B & $94.68\pm1.62$ & $1.786\pm0.019$ & $2.967\pm0.036$ & $319.14\pm1.96$ \\
RL-T & $93.40\pm1.01$ & $1.832\pm0.022$ & $3.174\pm0.049$ & $\mathbf{321.74\pm1.72}$  \\
RL-G & $\mathbf{94.70\pm1.65}$ & $1.782\pm0.020$ & $2.951\pm0.025$ & $318.96\pm3.03$  \\ \bottomrule[1.1pt]
\end{tabular}\caption{Converged metric values of different methods for the SIX6 datasets.}\label{Bio-table}
\end{table}

\begin{table}[h!]
\centering
\begin{tabular}{lllll}
\toprule[1.1pt]
 & \multicolumn{3}{c}{QM9} &  \\ \hline
Method & \multicolumn{1}{c}{$Acc\uparrow(\times 10^2)$} & \multicolumn{1}{c}{$D_{TV}\downarrow(\times 10^{-1})$} & \multicolumn{1}{c}{$D_{JSD}\downarrow(\times 10^{-2})$} & \multicolumn{1}{c}{Number of modes$\uparrow$}\\ \midrule[1.1pt]
DB-U & $96.30\pm1.07$ & $1.889\pm0.021$ & $3.261\pm0.052$ & $780.2\pm4.32$ \\
DB-B & $96.93\pm1.07$ & $1.886\pm0.020$ & $3.263\pm0.064$ & $780.0\pm6.04$  \\
TB-U & $98.65\pm1.10$ & $\mathbf{1.767\pm0.015}$ & $\mathbf{2.915\pm0.025}$ & $788.8\pm2.78$ \\
TB-B & $98.36\pm1.08$ & $1.998\pm0.055$ & $3.542\pm0.146$ & $781.4\pm3.72$ \\
TB-Sub & $97.00\pm1.17$ & $1.893\pm0.021$ & $3.261\pm0.056$ & $781.8\pm4.87$  \\
TB-TS & $96.83\pm0.76$ & $1.887\pm0.029$ & $3.241\pm0.050$ & $780.6\pm4.16$ \\ \hline
RL-U & $98.79\pm1.08$ & $1.779\pm0.018$ & $2.956\pm0.025$ & $787.0\pm5.29$  \\
RL-B & $\mathbf{99.10\pm0.90}$ & $1.786\pm0.019$ & $2.967\pm0.036$ & $792.8\pm2.95$ \\
RL-T & $98.74\pm1.27$ & $1.832\pm0.022$ & $3.174\pm0.049$ & $\mathbf{798.2\pm0.84}$  \\
RL-G & $90.04\pm1.20$ & $1.782\pm0.020$ & $2.951\pm0.025$ & $794.0\pm2.16$  \\ \bottomrule[1.1pt]
\end{tabular}\caption{Converged metric values of different methods for the QM9 datasets.}\label{Bio-table1}
\end{table}

\begin{table}[h!]
\centering
\begin{tabular}{lllll}
\toprule[1.1pt]
 & \multicolumn{2}{c}{PHO4} & \multicolumn{2}{c}{sEH} \\ \hline
Method & \multicolumn{1}{c}{$Acc\uparrow(\times 10^2)$} & \multicolumn{1}{c}{Number of modes$\uparrow$} & \multicolumn{1}{c}{$Acc\uparrow$} & \multicolumn{1}{c}{Number of modes$\uparrow$} \\ \midrule[1.1pt]
DB-U & $76.08\pm0.23$ & $3957.33\pm18.61$ & $88.45\pm0.61$ & $23502.97\pm260.03$ \\
DB-B & $76.61\pm0.32$ & $3975.00\pm64.00$ & $88.58\pm0.81$ & $23454.68\pm422.14$ \\
TB-U & $77.47\pm0.29$ & $3993.33\pm9.87$ & $90.49\pm0.48$ & $24250.63\pm114.85$ \\
TB-B & $77.00\pm0.14$ & $3901.33\pm26.08$ & $90.84\pm0.85$ & $23139.13\pm634.65$ \\
TB-Sub & $76.94\pm0.24$ & $3939.33\pm18.61$ & $89.81\pm0.61$ & $24652.47\pm74.43$ \\
TB-TS & $76.78\pm0.14$ & $3973.33\pm38.68$ & $89.57\pm0.24$ & $24491.93\pm90.99$ \\\hline
RL-U & $77.31\pm0.30$ & $3984.67\pm24.79$ & $90.86\pm0.42$ & $25200.67\pm24.61$ \\
RL-B & $\mathbf{77.55\pm0.25}$ & $4004.67\pm15.01$ & $91.36\pm0.85$ & $25454.03\pm142.64$ \\
RL-T & $76.81\pm0.48$ & $\mathbf{4062.67\pm47.69}$ & $\mathbf{93.98\pm0.64}$ & $\mathbf{26530.30\pm142.27}$ \\
RL-G & $77.26\pm0.55$ & $4005.67\pm12.50$ & $91.28\pm0.79$ & $25368.07\pm52.27$ \\ \bottomrule[1.1pt]
\end{tabular}\caption{Converged metric values of different methods for the PHO4 and sEH datasets}\label{Bio-table2}
\end{table}

\begin{table}[h!]
\centering
\begin{tabular}{lll|lll}
\toprule[1.1pt]
Method & \multicolumn{1}{c}{$D_{TV}\downarrow(\times10^{-1})$} & \multicolumn{1}{c|}{$D_{JSD}\downarrow(\times10^{-2})$} & Method & \multicolumn{1}{c}{$D_{TV} \downarrow(\times10^{-1})$} & \multicolumn{1}{c}{$D_{JSD}\downarrow(\times10^{-2})$} \\ \midrule[1.1pt]
DB-U & $1.346\pm0.030$ & $4.863\pm0.420$ & DB-B & $1.284\pm0.027$ & $7.533\pm1.873$ \\
TB-U & $0.898\pm0.124$ & $3.121\pm0.332$ & TB-B & $1.521\pm0.281$ & $13.301\pm2.107$ \\
RL-U & $0.831\pm0.079$ & $7.436\pm1.508$ & RL-B & $0.929\pm0.078$ & $6.114\pm1.179$ \\
RL-T & $\mathbf{0.698\pm 0.052}$ & $\mathbf{1.890\pm0.307}$ & RL-G & $0.964\pm 0.174$ & $5.712\pm0.845$ \\ \bottomrule[1.1pt]
\end{tabular}\caption{Converged metric values of different methods for the BN structure learning experiment, where the ground-truth BN has 5 nodes and 5 edges.}\label{BN-table}
\end{table}
\clearpage
\subsection{Graphical Representation of $P_F^\top$}\label{experiment_illustration}

\begin{figure}[h!]
\centering
\begin{minipage}[t]{0.20\linewidth}
  \centering
\includegraphics[width=1.0\textwidth]{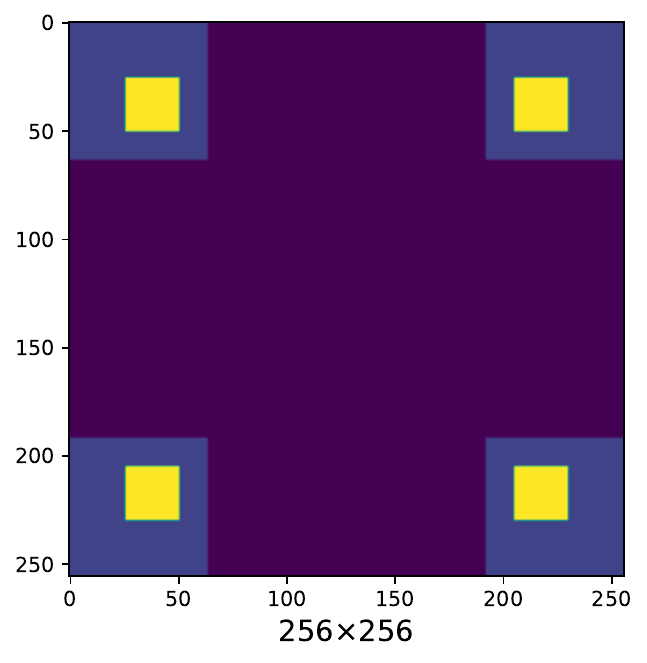}
\end{minipage}\hspace{-1.5mm}
\begin{minipage}[t]{0.20\linewidth}
  \centering
\includegraphics[width=1.0\textwidth]{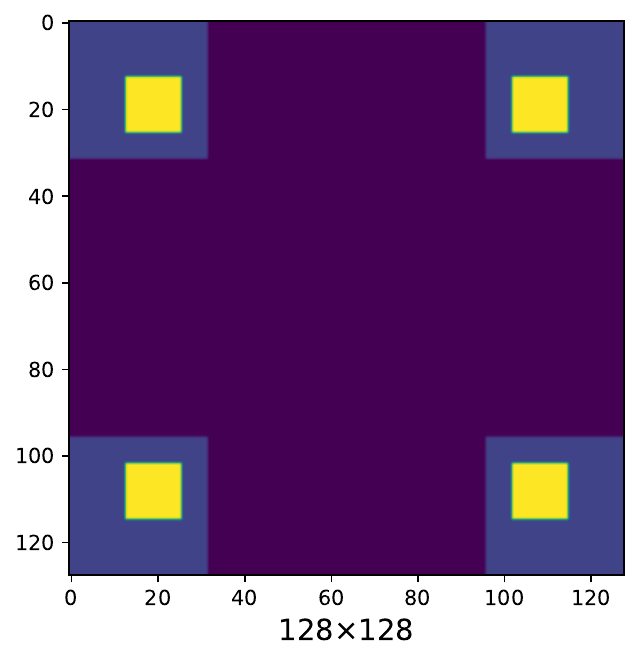}
\end{minipage}\hspace{-1.5mm}
\begin{minipage}[t]{0.20\linewidth}
  \centering
\includegraphics[width=1.0\textwidth]{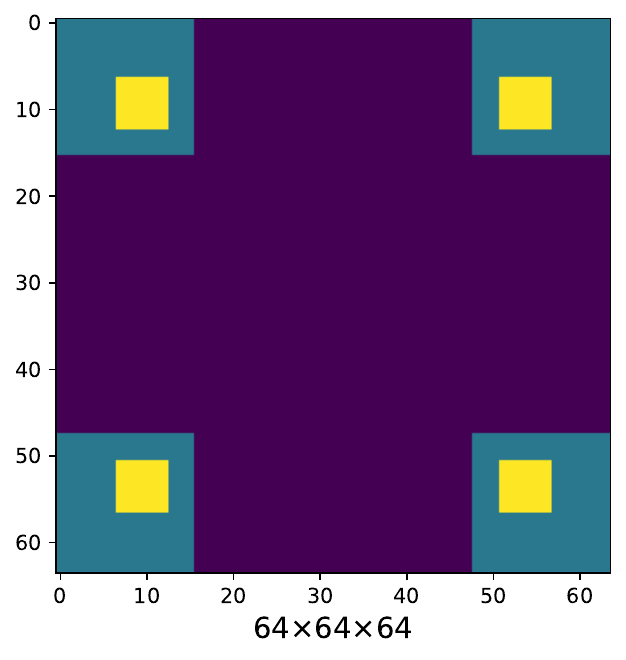}
\end{minipage}\hspace{-1.5mm}
\begin{minipage}[t]{0.20\linewidth}
  \centering
\includegraphics[width=1.0\textwidth]{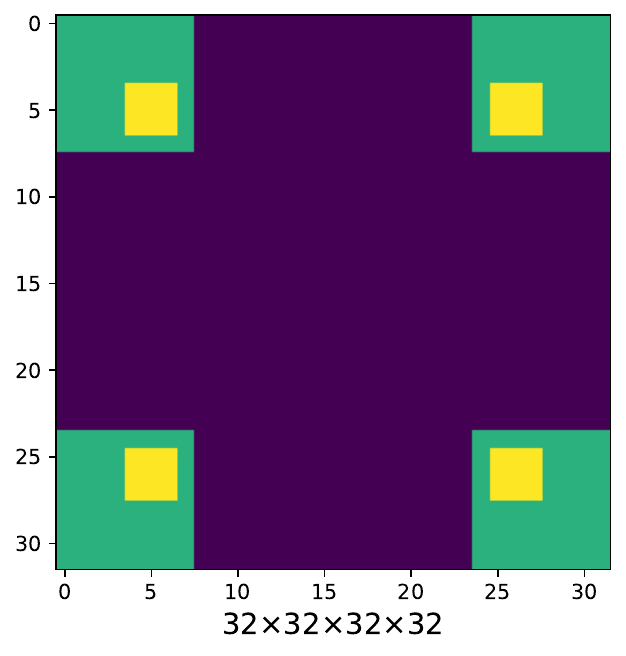}
\end{minipage}\hspace{-1.5mm}
 \caption{Graphical representation of $P^\ast$ for different hyper-grids. For visualization easiness, the ground-truth marginal distributions of two dimensions are plotted for $64\times64\times 64$ and  $32\times 32\times32\times 32$ grids.}
\end{figure}

\begin{figure}[h!]
\begin{minipage}[t]{0.20\linewidth}
  \centering
\includegraphics[width=1.0\textwidth]{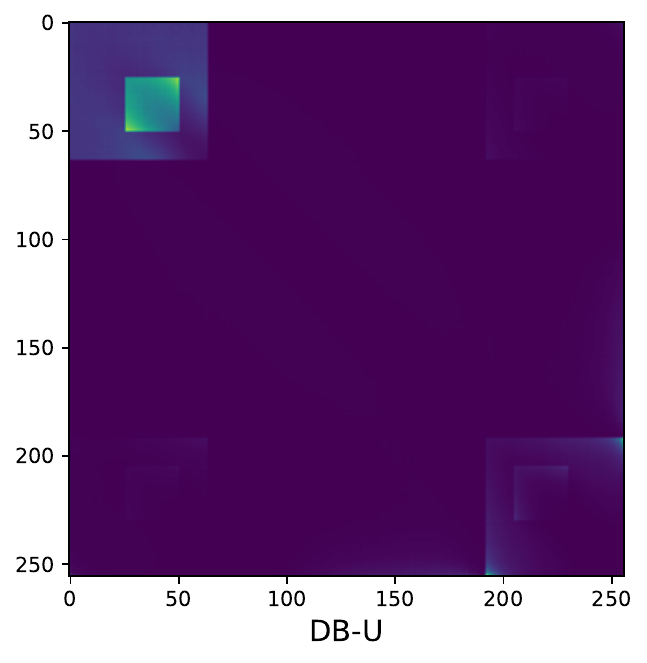}
\end{minipage}\hspace{-1.5mm}
\begin{minipage}[t]{0.20\linewidth}
  \centering
\includegraphics[width=1.0\textwidth]{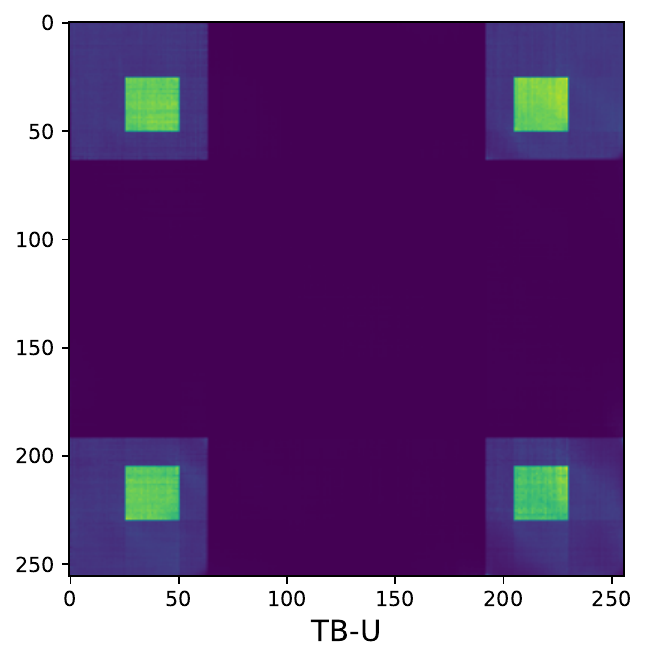}
\end{minipage}\hspace{-1.5mm}
\hspace{-1.5mm}
\begin{minipage}[t]{0.20\linewidth}
  \centering
\includegraphics[width=1.0\textwidth]{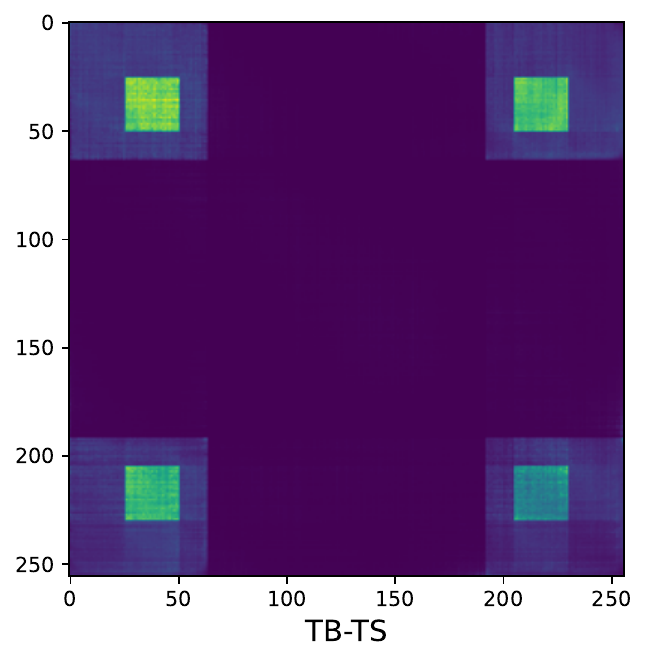}
\end{minipage}\hspace{-1.5mm}
\begin{minipage}[t]{0.20\linewidth}
  \centering
\includegraphics[width=1.0\textwidth]{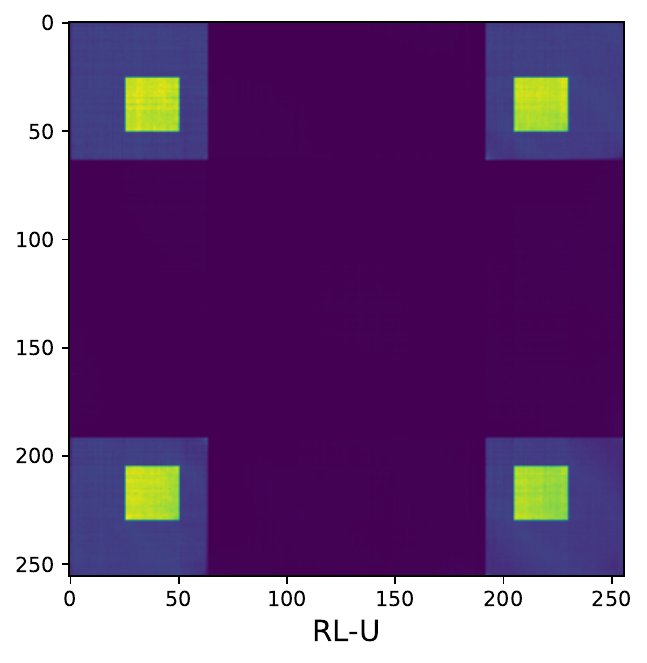}
 \end{minipage}\hspace{-1.5mm}
 \begin{minipage}[t]{0.20\linewidth}
  \centering
\includegraphics[width=1.0\textwidth]{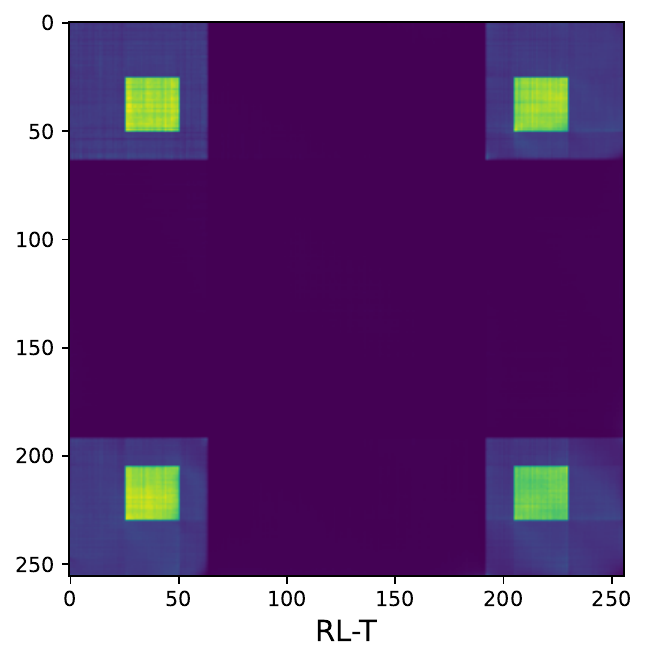}
 \end{minipage}\hspace{-1.5mm}\\
 \begin{minipage}[t]{0.20\linewidth}
  \centering
\includegraphics[width=1.0\textwidth]{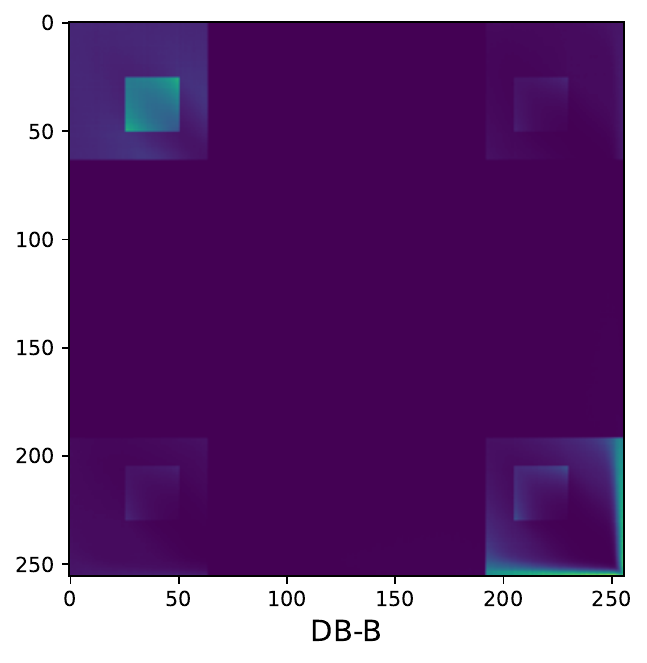}
\end{minipage}\hspace{-1.5mm}
 \begin{minipage}[t]{0.20\linewidth}
  \centering
\includegraphics[width=1.0\textwidth]{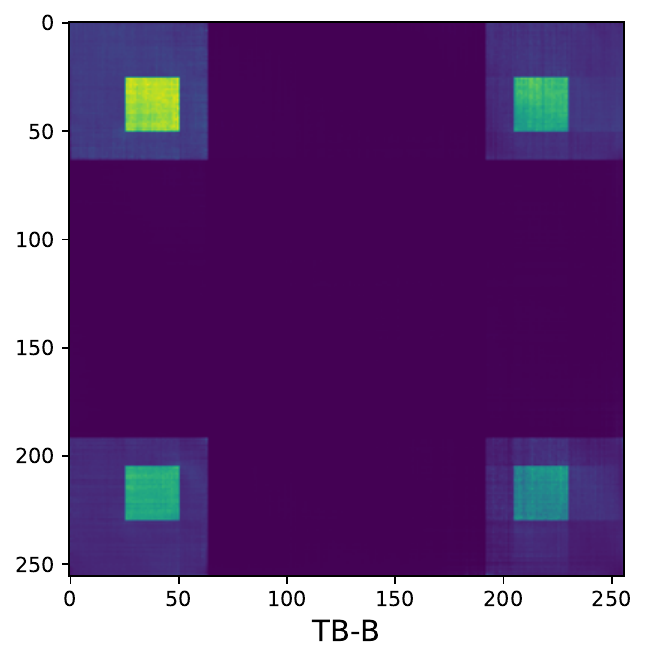}
\end{minipage}\hspace{-1.5mm}
 \begin{minipage}[t]{0.20\linewidth}
  \centering
\includegraphics[width=1.0\textwidth]{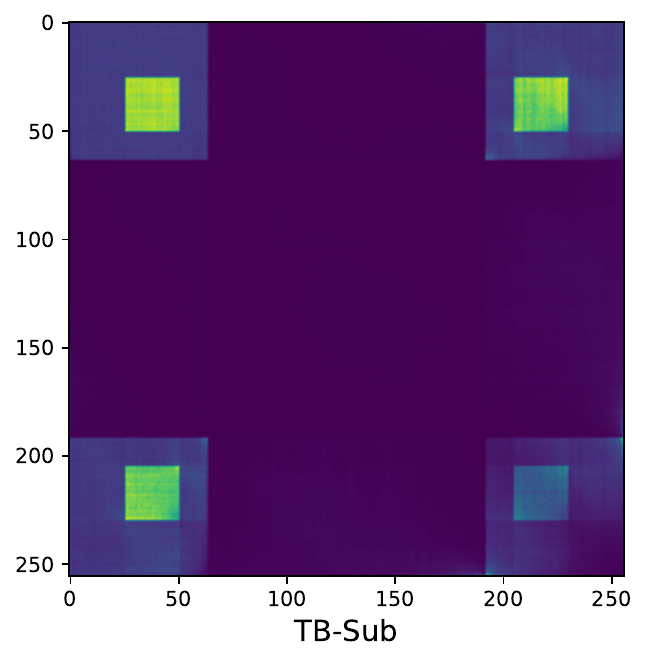}
\end{minipage}\hspace{-1.5mm}
\begin{minipage}[t]{0.20\linewidth}
  \centering
\includegraphics[width=1.0\textwidth]{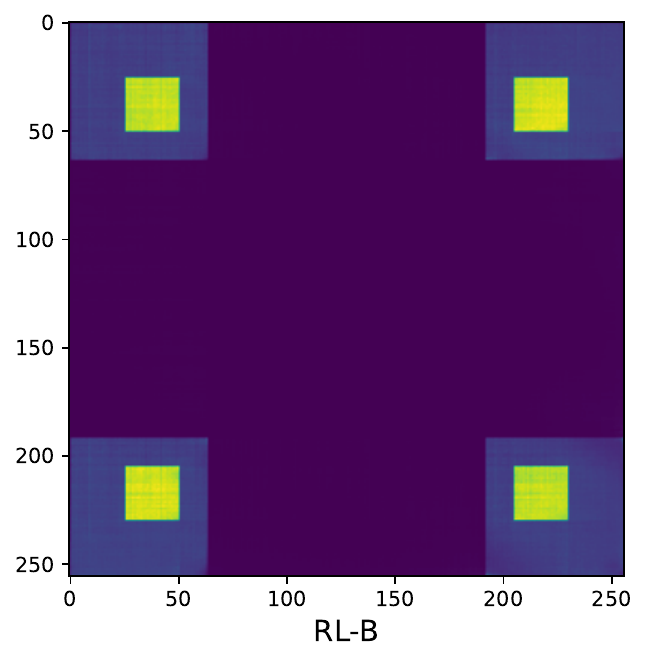}
 \end{minipage}\hspace{-1.5mm}
 \begin{minipage}[t]{0.20\linewidth}
  \centering
\includegraphics[width=1.0\textwidth]{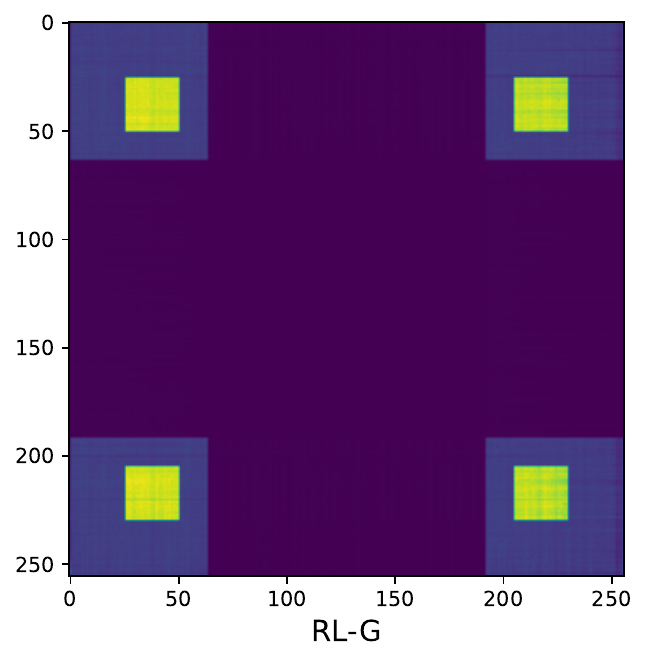}
 \end{minipage}\hspace{-1.5mm}
 \caption{Graphical illustrations of $P_F^\top(x)$ averaged across 5 runs of corresponding training strategies for a $256\times 256$ grid.}\label{256-plots}
\end{figure}

\begin{figure}[h!]
\begin{minipage}[t]{0.20\linewidth}
  \centering
\includegraphics[width=1.0\textwidth]{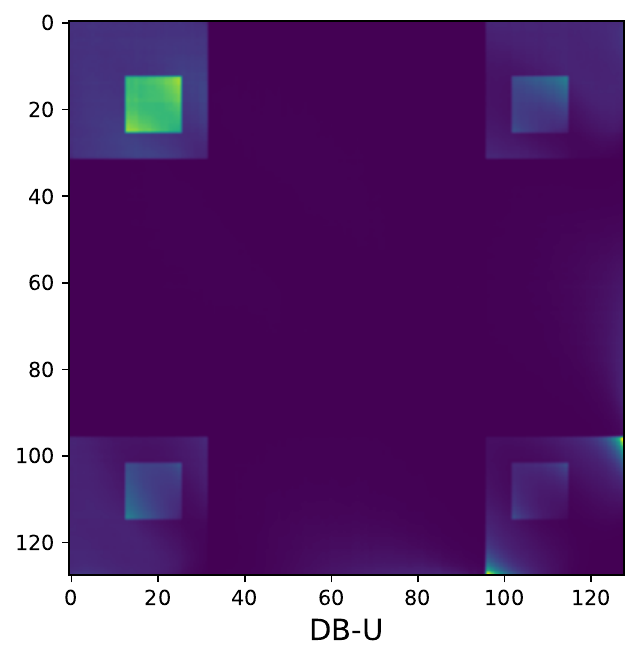}
\end{minipage}\hspace{-1.5mm}
\begin{minipage}[t]{0.20\linewidth}
  \centering
\includegraphics[width=1.0\textwidth]{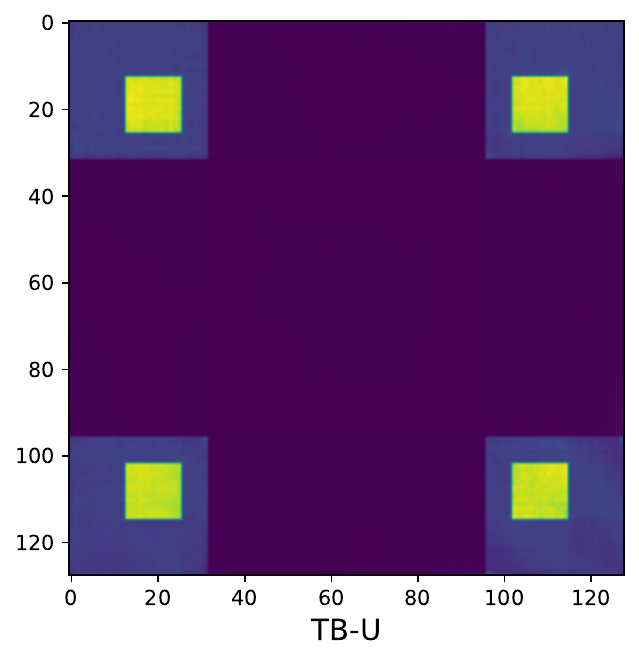}
\end{minipage}\hspace{-1.5mm}
\begin{minipage}[t]{0.20\linewidth}
  \centering
\includegraphics[width=1.0\textwidth]{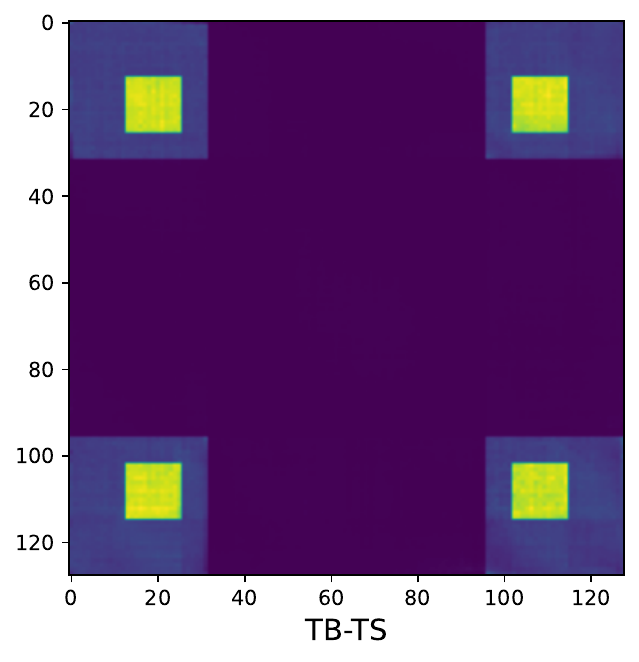}
\end{minipage}\hspace{-1.5mm}
\begin{minipage}[t]{0.20\linewidth}
  \centering
\includegraphics[width=1.0\textwidth]{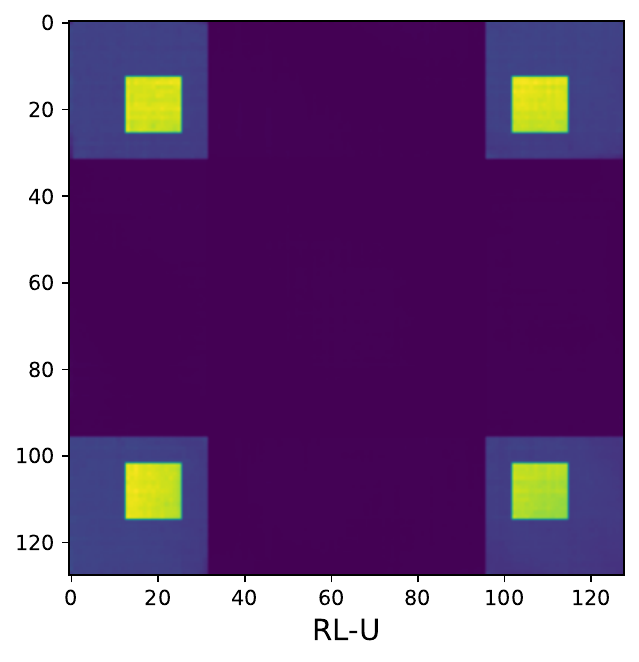}
\end{minipage}\hspace{-1.5mm}
\begin{minipage}[t]{0.20\linewidth}
  \centering
\includegraphics[width=1.0\textwidth]{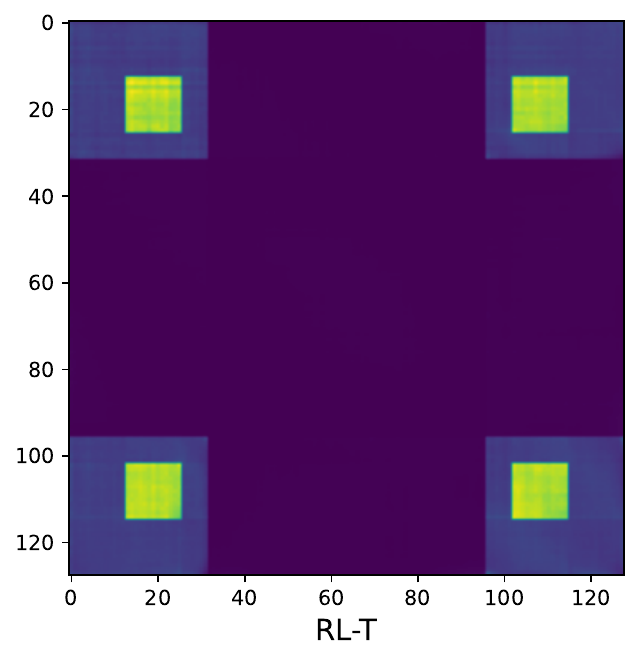}
 \end{minipage}\hspace{-1.5mm}\\
 \begin{minipage}[t]{0.20\linewidth}
  \centering
\includegraphics[width=1.0\textwidth]{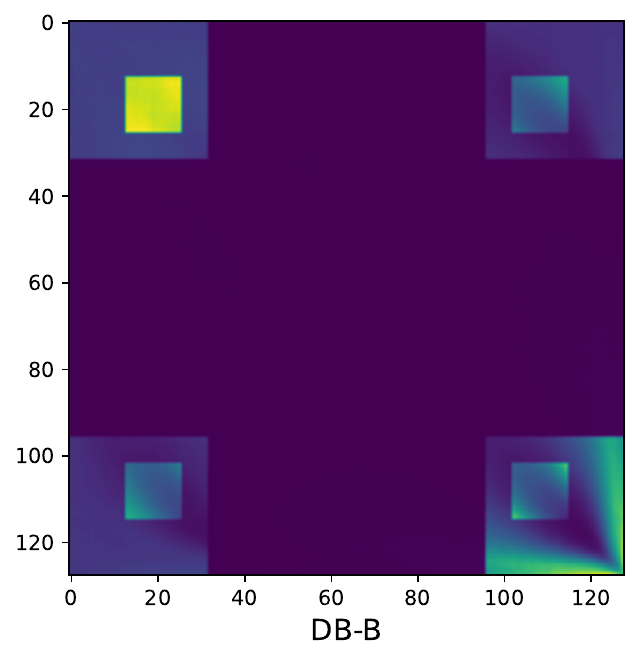}
 \end{minipage}\hspace{-1.5mm}
 \begin{minipage}[t]{0.20\linewidth}
  \centering
\includegraphics[width=1.0\textwidth]{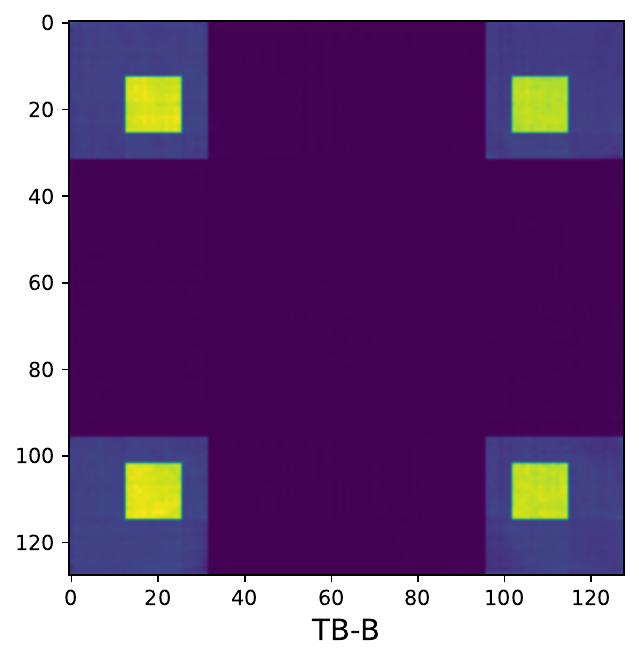}
\end{minipage}\hspace{-1.5mm}
 \begin{minipage}[t]{0.20\linewidth}
  \centering
\includegraphics[width=1.0\textwidth]{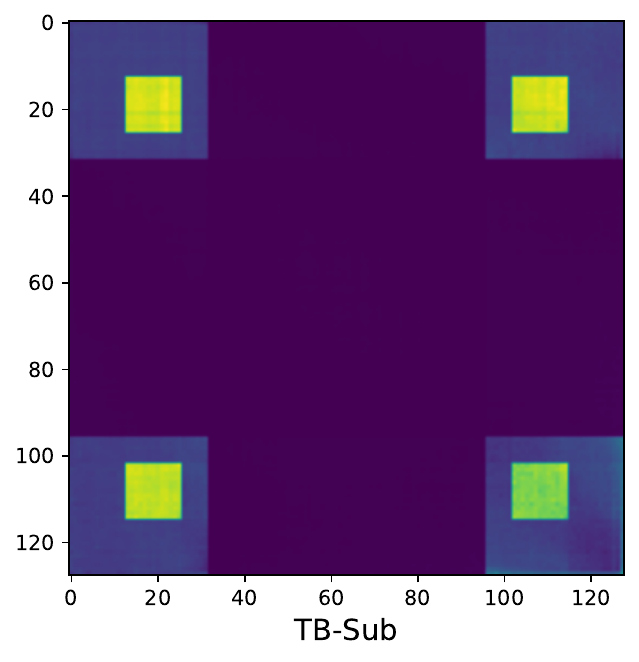}
\end{minipage}\hspace{-1.5mm}
\begin{minipage}[t]{0.20\linewidth}
  \centering
\includegraphics[width=1.0\textwidth]{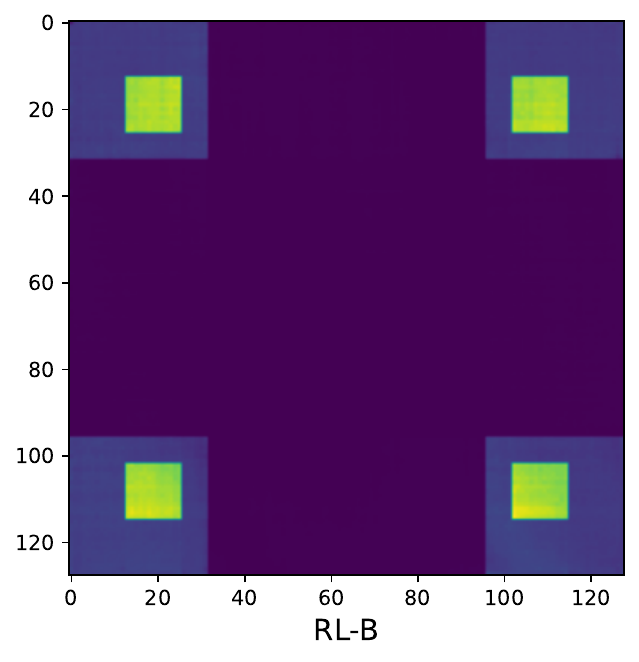}
 \end{minipage}\hspace{-1.5mm}
 \begin{minipage}[t]{0.20\linewidth}
  \centering
\includegraphics[width=1.0\textwidth]{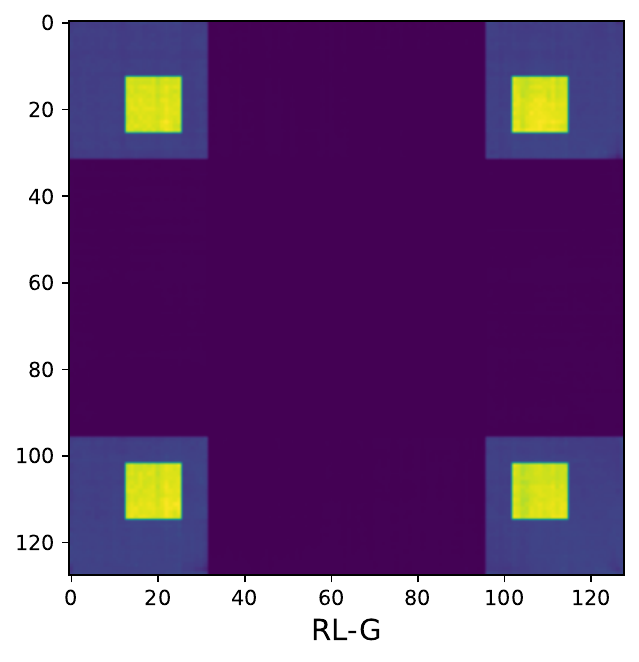}
 \end{minipage}\hspace{-1.5mm}
 \caption{Graphical illustrations of $P_F^\top(x)$ averaged across 5 runs of corresponding training strategies for a $128\times 128$ grid.}\label{128-plots}
\end{figure}

\begin{figure}[h!]
\begin{minipage}[t]{0.20\linewidth}
  \centering
\includegraphics[width=1.0\textwidth]{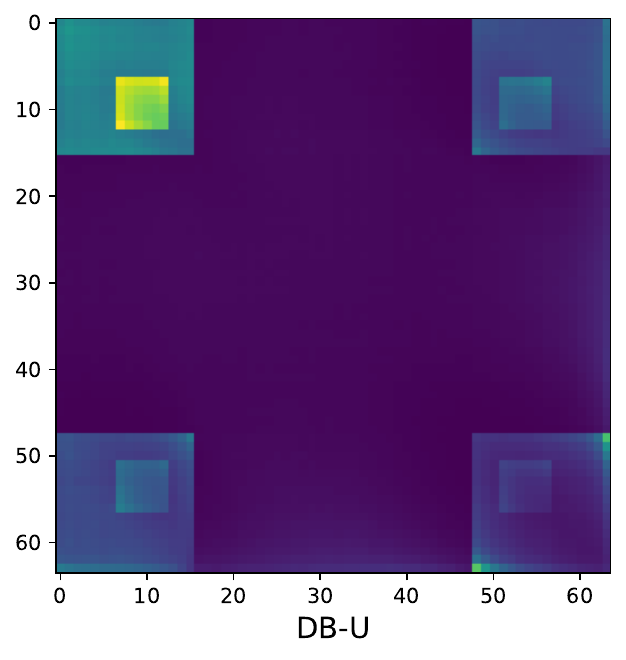}
\end{minipage}\hspace{-1.5mm}
\begin{minipage}[t]{0.20\linewidth}
  \centering
\includegraphics[width=1.0\textwidth]{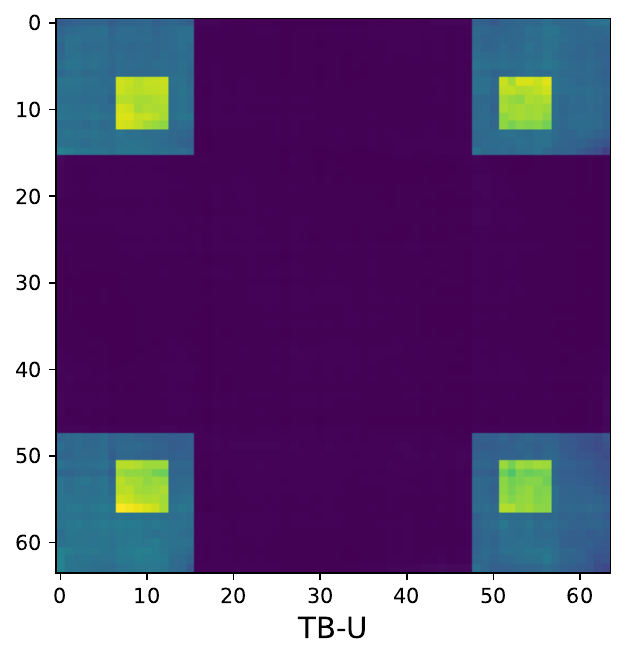}
\end{minipage}\hspace{-1.5mm}
\begin{minipage}[t]{0.20\linewidth}
  \centering
\includegraphics[width=1.0\textwidth]{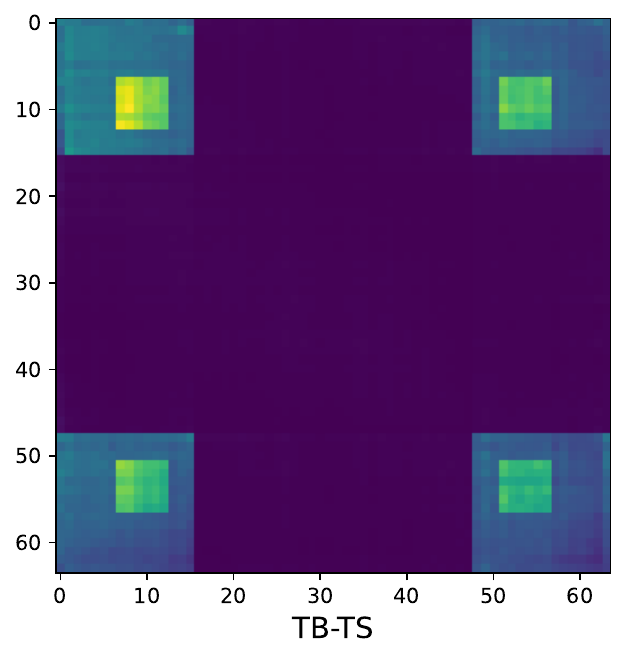}
\end{minipage}\hspace{-1.5mm}
\begin{minipage}[t]{0.20\linewidth}
  \centering
\includegraphics[width=1.0\textwidth]{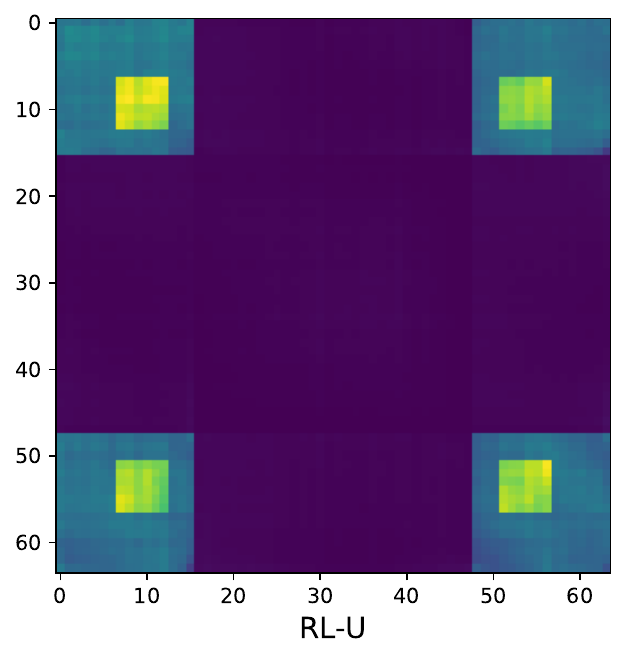}
 \end{minipage}\hspace{-1.5mm}
 \begin{minipage}[t]{0.20\linewidth}
  \centering
\includegraphics[width=1.0\textwidth]{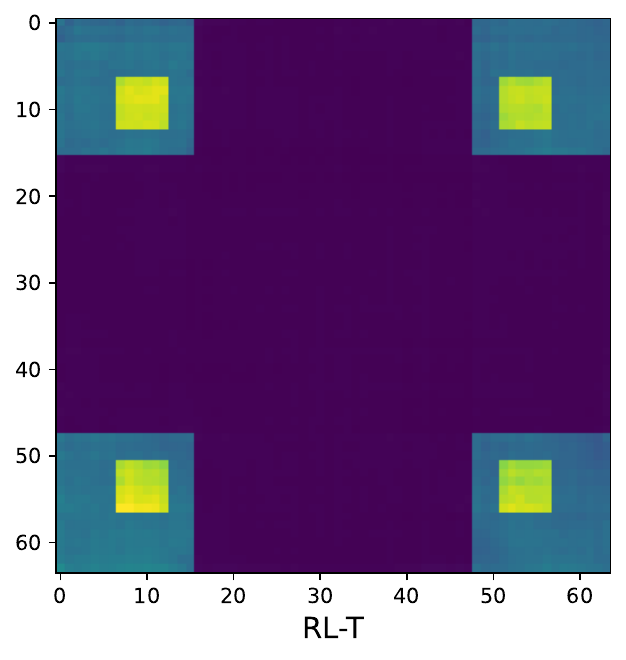}
 \end{minipage}\hspace{-1.5mm}\\
 \begin{minipage}[t]{0.20\linewidth}
  \centering
\includegraphics[width=1.0\textwidth]{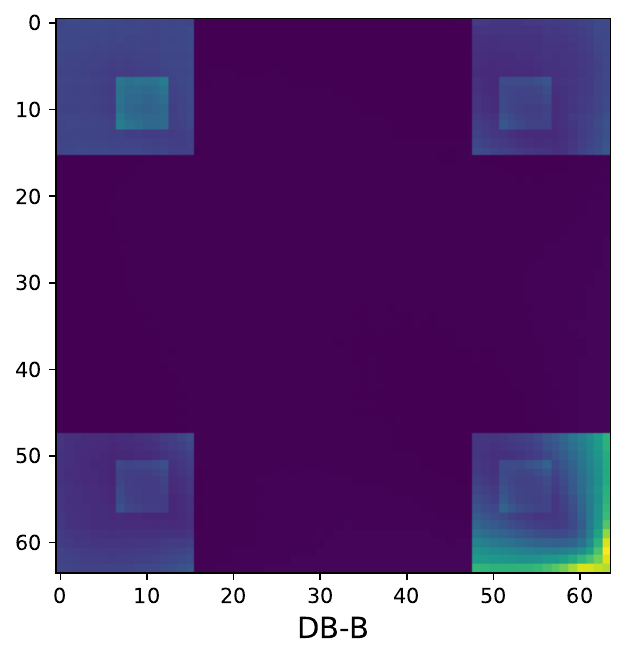}
\end{minipage}\hspace{-1.5mm}
 \begin{minipage}[t]{0.20\linewidth}
  \centering
\includegraphics[width=1.0\textwidth]{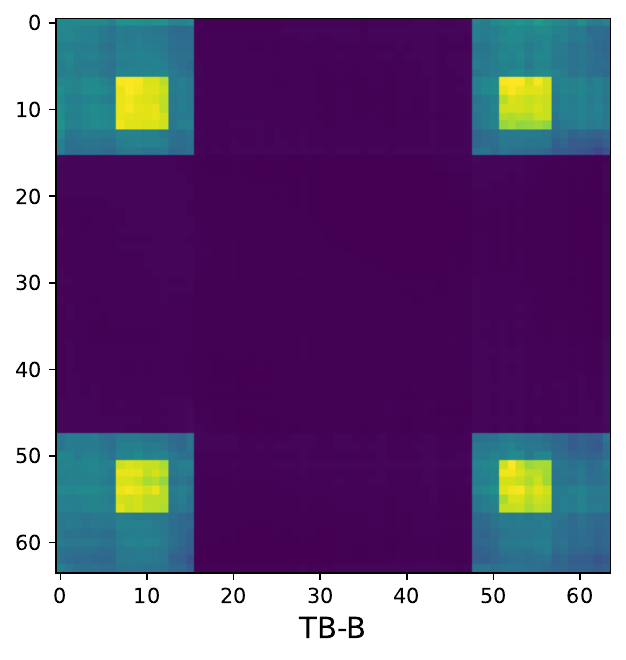}
\end{minipage}\hspace{-1.5mm}
\begin{minipage}[t]{0.20\linewidth}
  \centering
\includegraphics[width=1.0\textwidth]{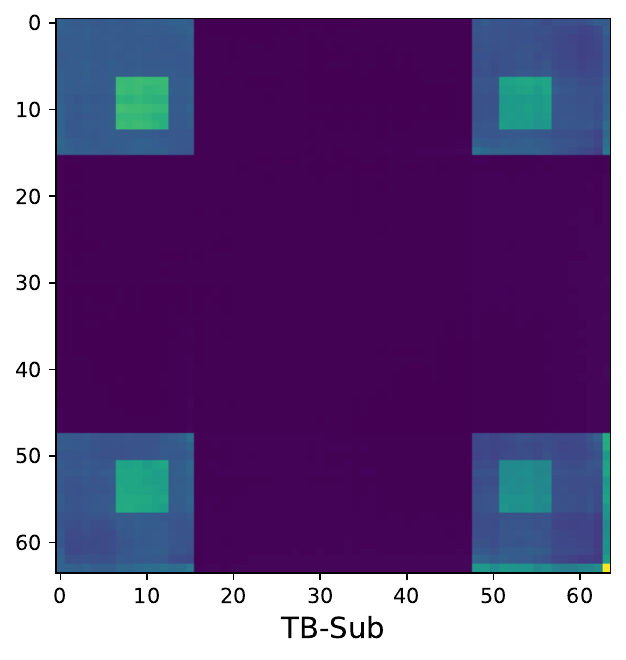}
\end{minipage}\hspace{-1.5mm}
\begin{minipage}[t]{0.20\linewidth}
  \centering
\includegraphics[width=1.0\textwidth]{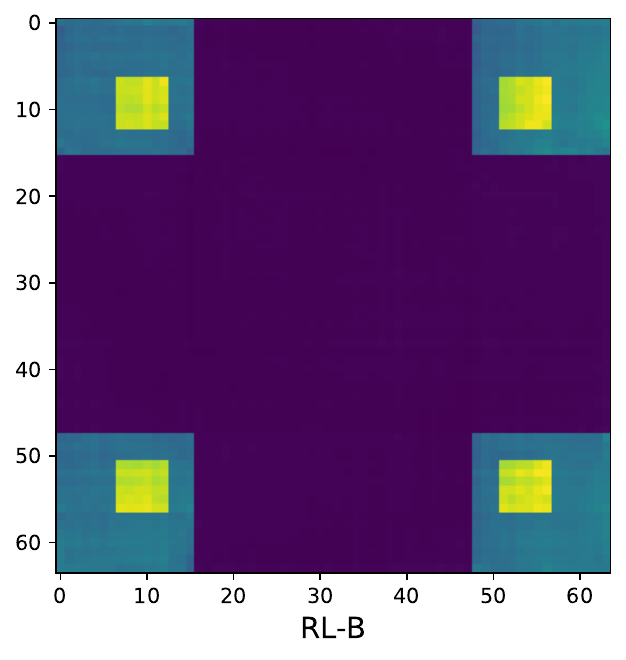}
 \end{minipage}\hspace{-1.5mm}
 \begin{minipage}[t]{0.20\linewidth}
  \centering
\includegraphics[width=1.0\textwidth]{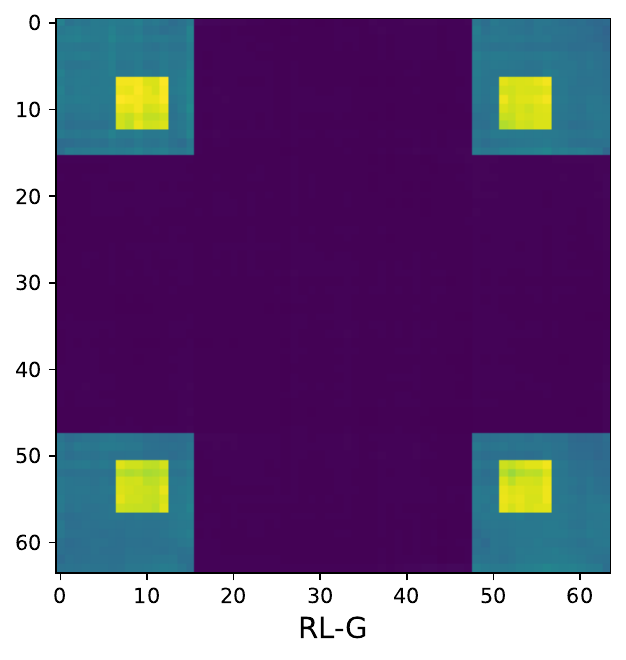}
 \end{minipage}\hspace{-1.5mm}
 \caption{Graphical illustrations of $P_F^\top(x)$ averaged across 5 runs of corresponding training strategies for a $64\times 64\times 64$ grid. For visualization easiness, only the marginals of two dimensions are plotted. }\label{64-plots}
\end{figure}

\begin{figure}[h!]
\begin{minipage}[t]{0.2\linewidth}
  \centering
\includegraphics[width=1.0\textwidth]{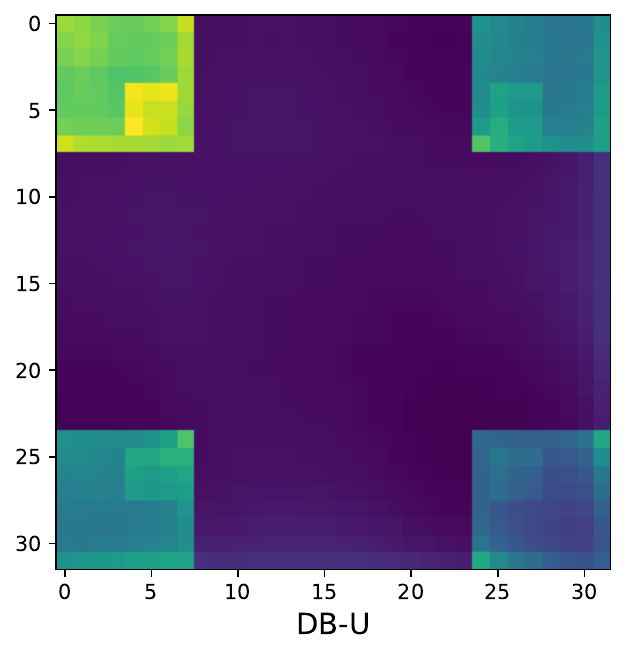}
\end{minipage}\hspace{-1.5mm}
\begin{minipage}[t]{0.2\linewidth}
  \centering
\includegraphics[width=1.0\textwidth]{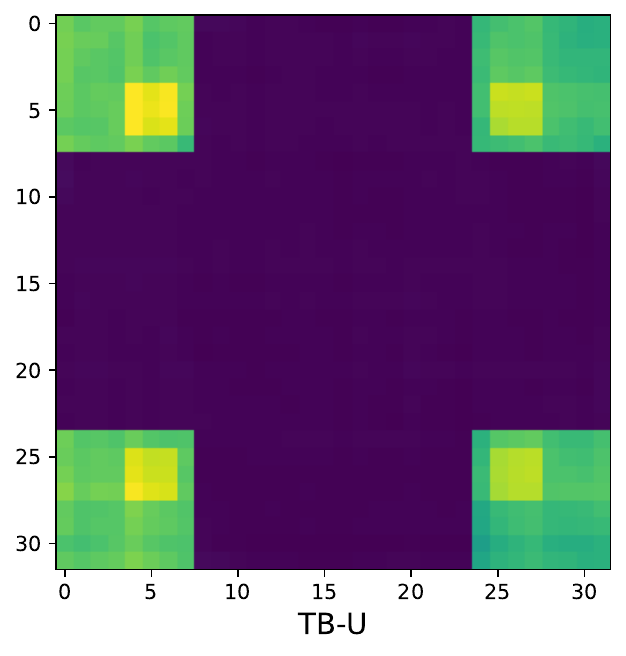}
\end{minipage}\hspace{-1.5mm}
\begin{minipage}[t]{0.2\linewidth}
  \centering
\includegraphics[width=1.0\textwidth]{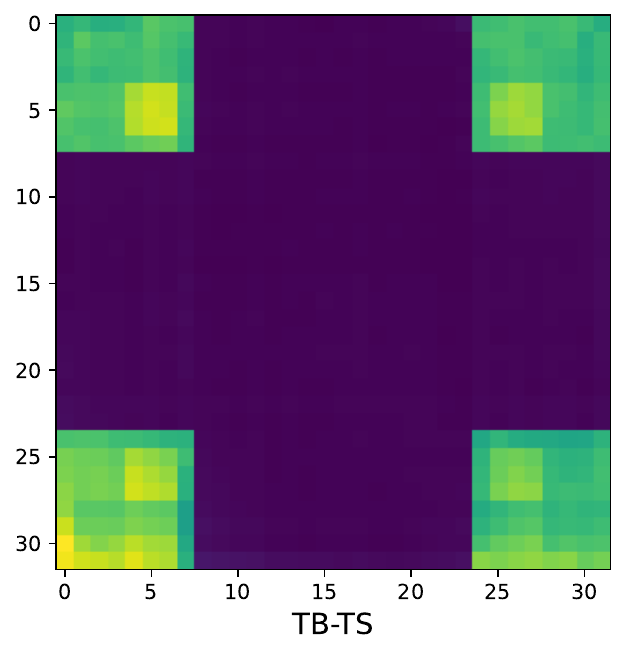}
\end{minipage}\hspace{-1.5mm}
\begin{minipage}[t]{0.2\linewidth}
  \centering
\includegraphics[width=1.0\textwidth]{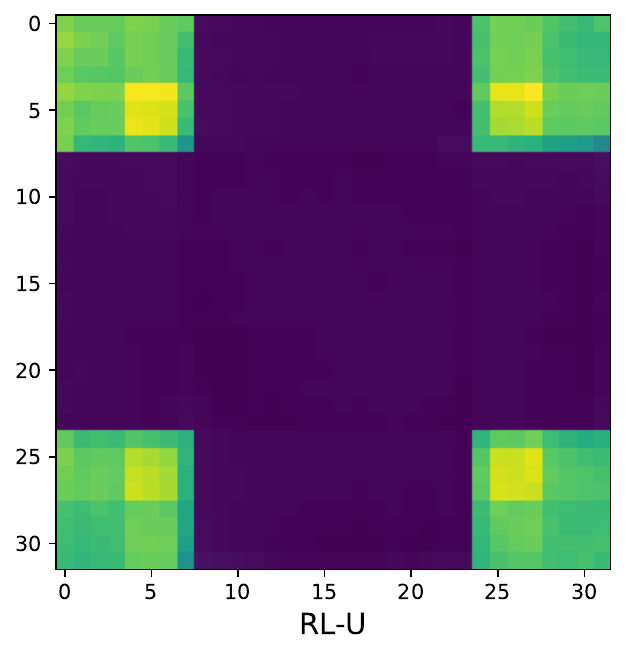}
\end{minipage}\hspace{-1.5mm}
\begin{minipage}[t]{0.2\linewidth}
  \centering
\includegraphics[width=1.0\textwidth]{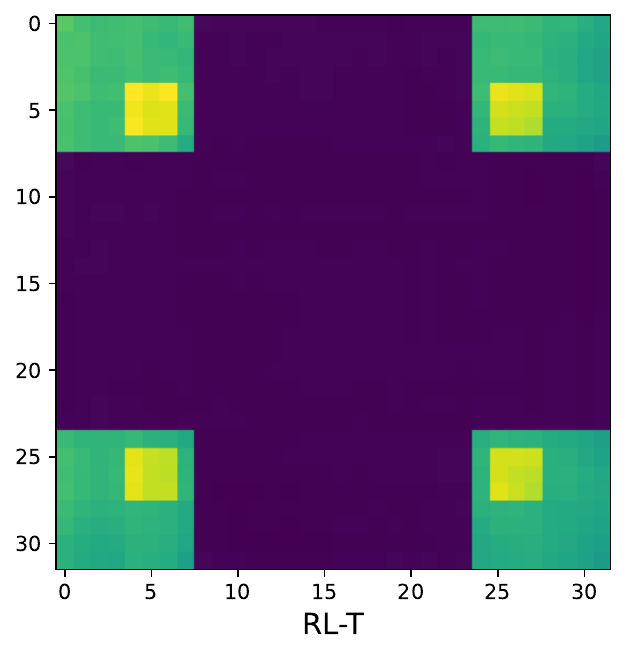}
 \end{minipage}\hspace{-1.5mm}\\
 \begin{minipage}[t]{0.2\linewidth}
  \centering
\includegraphics[width=1.0\textwidth]{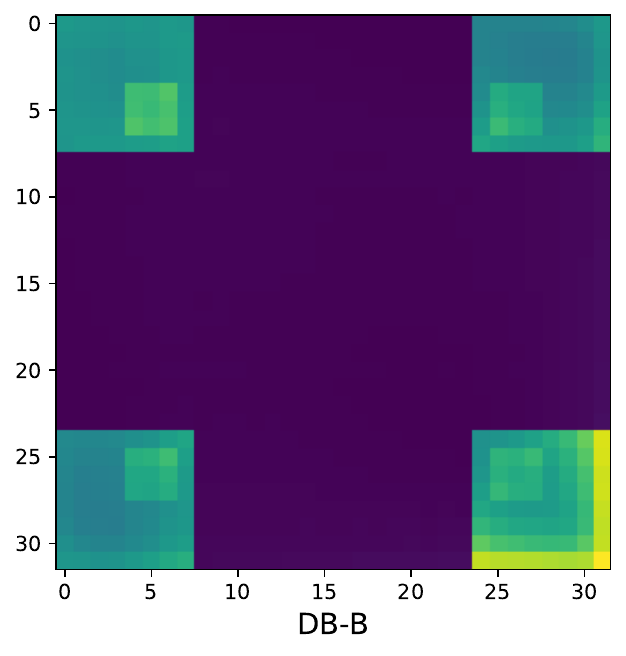}
 \end{minipage}\hspace{-1.5mm}
 \begin{minipage}[t]{0.2\linewidth}
  \centering
\includegraphics[width=1.0\textwidth]{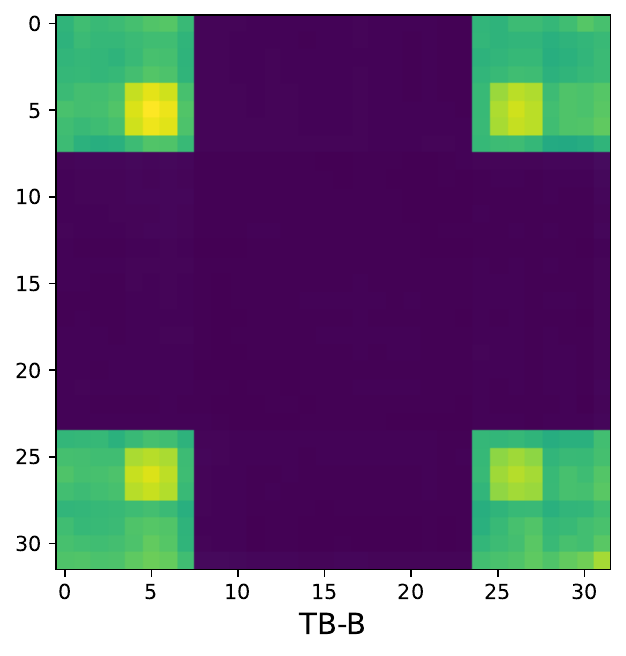}
\end{minipage}\hspace{-1.5mm}
 \begin{minipage}[t]{0.2\linewidth}
  \centering
\includegraphics[width=1.0\textwidth]{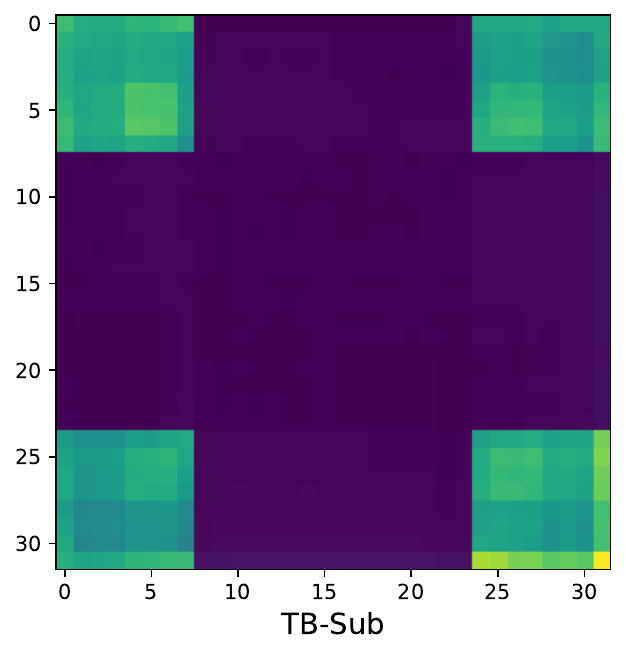}
\end{minipage}\hspace{-1.5mm}
\begin{minipage}[t]{0.2\linewidth}
  \centering
\includegraphics[width=1.0\textwidth]{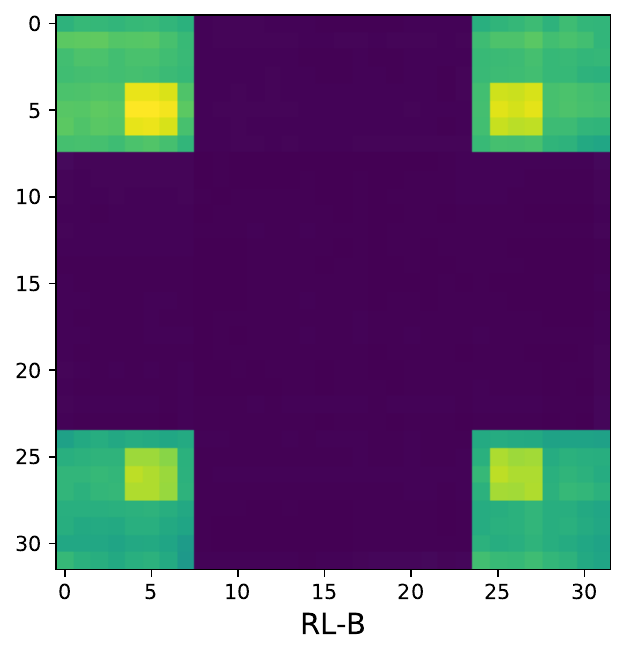}
 \end{minipage}\hspace{-1.5mm}
 \begin{minipage}[t]{0.2\linewidth}
  \centering
\includegraphics[width=1.0\textwidth]{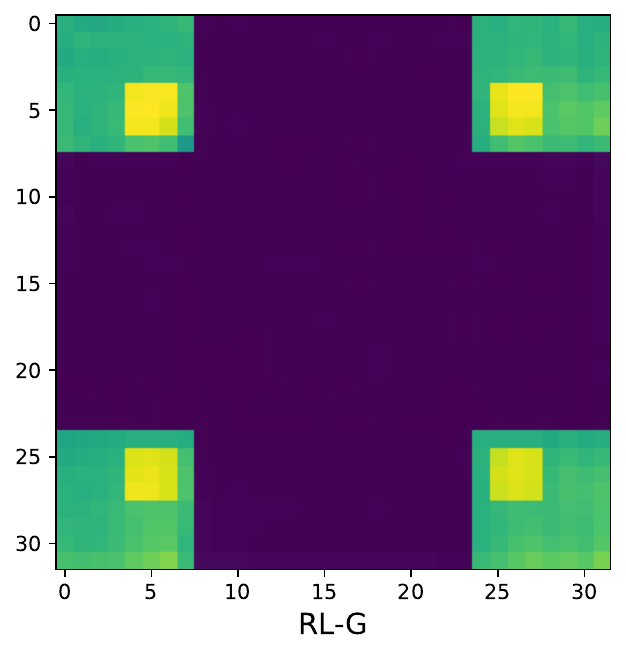}
 \end{minipage}\hspace{-1.5mm}
 \caption{Graphical illustrations of $P_F^\top(x)$ averaged across 5 runs of corresponding training strategies for a $32\times 32\times 32\times32$ grid. For visualization easiness, only the marginals of two dimensions are plotted. }\label{32-plots}
\end{figure}

\begin{figure}[h!]
\begin{minipage}[t]{0.2\linewidth}
  \centering
\includegraphics[width=1.0\textwidth]{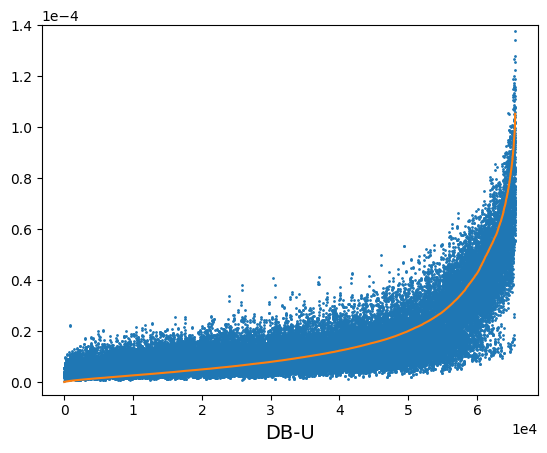}
\end{minipage}\hspace{-1.2mm}
\begin{minipage}[t]{0.2\linewidth}
  \centering
\includegraphics[width=1.0\textwidth]{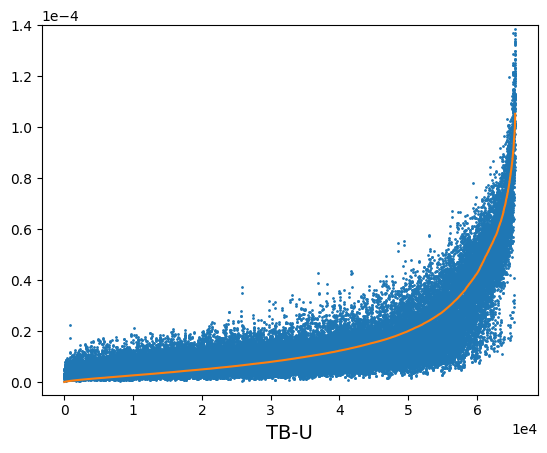}
\end{minipage}\hspace{-1.2mm}
\begin{minipage}[t]{0.2\linewidth}
  \centering
\includegraphics[width=1.0\textwidth]{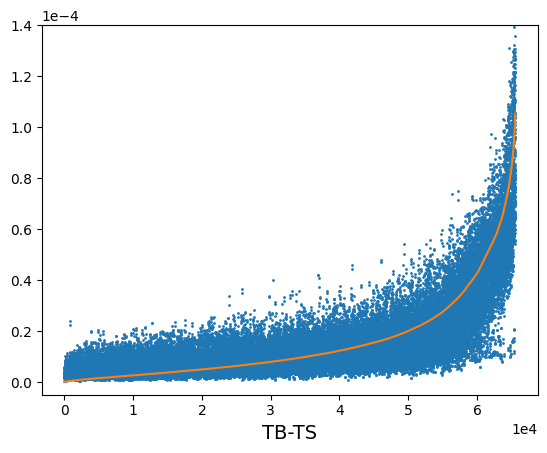}
\end{minipage}\hspace{-1.2mm}
\begin{minipage}[t]{0.2\linewidth}
  \centering
\includegraphics[width=1.0\textwidth]{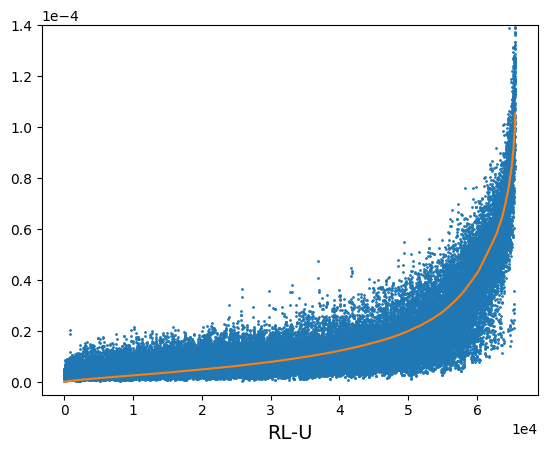}
\end{minipage}\hspace{-1.2mm}
\begin{minipage}[t]{0.2\linewidth}
  \centering
\includegraphics[width=1.0\textwidth]{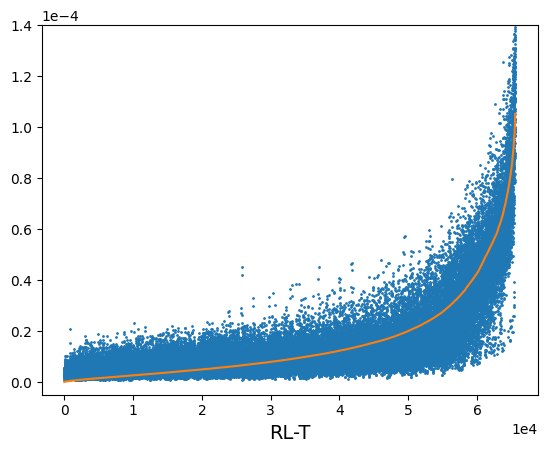}
 \end{minipage}\hspace{-1.2mm}\\
 \begin{minipage}[t]{0.2\linewidth}
  \centering
\includegraphics[width=1.0\textwidth]{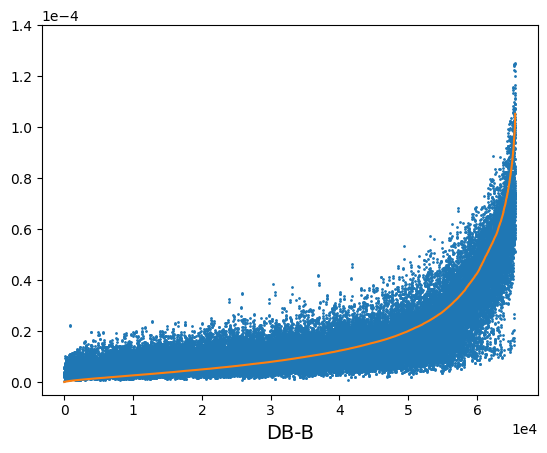}
 \end{minipage}\hspace{-1.2mm}
 \begin{minipage}[t]{0.2\linewidth}
  \centering
\includegraphics[width=1.0\textwidth]{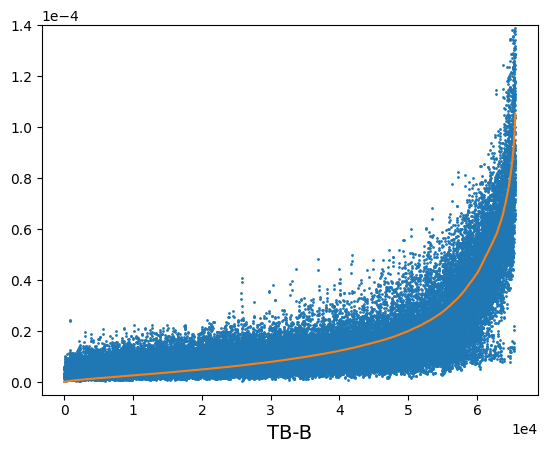}
\end{minipage}\hspace{-1.2mm}
 \begin{minipage}[t]{0.2\linewidth}
  \centering
\includegraphics[width=1.0\textwidth]{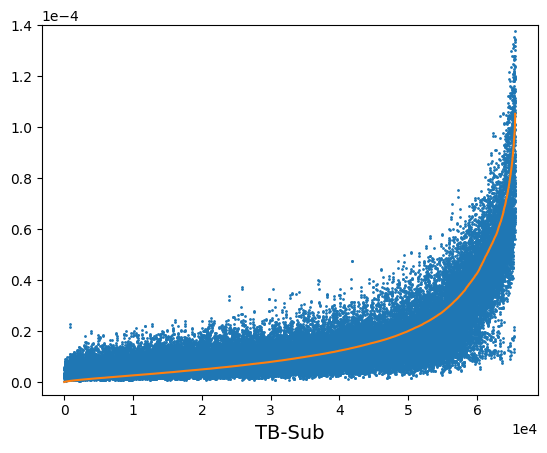}
\end{minipage}\hspace{-1.2mm}
\begin{minipage}[t]{0.2\linewidth}
  \centering
\includegraphics[width=1.0\textwidth]{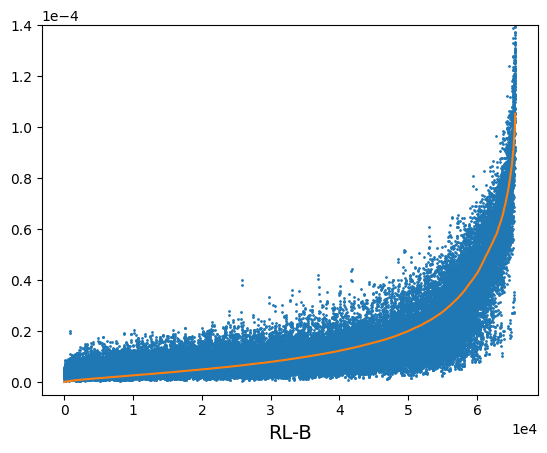}
 \end{minipage}\hspace{-1.2mm}
 \begin{minipage}[t]{0.2\linewidth}
  \centering
\includegraphics[width=1.0\textwidth]{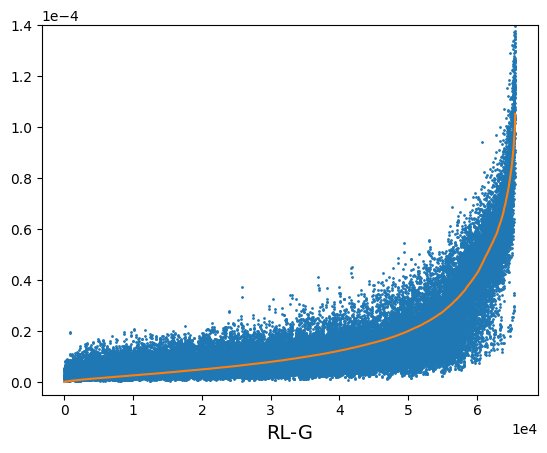}
 \end{minipage}\hspace{-1.2mm}
 \vspace{-2mm}
 \caption{In each plot, the orange line represents the $P^\ast(x)$ of all sequences in the SIX6 dataset, with its values plotted in ascending order. The blue dots represent corresponding values of $P_F^\top(x)$, averaged over five runs of the corresponding training strategy.}\label{TF8-plots}
\end{figure}

\begin{figure}[h!]
\begin{minipage}[t]{0.2\linewidth}
  \centering
\includegraphics[width=1.0\textwidth]{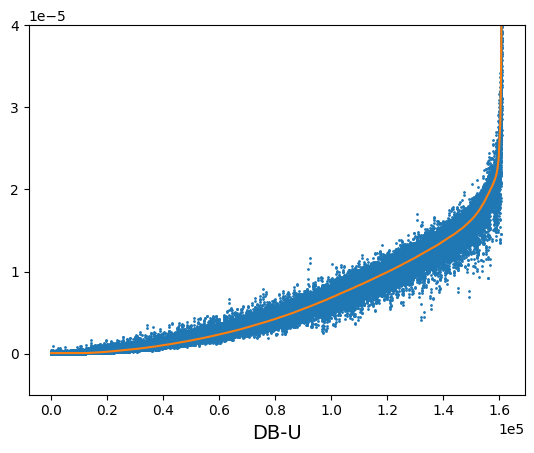}
\end{minipage}\hspace{-1.2mm}
\begin{minipage}[t]{0.2\linewidth}
  \centering
\includegraphics[width=1.0\textwidth]{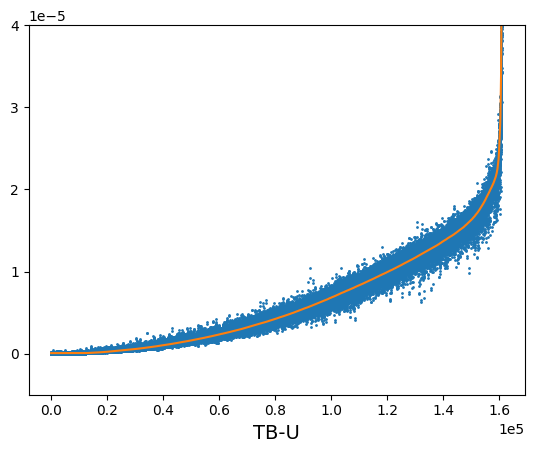}
\end{minipage}\hspace{-1.2mm}
\begin{minipage}[t]{0.2\linewidth}
  \centering
\includegraphics[width=1.0\textwidth]{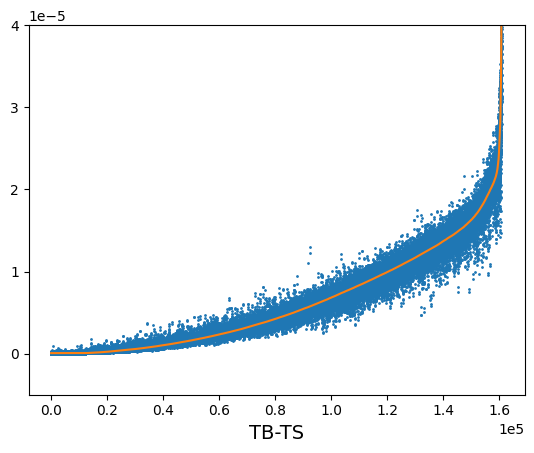}
\end{minipage}\hspace{-1.2mm}
\begin{minipage}[t]{0.2\linewidth}
  \centering
\includegraphics[width=1.0\textwidth]{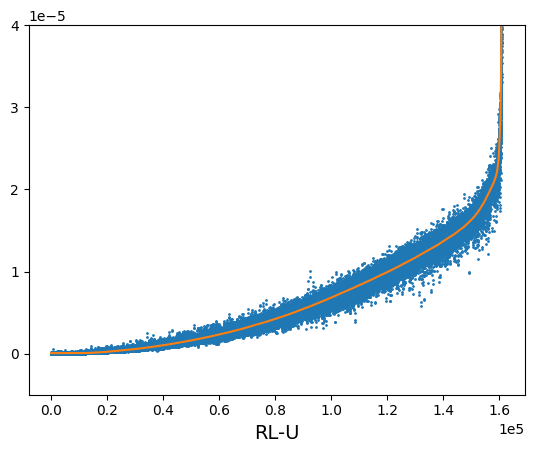}
\end{minipage}\hspace{-1.2mm}
\begin{minipage}[t]{0.2\linewidth}
  \centering
\includegraphics[width=1.0\textwidth]{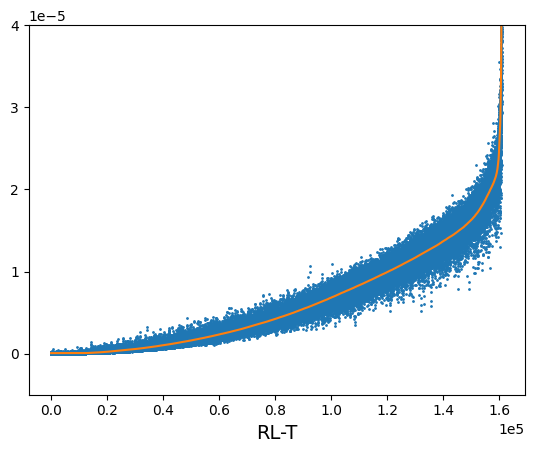}
 \end{minipage}\hspace{-1.2mm}\\
 \begin{minipage}[t]{0.2\linewidth}
  \centering
\includegraphics[width=1.0\textwidth]{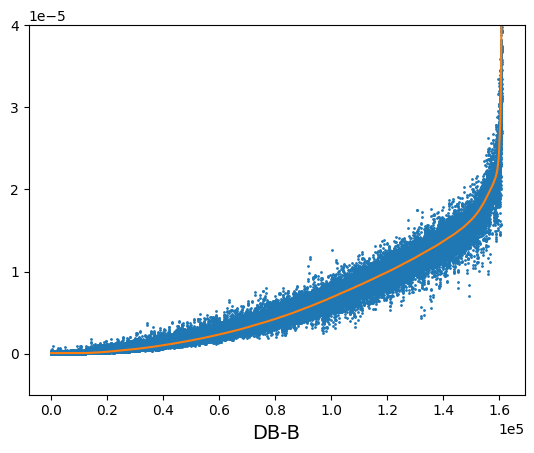}
 \end{minipage}\hspace{-1.2mm}
 \begin{minipage}[t]{0.2\linewidth}
  \centering
\includegraphics[width=1.0\textwidth]{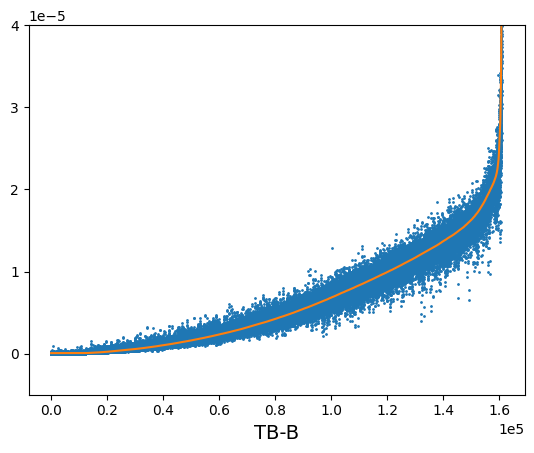}
\end{minipage}\hspace{-1.2mm}
 \begin{minipage}[t]{0.2\linewidth}
  \centering
\includegraphics[width=1.0\textwidth]{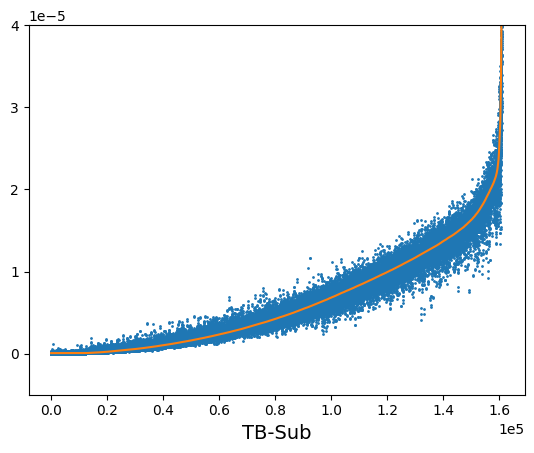}
\end{minipage}\hspace{-1.2mm}
\begin{minipage}[t]{0.2\linewidth}
  \centering
\includegraphics[width=1.0\textwidth]{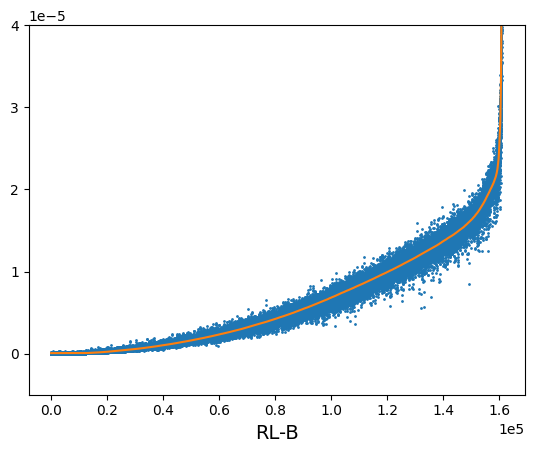}
 \end{minipage}\hspace{-1.2mm}
 \begin{minipage}[t]{0.2\linewidth}
  \centering
\includegraphics[width=1.0\textwidth]{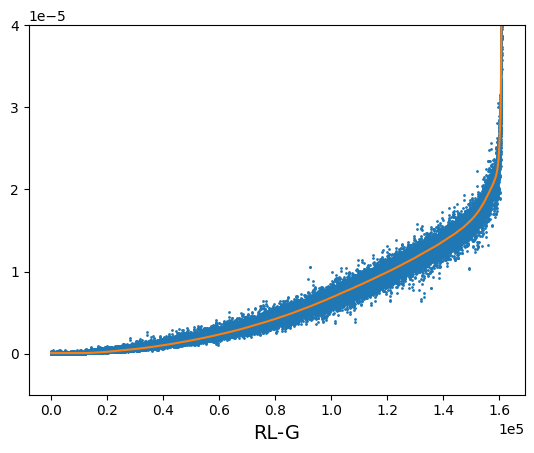}
 \end{minipage}\hspace{-1.2mm}
 \vspace{-2mm}
 \caption{In each plot, the orange line represents the $P^\ast(x)$ of all sequences in the QM9 dataset, with its values plotted in ascending order. The blue dots represent corresponding values of $P_F^\top(x)$, averaged over five runs of the corresponding training strategy.}\label{QM9-plots}
\end{figure}

\begin{figure}[h!]
\begin{minipage}[t]{0.25\linewidth}
  \centering
\includegraphics[width=1.0\textwidth]{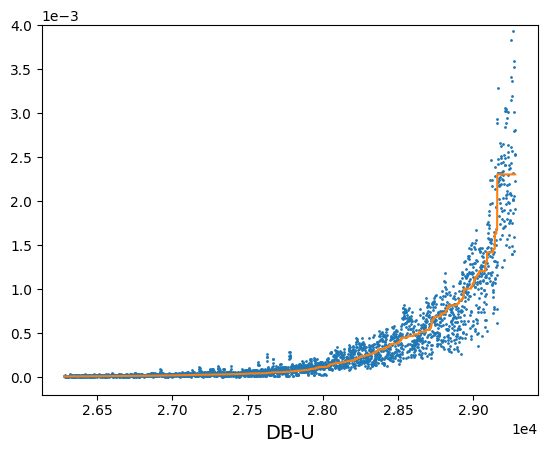}
\end{minipage}\hspace{-1.2mm}
\begin{minipage}[t]{0.25\linewidth}
  \centering
\includegraphics[width=1.0\textwidth]{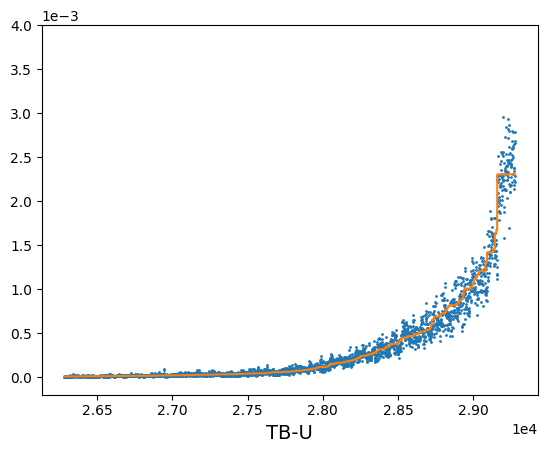}
\end{minipage}\hspace{-1.2mm}
\begin{minipage}[t]{0.25\linewidth}
  \centering
\includegraphics[width=1.0\textwidth]{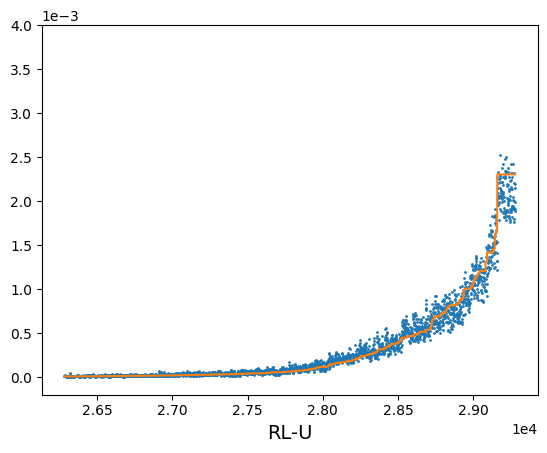}
\end{minipage}\hspace{-1.2mm}
\begin{minipage}[t]{0.25\linewidth}
  \centering
\includegraphics[width=1.0\textwidth]{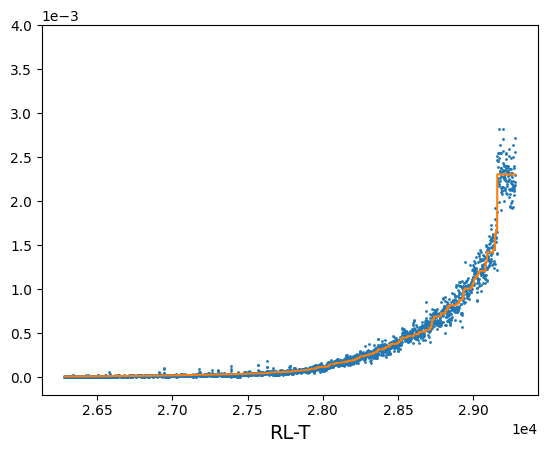}
 \end{minipage}\hspace{-1.4mm}\\
 \begin{minipage}[t]{0.25\linewidth}
  \centering
\includegraphics[width=1.0\textwidth]{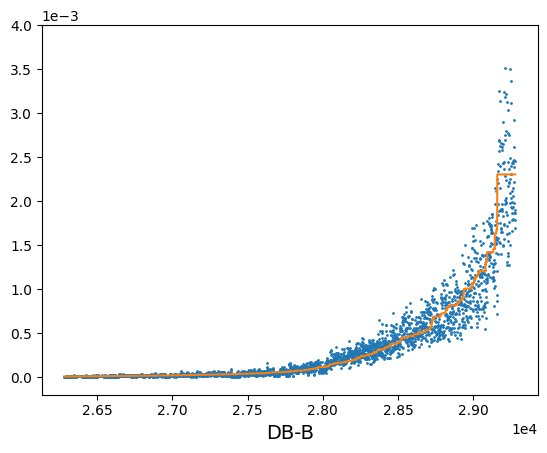}
 \end{minipage}\hspace{-1.2mm}
 \begin{minipage}[t]{0.25\linewidth}
  \centering
\includegraphics[width=1.0\textwidth]{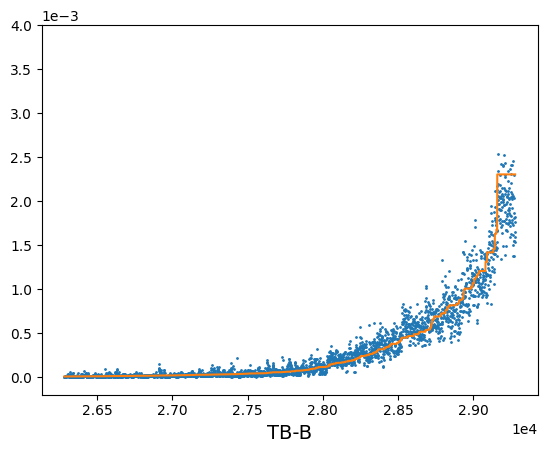}
\end{minipage}\hspace{-1.2mm}
\begin{minipage}[t]{0.25\linewidth}
  \centering
\includegraphics[width=1.0\textwidth]{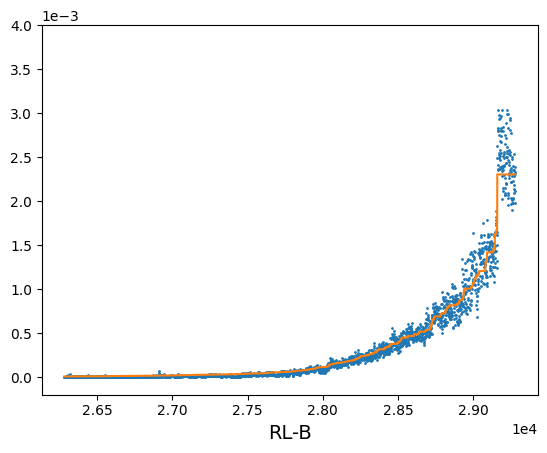}
 \end{minipage}\hspace{-1.2mm}
 \begin{minipage}[t]{0.25\linewidth}
  \centering
\includegraphics[width=1.0\textwidth]{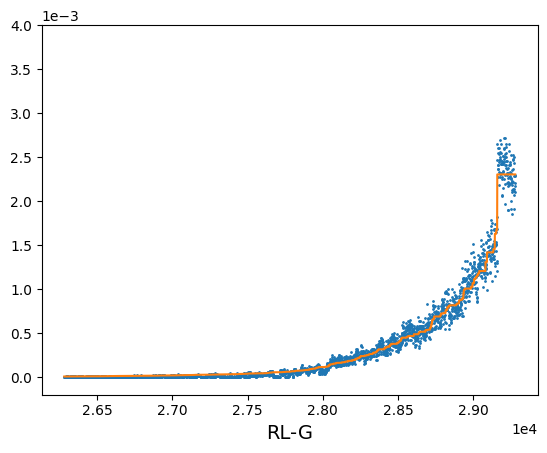}
 \end{minipage}\hspace{-1.2mm}
 \vspace{-2mm}
 \caption{In each plot, the orange line represents the $P^\ast(x)$ of all BN structures, with its values plotted in ascending order. The blue dots represent corresponding values of $P_F^\top(x)$, averaged over five runs of the corresponding training strategy. Only the ground-truth and corresponding learned values for the top 3000 structures are plotted as the remaining values are nearly zero.}\label{BN-plots}
\end{figure}

\end{document}